\documentclass[10pt,a4paper,reqno]{amsart}
\usepackage{amsmath,amsfonts,mathrsfs,amssymb,amsthm,mathtools}
\usepackage[shortlabels]{enumitem}

\usepackage[foot]{amsaddr}
\usepackage[toc, page]{appendix}

\usepackage[margin=1in]{geometry}

\usepackage{comment}

\usepackage[dvipsnames]{xcolor}
\definecolor{MyColor}{HTML}{0047AB}
\usepackage{fancyhdr}
\usepackage{hyperref}
\hypersetup{
colorlinks=true, 
linktoc=all,     
linkcolor=blue,  
citecolor=blue,
}

\usepackage{lipsum,bm}
\usepackage{amsfonts,wasysym}
\usepackage{graphicx}
\usepackage{epstopdf}
\usepackage{algorithmic}
\usepackage{amsopn}
\usepackage{amssymb}
\usepackage{mathtools}
\usepackage{verbatim}
\usepackage{enumitem}
\usepackage{hyperref}

\usepackage{mathrsfs}

\theoremstyle{plain}

\newtheorem{Theorem}{Theorem}[section]
\newtheorem{Lemma}{Lemma}[section]
\newtheorem{Proposition}{Proposition}[section]
 \newtheorem{Corollary}{Corollary}[section]
\newtheorem{Assumption}{H.\!\!}
\newtheorem{Example}{Example}[section]

\numberwithin{equation}{section}


\theoremstyle{definition}

\newtheorem{Definition}{Definition}[section]


\theoremstyle{remark}
\newtheorem{Remark}{Remark}[section]

\allowdisplaybreaks


\DeclareMathOperator*{\tr}{Tr}


\def\cB{\mathcal{B}}

\def\cF{\mathcal{F}}

\def\cN{\mathcal{N}}
\def\cO{\mathcal{O}}
\def\cP{\mathcal{P}}

\def\d{{ \mathrm{d}}}

\def\sE{{\mathbb{E}}}
\def\sF{{\mathbb{F}}}

\def\sN{{\mathbb{N}}}
\def\sP{\mathbb{P}}

\def\sR{{\mathbb R}}

\def\bb{\begin{equation}} \def\ee{\end{equation}}
\def\bbn{\begin{equation*}} \def\een{\end{equation*}}

\def\beqn{\begin{eqnarray}}  \def\eqn{\end{eqnarray}}

\def\beqnx{\begin{eqnarray*}} \def\eqnx{\end{eqnarray*}}

\def\bn{\begin{enumerate}} \def\en{\end{enumerate}}

\def\bd{\begin{description}} \def\ed{\end{description}}

\mathtoolsset{showonlyrefs}


\ifpdf
\DeclareGraphicsExtensions{.eps,.pdf,.png,.jpg}
\else
\DeclareGraphicsExtensions{.eps}
\fi



\newcommand{\change}[1]{{\color{black}#1}}

\newcommand{\E}{\mathbb{E}}
\newcommand{\Ew}{\mathbb{E}^W}

\newcommand{\R}{\mathbb{R}}
\newcommand{\N}{\mathbb{N}}

\newcommand{\var}[1]{\mathrm{Var}\left(#1\right)}

\newcommand{\dd}{\mathrm{d}}

\newcommand{\PP}{\mathbb{P}}

\newcommand{\aX}{\widetilde X}
\newcommand{\X}{X^{\mathscr{G}}}
\newcommand{\act}{a}

\newcommand{\A}{{A}}

\newcommand{\inti}{\int_{t_{i-1}}^{t_i}}

\newcommand{\bpi}{\boldsymbol{\pi}}

\makeatletter
\@namedef{subjclassname@2020}{\textup{2020} Mathematics Subject Classification}
\makeatother

\interfootnotelinepenalty=10000
\begin{document}
 
\title[Discretely Sampled Stochastic Policies for Continuous-time RL]{Accuracy of Discretely Sampled Stochastic Policies in Continuous-time Reinforcement Learning}

\author{Yanwei Jia$^1$}
\email{yanweijia@cuhk.edu.hk}
\author{Du Ouyang$^2$}
\email{oyd21@mails.tsinghua.edu.cn}
\author{Yufei Zhang$^3$}
\email{yufei.zhang@imperial.ac.uk}
\address{$^1$Department of Systems Engineering and Engineering Management, The Chinese University of Hong Kong, Hong Kong}
\address{$^2$Department of Mathematical Sciences, Tsinghua University, China}
\address{$^3$Department of Mathematics, Imperial College London, United Kingdom}

\keywords{policy execution, stochastic policy, 
continuous-time reinforcement learning, 
exploratory control, piecewise constant control,  convergence rate}


\begin{abstract}
Stochastic policies (\change{also known as relaxed controls}) are widely used in continuous-time reinforcement learning algorithms. However, executing a stochastic policy and evaluating its performance in a continuous-time environment remain open challenges. This work introduces and rigorously analyzes a policy execution framework that samples actions from a stochastic policy at discrete time points and implements them as piecewise constant controls. We prove that as the sampling mesh size tends to zero, the controlled state process converges weakly to the dynamics with coefficients aggregated according to the stochastic policy.
We explicitly quantify the convergence rate based on the regularity of the coefficients and establish an optimal first-order convergence rate for sufficiently regular coefficients. 
Additionally, we prove a $1/2$-order weak convergence rate that holds uniformly over the sampling noise with high probability, and establish a  $1/2$-order pathwise convergence for each realization of the system noise in the absence of volatility control.  Building on these results, we analyze the bias and variance of various policy evaluation and policy gradient estimators based on discrete-time observations. Our results provide theoretical justification for the exploratory stochastic control framework 
in  
[H.~Wang, T.~Zariphopoulou,  and  X.Y.~Zhou,    J.~Mach.~Learn.~Res., 21 (2020), pp.~1-34]. 
\end{abstract}

\maketitle


\section{Introduction}
\label{sec:introduction}

\subsection*{Continuous-time RL}
Reinforcement learning (RL) has catalyzed significant changes across various domains by attacking complex sequential decision-making challenges in a data-driven way. The theoretical framework for analyzing RL problems and designing RL algorithms has been predominantly based on the discrete-time Markov decision processes \cite{sutton2018reinforcement}. However, many practical applications require real-time interaction with environments where time progresses continuously, such as autonomous driving, stock trading, and vehicle routing. Recent studies \cite{tallec2019making,yildiz2021continuous,jia_q-learning_2022, giegrich2024convergence} have shown that certain algorithms originally designed for discrete-time RL can become unstable when applied in high-frequency settings. This highlights the need for a new theoretical framework for designing RL algorithms that are robust to time discretization.

To address this issue, many suggest formulating the problem in a continuous-time framework from the outset and only introducing time discretization at the final implementation stage. Here, time discretization refers to the frequency at which data is collected and actions are updated, while the underlying system itself evolves continuously according to its intrinsic dynamics. Early studies on continuous-time RL have primarily focused on deterministic systems, where state variables follow ordinary differential equations; see, e.g., \cite{baird1994reinforcement, doya2000reinforcement, tallec2019making, lee2021policy, kim2021hamilton}.

\change{
\subsection*{Stochastic policy
in continuous-time RL}

Recently, \cite{wang2020a} introduced a novel exploratory control framework for continuous-time RL   that fundamentally relies on the use of \emph{stochastic policies}. 
Stochastic policies are mappings from the state space to distributions over the action space, aligning with the notion of ``relaxed control" in classical control theory. 
By sampling actions from these stochastic policies, the agent assigns positive probabilities to different actions at each state, unlike the deterministic/strict policies typically employed in standard control problems.
This approach effectively captures the concept of exploration in RL, which is essential for the agents to learn about the environment.\footnotemark 
Since then, the use of stochastic policies and the corresponding exploratory control framework have become a foundational basis for developing a wide range of RL algorithms for continuous-time problems; see, e.g., \cite{jia2022a, jia2022b, jia_q-learning_2022, guo2022entropy, 
dai2023learning, frikha2023actor, wei2023continuous,  
zhao2023policy, giegrich2024convergence,  szpruch2024, sethi2025entropy}.

\footnotetext{\change{
Exploration in RL broadly refers to deliberately acting suboptimally to gather information about the environment. Stochastic policies achieve this by randomizing actions, but exploration can also be driven  by methods such as adding exploration bonuses to encourage actions in less-certain outcomes; see \cite[Chapter 2]{sutton2018reinforcement}. 
}}
 
} 
\subsection*{Execution of stochastic policies}

However, despite its popularity in algorithm design, the use of stochastic policies in continuous-time RL presents both theoretical and practical challenges. 
\change{
In particular, the behavior of the state dynamics under  frequent sampling from  a stochastic policy has not been rigorously analyzed. 

In \cite{wang2020a}, the authors assume that agents \emph{continuously} sample from a stochastic policy and \emph{heuristically} argue that the resulting dynamics follow a new diffusion process, with coefficients aggregated according to the sampling distribution of the policy. This aggregated dynamics offers a mathematically convenient framework for designing continuous-time RL algorithms. However, it requires generating uncountably many independent random variables, which is computationally infeasible and raises measure-theoretical concerns, rendering the controlled dynamics ill-defined. As a consequence, such an aggregated dynamics are not directly implementable. (see footnote \ref{footnote:measurability} for  details).

 In this paper, we consider a natural and practical approach to implement  stochastic policies. Rather than sampling actions continuously, we sample them at discrete time points and apply them to the continuous-time state system. As a result, the state dynamics are governed by a piecewise constant control process, whose evolution differs from the aggregated dynamics studied in \cite{wang2020a}. 
 
    For algorithms developed under the aggregated dynamics, such discrete-time execution of stochastic policies introduces a bias that must be carefully quantified.

In summary, this paper aims to address the following two questions:
\begin{description}
\item[Q1] 
Does the aggregated dynamics in \cite{wang2020a}  capture     the state dynamics induced by frequently sampling a stochastic policy? If so, in what sense, and what is the resulting   error?

\item[Q2] What is the precise impact of  the sampling frequency of a stochastic policy
on the performance  of the  corresponding learning algorithms? 
\end{description}

}

To the best of our knowledge, these questions have only been studied in the context of linear-quadratic (LQ) RL problems with Gaussian policies \cite{giegrich2024convergence,szpruch2024}. In this special setting, due to the LQ structure, the analysis is simplified by examining the discrepancy between the first and second moments of the aggregated dynamics and the controlled dynamics with sampled actions. Unfortunately, these techniques developed specifically for LQ RL problems are clearly not applicable to general RL problems. In particular, it is essential to quantify the difference between the entire distributions of two state processes.

\subsection*{Our contributions}
This paper analyzes the {convergence} of the discrete-time execution of stochastic policies for general diffusion processes and quantifies its impact on the associated learning algorithms.

\begin{itemize}
    \item 

We show that when actions are sampled from a stochastic policy at a discrete time grid
$\mathscr{G}$
and executed as a piecewise constant control process, the resulting state process, referred to as the \textit{sampled dynamics}, converges weakly to the aggregated dynamics as the time stepsize approaches zero. 
In particular, under sufficiently regular  coefficients, we show that the convergence rate is of first order \( O(|\mathscr{G}|) \), with \( |\mathscr{G}| \) being the time stepsize
(Theorem \ref{thm:weak convergence}). 
 \change{This matches the optimal convergence rate for standard piecewise-constant control approximations with smooth coefficients 
 \cite{jakobsen2019improved}}. 
We further quantify the precise weak convergence rate for cases with less regular coefficients
(Theorem \ref{thm:weak convergence relax regularity}).

The convergence results rigorously justify the exploratory control framework proposed in \cite{wang2020a} (see Section \ref{sec:setup}). The weak convergence rate will be used to analyze the performance of learning algorithms based on observations from the sampled dynamics.

\item 
We prove that the sampled dynamics, in fact, converges \emph{almost surely} with respect to the randomness in policy execution.
Specifically, we show that
the sampled dynamics converge   at order
 \(\mathcal{O}(|\mathscr{G}|^{1/2})\), averaged over the system noise but  uniformly across the realizations of the sampling noise with high probability  (Theorem \ref{thm:concentration inequality}).
We further show that 
when the volatility is uncontrolled,
 the sampled dynamics converge at order  \(\mathcal{O}(|\mathscr{G}|^{1/2})\) \emph{for almost every  realization of the system noise} (Theorem \ref{thm:strong conv}).
We show in  Example \ref{eg:counter example}  that the condition of uncontrolled volatility is necessary for the convergence of the sampled dynamics for each fixed system noise.

 These almost sure convergence results allow   for interpreting the convergence of the sampled dynamics to the aggregated dynamics as   \emph{a law of large numbers in time}, i.e.,   convergence is achieved  by increasing the sampling frequency without   averaging over multiple realizations of the sampling noises at given time points.
  \change{See Corollary \ref{cor:concentration inequality}
  and Remark \ref{rmk:sample_complexity}
  for its implications on the improved sample complexity of associated Monte Carlo estimators. }

\item We apply the weak convergence rate to quantify the error incurred when implementing continuous-time RL algorithms based on observations from sampled dynamics.  In particular, we show that evaluating the reward involving aggregated dynamics by discrete sampling and piecewise constant execution has a bias of the order $O(|\mathscr{G}|)$.
Calculating the martingale orthogonality conditions proposed in \cite{jia2022a} using discretely sampled stochastic policies also introduces a bias of the same order, \( O(|\mathscr{G}|) \). The policy gradient representation developed in \cite{jia2022b} is an informal integration that involves the realized actions and the state variables. We rigorously show that using the forward Euler scheme to calculate such terms with discretely sampled stochastic policies is a biased estimator for the policy gradient, and the bias is $O(|\mathscr{G}|)$ and variance is bounded. The q-learning with quadratic variation penalty for the risk-sensitive RL proposed in \cite{jia2024continuous} is also shown to have a similar order of discretization error.
\end{itemize}

\subsection*{Our approaches and key technical challenges}

\change{
Our convergence rate analysis for the discrete-time execution of stochastic policies requires addressing several technical issues beyond those encountered in existing literature.
These challenges arise because the sampled dynamics contain two distinct sources of randomness: one from control randomization and the other from the underlying system noise.

Specifically, the sampled dynamics have random coefficients due to control randomization, and they converge to a limiting diffusion process with a different set of deterministic coefficients, resulting from   aggregation over the stochastic policy. Quantifying this convergence rate has not been addressed in the literature, and it is fundamentally different from the classical analysis of time discretizations for stochastic differential equations (SDEs) (see, e.g., \cite{kloeden1992stochastic}). Specifically, standard numerical analysis of SDEs focuses on the error caused by discretizing the system noise,  while both the SDE and its time-discretized version share \emph{the same coefficients}. In our setting, the environment noise is not discretized; instead, the error arises from the control randomization.  

Moreover, we quantify the convergence rate of the sampled dynamics under different criteria, including the weak error with respect to both system and sampling noises, the weak error conditioned on the sampling noise, and almost sure convergence with respect to both noises. Identifying precise conditions to ensure convergence under these criteria requires carefully distinguishing the roles of system and sampling noises and employing different analytical techniques, including the regularity theory of partial differential equations (PDEs) and various martingale concentration inequalities.

In particular, when quantifying the weak error conditioned on the sampling noise (Theorem \ref{thm:concentration inequality}), we leverage the averaging over the system noise and apply the Azuma–Hoeffding inequality for bounded martingales. For the almost sure convergence with respect to both noises (Theorem \ref{thm:strong conv}), more precise estimates involving  conditional Orlicz norms
of sub-exponential  martingales
are required to carefully characterize the concentration behavior with respect to  both system and sampling noises. To the best of our knowledge, this is the first work that applies concentration inequalities to the convergence rate analysis of sampling schemes for SDEs. 
}

\subsection*{Most related works}
This paper quantifies the accuracy of the discrete-time execution of a stochastic policy and the performance of continuous-time RL algorithms based on discrete-time observations. The accuracy of policy execution has been studied in \cite{guo2023reinforcement,szpruch2024} for LQ problems with Gaussian policies; however, their techniques do not extend to the RL problems with general state dynamics and stochastic policies as considered here. 
Regarding the performance of continuous-time RL algorithms using discrete-time observations, \change{most existing works assume that the observations are generated directly from the desired continuous-time state dynamics,  independent of time discretizations (see, e.g., \cite{basei2022logarithmic, jia2022a, zhu2024PhiBEa, zhu2025OptimalPhiBE}). This assumption no longer holds in the present setting with stochastic policies, since the agent observes the sampled dynamics, which differ from their continuous-time limit.}

For general state dynamics  and general stochastic policies, the weak convergence of sampled dynamics has recently been studied in \cite{bender2024random, bender2024grid}. In their setting, the policy is evaluated continuously at the current state, and weak convergence is established by interpreting the sampled action as a random measure, though without providing a convergence rate. In contrast, we consider a more practical setting where the policy is evaluated only at discrete time points, so that the sampled dynamics are governed by a piecewise constant control process. Moreover, we establish not only the weak convergence rate, leveraging the regularity theory of PDEs, but also the high-probability bound and almost sure convergence rate with respect to the policy execution noise.

For state dynamics with  drift controls only, 
  \cite{carmona2025reconciling}
  establishes the strong convergence of time-discretized sampled dynamics under the assumption that the drift coefficient is bounded. This boundedness condition is restrictive, as it rules out the linear dynamics commonly encountered in many RL problems \cite{basei2022logarithmic,guo2023reinforcement,szpruch2024}. Our result removes this limitation by allowing drift coefficients with linear growth. To address the challenges posed by unbounded coefficients, we employ advanced sub-exponential concentration inequalities with conditional Orlicz norms to quantify the convergence rate.

\subsection*{Organization of the paper}
  Section \ref{sec:setup} briefly reviews the 
use of stochastic policies in continuous-time RL. Section \ref{sec:sampled dynamics} presents the problem setup and introduces the sampled dynamics rigorously. Section \ref{sec:convergence sampled state} presents the main convergence results
of the sampled dynamics,
and Section \ref{sec:applications}
applies them to analyze continuous-time RL algorithms. 
Section \ref{sec:proofs}
presents the proofs of the main results. Appendix \ref{sec proof: lemma} contains the proofs of useful Lemmas and intermediary results.

\section{Preliminary}
\label{sec:setup}
In this section, we review the use of stochastic policies in continuous-time RL problems, \change{while avoiding unnecessary technical details. 
The precise assumptions and setup are  given   in Section \ref{sec:sampled dynamics}.}

\subsection{Classical stochastic control}
\label{sec:classical control}

Recall that in a classical stochastic control problem, an agent aims to control the state process and optimize an objective functional through a  \change{non-anticipative}   control process taking values in a Polish space $A$.
Specifically, given a \change{non-anticipative} control  process $\{a_t\}_{t\ge 0}$, the corresponding state process $\{X_t\}_{t\ge 0}$  is governed by the following dynamics:\footnote{To simplify the presentation, we assume that the state processes have a deterministic initial condition. The analysis and results extend naturally to cases with a random initial condition that  is independent of the Brownian motion $W$ and satisfies appropriate integrability conditions. We also assume without loss of generality that the Brownian motion and the state variable have the same dimensions.}
\begin{equation}
\label{eq:sde}
\dd X_t = b(t,X_t,a_t) \dd t +  \sigma(t,X_t,a_t) \dd W_t,  \quad t\in [0,T]; 
\quad X_0 =x_0, 
\end{equation}
where
$x_0\in \sR^d$ is a given initial state, 
$b:[0,T]\times \mathbb R^d \times A \to \mathbb R^d$ and $\sigma: [0,T]\times \mathbb R^d \times A \to \mathbb R^{d\times d}$ are given  drift and volatility coefficients, respectively,
and 
$W$ is a $d$-dimensional Brownian motion supported on  a  probability space \((\Omega, \mathcal{F}, \PP)\).
The agent's objective is to maximize the following objective 
\begin{equation}
\label{eq:classical objective}
\E\left[ \int_0^T r(t,X_t,a_t)\dd t + h(X_T) \right]
\end{equation}
over all 
non-anticipative controls,  where   $r:[0,T]\times \sR^d\times A\to \sR$ is a given instantaneous reward function, and $h:\sR^d\to \sR$ is a  given  terminal reward function.  
\change{We refer the reader to \cite[Chapter 2, Section 4]{YZbook} for a more detailed formulation of stochastic control problems.}

It is known that \change{under mild regularity  conditions (see e.g., \cite{nicole1987compactification, kurtz1998existence})},
an optimal control of  \eqref{eq:classical objective} exists and can be expressed in feedback form. That is, there exists a function  $\bm u:[0,T]\times \mathbb R^d \to A$, called a policy,  
such that the optimal control is obtained by $a_t = \bm u(t, X_t^{\bm u})$ for all $t\in [0,T]$, where $\{X_t^{\bm u}\}_{t\ge 0}$ satisfies the following dynamics:
\begin{equation}
\label{eq:sde_deterministic_policy}
 \dd X_t = b(t,X_t,\bm u(t, X_t)) \dd t +  \sigma(t,X_t,\bm u(t, X_t)) \dd W_t,
  \quad t\in [0,T]; 
\quad X_0 =x_0,   
\end{equation}
The policy 
  ${\bm u}$  
can be determined using the coefficients $b, \sigma, r,$ and $h$. Once determined, the optimal control can be implemented by evaluating ${\bm u}$ at the current state of the system.

\subsection{ Stochastic policy in RL and the associated state dynamics}
\label{sec:exploratory dynamics} 

In the RL problem, the agent seeks to optimize the objective \eqref{eq:classical objective} by directly interacting with the dynamics \eqref{eq:sde},
experimenting with different actions, and refining their strategies based on the observed responses. Unlike traditional control settings, the agent treats \eqref{eq:sde} as a black-box environment evolving in continuous time and learns an optimal policy solely from observations, without direct access to the coefficients $b, \sigma, r,$ and $h$.

To avoid getting trapped in local optima, a key feature of RL is explicitly exploring the environment. A common approach is to use \emph{stochastic policies}  $\bpi:[0,T]\times \sR^d\to \cP(A)$, which maps the current state to a probability distribution over the action space. Given a stochastic policy $\bpi$, the agent interacts with the state  by sampling noisy actions from $\bpi$ and observing the resulting   state transitions. Over time, the agent refines the stochastic policy to optimize the objective.

\change{A crucial step in designing learning algorithms with stochastic policies is to characterize the performance of a stochastic policy $\bpi$.}   
\cite{wang2020a} suggests that given the policy $\bpi$, the associated state process $\aX$ evolves according to  the following dynamics:
\begin{equation}
\label{eq:aggregated_sde}
\d   X_t = \tilde b^{\bpi}(t, X_t )\dd t + \tilde \sigma^{\bpi}(t, X_t )\dd W_t,
\quad t\in [0,T]; 
\quad X_0 =x_0,
\end{equation}
with the   coefficients  $\tilde b^{\bpi}:[0,T]\times \sR^d\to \sR^d $ and $\tilde \sigma^{\bpi}: [0,T]\times \sR^d\to \sR^{d\times d} $    given by
\begin{equation}
	\label{eq:coef with star}
	\begin{aligned}
		& \tilde b^{\bpi}(t,x) \coloneqq \int_{A} b(t,x,a)\bpi(\dd a|t,x)  ,\quad \tilde \sigma^{\bpi}(t,x) \coloneqq \sqrt{\int_{A} (\sigma\sigma^\top)(t,x,a)\bpi(\d a|t,x) },
	\end{aligned}
\end{equation}
where $(\cdot)^{1/2}$
is the matrix square root of positive semidefinite matrices, and $(\cdot)^\top$ is the matrix transpose.
Moreover, the associated objective function is given by:
\begin{equation}
\label{eq:objective entropy relaxed control}
\E\left[ \int_0^T  \tilde r(t,\aX_t)\dd t + h(\aX_T) \right],
\quad \textnormal{with
$\tilde r(t,x) \coloneqq  \int_{A} r(t,x,a)\bpi(\dd a| t,x)$.
}   
\end{equation}
In the sequel, we call \eqref{eq:aggregated_sde} the \emph{aggregated dynamics} to emphasize that its coefficients are obtained by averaging the coefficients of \eqref{eq:sde} with respect to the sampling distribution defined by $\bpi$.

The dynamics \eqref{eq:aggregated_sde} and the objective \eqref{eq:objective entropy relaxed control} are derived \emph{heuristically} in \cite{wang2020a} under the assumption that the agent interacts with the system \eqref{eq:sde} by \emph{continuously} sampling 
\emph{independent}
random actions from the policy \( \bpi \). Under this assumption, the authors determine the state coefficients \eqref{eq:coef with star} and the instantaneous reward \eqref{eq:objective entropy relaxed control} by examining the first and second moments of the infinitesimal state change and applying a ``law-of-large-numbers'' heuristic.
\change{However,  the aggregated dynamics \eqref{eq:aggregated_sde} differs from the practically observed state process, since realizing it would require continuously sampling independent actions.\footnotemark}

\change{A more natural and practical implementation of a stochastic policy is to sample actions only at discrete time points and apply   them to   the state process   \eqref{eq:sde} (see \cite{giegrich2024convergence,szpruch2024}). The resulting state dynamics is governed by \eqref{eq:sde} with a piecewise constant control process, whose evolution differs from the aggregated dynamics \eqref{eq:aggregated_sde}. In the sequel, we show that these state dynamics converge to the aggregated dynamics \eqref{eq:aggregated_sde} as the sampling frequency increases, and we quantify the corresponding convergence rates. We also analyze the impact of this approximation on algorithms designed based on \eqref{eq:aggregated_sde}.}

\footnotetext{In fact, 
 interacting with 
the SDE \eqref{eq:sde} by \emph{continuously} sampling from a stochastic policy creates measure-theoretical issues. 
As pointed out in \cite[Remark 2.1]{szpruch2024},
it is impossible to construct a family of non-constant  random variables $(\xi_t)_{t\in [0,1]}$
such that $(\xi_t)_{t\in [0,1]}$ is (essentially) pairwise independent 
and $t\mapsto \xi_t$ is Lebesgue measurable. 
This implies that if one controls \eqref{eq:sde} by continuously generating independent actions, the resulting coefficients are not progressively measurable, rendering the conventional stochastic integral ill-defined.
\label{footnote:measurability}
 }

\section{Probabilistic Setup for Aggregated and Sampled Dynamics}
\label{sec:sampled dynamics}

This section rigorously formulates   the aggregated dynamics under a given stochastic policy and the associated sampled dynamics with a finite number of intervention times, as outlined in Section \ref{sec:setup}.

\subsection{Aggregated dynamics}
\label{sec:probabilistic setup}

Fix a probability space \((\Omega^W, \mathcal{F}^W, \PP^W)\) on which a $d$-dimensional Brownian motion $\{W_t\}_{t\ge 0}$ is defined, and let $\sF^W\coloneqq \{\mathcal{F}_t^W\}_{t \geq 0}$ 
be the $\PP^W$-completion of the filtration generated by $W$, which satisfies the usual conditions of completeness and right-continuity. 
\change{Let the action set $A$
 be a nonempty Polish space equipped with the metric $d_A$, and let  $\mathcal{B}(\A)$ be the Borel $\sigma$-algebra on $A$. }
Throughout this paper, we 
 impose the following regularity condition on the drift and diffusion coefficients $b$ and $\sigma$,
and the stochastic policy $\bpi$.

\begin{Assumption}
\label{assum:standing}
\begin{enumerate}[(1)]
  
\item
\label{item:b_sigma}
$b:[0,T]\times \R^d \times \A\to \R^d $  
and $\sigma: [0,T]\times \R^d \times \A\to \R^{d\times d}$
are measurable functions. There exists $C\ge 0$
such that for a fixed $a_0\in A$, any 
$\varphi\in \{b,\sigma\}$, $t\in [0,T]$, $x,x'\in \R^d$ and $a\in \A$,
\begin{align*} 
|\varphi(t,x,a)-\varphi(t,x',a)|  \le C|x-x'|,
\quad |\varphi(t,x,a)| &\le C(1+|x|+d_\A(a,a_0));
\end{align*} 
\item 
\label{item:pi}
$\bpi:[0,T]\times \R^d\times \cB(\A)\to [0,1]$ is a probability kernel.
For all $p\ge 2$, there exists 
$C_p\ge 0$  such that 
for all $(t,x)\in [0,T]\times \sR^d$,
$\int_\A d_A(a,a_0)^p \bpi(\d a|t,x)\le C_p(1+|x|^p)$, with a fixed $a_0\in A$.

\end{enumerate}

\end{Assumption}

Condition (H.\ref{assum:standing}\ref{item:b_sigma}) imposes the standard Lipschitz regularity on the system coefficients commonly used in the stochastic control literature (see, e.g., \cite{fleming2006controlled, YZbook}), and Condition (H.\ref{assum:standing}\ref{item:pi}) requires mild growth and integrability properties for the stochastic policy.
Under (H.\ref{assum:standing}),
$\tilde b^{\bpi}$ and $\tilde \sigma^{\bpi}$ in \eqref{eq:coef with star} are well-defined measurable functions and satisfy 
$
|\tilde{b}^{\bpi}(t,x)|+|\tilde{\sigma}^{\bpi}(t,x)| \le C(1+|x|)
$ for all $(t,x)\in [0,T]\times \sR^d$.

We assume that  $\tilde{b}^{\bpi}$ and $\tilde{\sigma}^{\bpi}$ are sufficiently regular so that the aggregated dynamic \eqref{eq:aggregated_sde} is well-posed.

\begin{Assumption}
\label{assum:Lipschitz}
The functions  $\tilde b^{\bpi}$ and $\tilde \sigma^{\bpi}$ are sufficiently regular so that \eqref{eq:aggregated_sde} admits a unique strong solution $\tilde X$ on the filtered  probability space $(\Omega ^W, \mathcal F^W, \sF^W, \PP^W)$.
\end{Assumption}

Here, we focus on strong solutions of \eqref{eq:aggregated_sde} rather than weak solutions, 
which allows us to identify  $W$  as the only driving noise in the state process 
 $\tilde X$. This simplifies the construction of the sampled dynamics and the subsequent error estimates in terms of the sampling frequency.
Under (H.\ref{assum:standing}),
Condition (H.\ref{assum:Lipschitz})
holds if $\tilde b^{\bpi}$ and $\tilde \sigma^{\bpi}$  
are locally Lipschitz continuous in the state variable 
(cf. \cite[Theorem 3.4]{mao2008stochastic}) and can be ensured via suitable regularity conditions on $\bpi$ (e.g., \cite[Lemma 2]{jia2022b}). Some typical sufficient conditions on $b, \sigma$ and $\bpi$ to guarantee these regularity conditions on $\tilde b^{\bpi}$ and $\tilde \sigma^{\bpi}$ are given in Examples \ref{example:general case}, \ref{example:guassian} and \ref{example:degenerate}.

\subsection{Sampled dynamics with randomized actions}
\label{sec:algo discretization}

We proceed to define the state process with random actions sampled according to the stochastic policy $\bpi$. Rather than focusing on a specific sampling procedure for $\bpi$, we adopt a more abstract formulation of the sampling process defined as follows. 
\begin{Definition}
\label{def:sample}
Assume the notation of (H.\ref{assum:standing}). 
We say a tuple $(\Omega^\xi, \cF^\xi, \sP^{\xi}, E, \xi, \phi) $ a sampling procedure of the policy $\bpi$ if $ (\Omega^\xi, \cF^\xi, \sP^{\xi})$ is a complete probability space, 
$(E,\mathcal{B}(E))$  is a Borel space, $\xi:\Omega^\xi\to E$ is a random variable and $\phi :[0,T]\times \sR^d\times E\to A$
is a measurable function such that for all $(t,x)\in [0,T]\times \sR^d$, $\Omega^\xi: \omega\mapsto \phi(t,x,\xi(\omega))\in A$ has the distribution $\bpi(\d a|t,x)$ under the measure $\sP^\xi$. 
\end{Definition}

By Definition \ref{def:sample}, $(E, \xi, \phi)$ provides a framework for \emph{executing} the policy $\bpi$ by sampling a random action $a_t\coloneqq \phi(t,X_t, \xi)$ from the distribution $\bpi(\d a | t,X_{t})$ at a given time $t\in [0,T]$
and state $X_{t}$. This action, conditioned on the $\sigma$-algebra $\sigma\left\{(X_{s})_{s \in [0,t]}\right\}$,
is independent of both $(X_{s})_{s \in [0,t ]}$ and the underlying Brownian motion $W$.

\begin{Remark}
As $A$ is a Polish space, a sampling procedure of a probability kernel $\bpi$ always exists. Indeed, by \cite[Lemma 2.22]{kallenberg2002foundations}, it suffices to take $E=[0,1]$ and $\xi$ as a  uniform  random variable on $E$. This approach has been utilized in  \cite{frikha2023actor, bender2024random, bender2024grid}. However, the corresponding function $\phi$ is guaranteed to exist only through a measure-theoretic argument, making it difficult to implement such a sampling procedure. 

Here we work with a general Borel space $E$, instead of restricting $E$ to $[0,1]$. This generalization accommodates commonly used sampling procedures that generate actions using multiple independent normal random variables, as seen in applications such as multivariate Gaussian policies  \cite{szpruch2024} or suitable Markov chain Monte Carlo methods. In Section \ref{sec:convergence sampled state}, we establish the weak convergence of the control-randomized SDE 
under the mere measurability assumption of $\phi$. Additional regularity of the function $\phi$  will be imposed to obtain strong convergence results.

\end{Remark}
Now, fix a sampling procedure
$(\Omega^\xi, \cF^\xi, \sP^{\xi}, E, \xi, \phi) $ of $\bpi$,
let $\sN_0 =\sN\cup\{0\}$
and let $(\Omega^{\xi_n}, \cF^{\xi_n}, \sP^{\xi_n},   \xi_n)_{n\in \sN_0}$
be independent copies of $(\Omega^\xi, \cF^\xi, \sP^{\xi},   \xi) $. Consider a probability space of the following form:
\begin{align}
\label{eq:space}
(\Omega, \cF,\sP)\coloneqq \bigg(\Omega^W \times  \prod_{n=0}^\infty \Omega^{\xi_n}, \cF^W\otimes \bigotimes_{n=0}^\infty 
\cF^{\xi_n}, 
\sP^W \otimes 
\bigotimes_{n=0}^\infty
\sP^{\xi_n}
\bigg),
\end{align}
where $(\Omega^W,\mathcal F^W, \mathbb P^W)$ supports   
the Brownian motion $W$ in \eqref{eq:aggregated_sde},
and for each $n\in \sN_0$, $(\Omega^{\xi_n}, \cF^{\xi_n}, \sP^{\xi_n})$ supports the random variable $\xi_n$  used to generate the random  control at the grid point $t_n$. With a slight abuse of notation, we complete the probability space $(\Omega, \cF,\sP)$ by augmenting  $\cF$
with $\sP$-null sets, and extend canonically  any random variable  on a marginal space to the whole space $\Omega$, e.g., $W(\omega^W,\omega^\xi )=W(\omega^W)$
and $\aX(\omega^W,\omega^\xi )=\aX(\omega^W)$
for all $\omega^W\in \Omega^W$ and $\omega^\xi \in \prod_{n=0}^\infty \Omega^{\xi_n}$.

Given a time grid $\mathscr G= \{0=t_0<\ldots <t_n =T\} $   of $[0,T]$, we consider interacting with the state dynamics \eqref{eq:sde} by sampling actions at the grid points in $\mathscr G$ according to the policy $\bpi$. More precisely, 
we consider the dynamics such that for all $i=0,\ldots, n-1$ and all $t\in [t_i,t_{i+1}]$,
\begin{align}\label{eq:sampled_sde}
\begin{split}
X_t & = X_{t_i} + \int_{t_i}^t b(s, X_s,a_{t_i}) \d s + 
\int_{t_i}^t \sigma(s, X_s,a_{t_i})\d  W_s,
\quad 
\textnormal{with $a_{t_i}= \phi(t_i,X_{t_i},\xi_i)$};
\quad X_0=x_0,
\end{split}   
\end{align}
which will be referred to as the \emph{sampled dynamics}\footnote{This definition slightly differs from the definition of the ``grid sampling SDE" in \cite[Section 3]{bender2024grid} even though both rely on drawing finitely many samples on the grid point. In our definition, the action sequence $a_t$ is a piecewise constant and is given by a sampling function at the grid points $a_{t_i}= \phi(t_i,X_{t_i},\xi_i)$. In contrast, for the grid sampling SDE in \cite{bender2024grid}, the action sequence is evolving as $a_t = \phi(t,X_t,\xi_{t_i})$, where the random source $\xi_{t_i}$ is piecewise constant. } in the subsequent analysis. 
For notational convenience, we write  \eqref{eq:sampled_sde} in the following equivalent form:
\begin{equation}\label{eq:sample_sde_abbre}
\dd X_t = b(t,X_t,a_{\delta(t)})\dd t + \sigma(t,X_t,a_{\delta(t)}) \dd W_t, \quad t\in [0,T];  \quad X_0 = x_0
\end{equation}
with  $\delta(t) \coloneqq t_i$ for $t\in [t_i,t_{i+1})$, and $a_{t_i}$ given in \eqref{eq:sampled_sde}. 
The dynamics \eqref{eq:sample_sde_abbre} can be viewed as an SDE with random coefficients. 

\begin{Remark}
    
Although the definition of the sampled dynamics depends on the sampling scheme $ \phi $ and $ \xi $, the weak convergence rate of the sampled dynamics to the aggregated dynamics depends only on the properties of the stochastic policy $ \bpi $, rather than the specific choice of $ \phi $ and $ \xi $ (Sections \ref{sec: weak_converge_both noise} and \ref{sec:conditional_weak_error}). The properties of $ \phi $ will be used to quantify the pathwise error between the sampled dynamics and the aggregated dynamics (Section \ref{sec:strong convergence}).

Note that the sampled dynamics evolve continuously over time while the control process remains constant within each subinterval. In particular, a random action $a_{t_i}$ is 
generated at $t_i$ and applied to the system over the interval $[t_i,t_{i+1})$ before being updated to the next action  $a_{t_{i+1}}$. This aligns with the RL setting, where the agent observes a continuous-time system at discrete time points and adjusts their actions based on these observations. 
\end{Remark}

The following lemma shows that the sampled dynamics      \eqref{eq:sample_sde_abbre} admits a unique strong solution $X^{\mathscr G}$
which is adapted to the 
filtration generated by both the Brownian motion $W$ and the execution noise $\xi$. 
The proof is given in Appendix \ref{sec proof: lemma}.

\begin{Lemma}\label{lemma:moments of X}
Suppose (H.\ref{assum:standing}) holds. Let $x_0\in \sR^d$,
$(\Omega,\mathcal F,\sF, \mathbb P)$
be the  probability space 
defined in \eqref{eq:space},
and 
$\sF=(\cF_t)_{t\ge 0}$ be the filtration such that 
$ \mathcal F_t \coloneqq  \sigma\{ (W_s)_{s\leq t}, (\xi_i)_{i=0}^\infty\}\vee \cN$, where $\cN$  \change{is the $ \sigma$-algebra of $\sP$-null sets}.  
Then for any grid $\mathscr G$, \eqref{eq:sample_sde_abbre} admits a unique
$\sF$-adapted 
strong solution $X^{\mathscr G}$
on the  space $(\Omega,\mathcal F, \mathbb P)$.
Moreover, for all $p\ge 2$, there exists a constant $C_p\ge 0$
such that for all grids $\mathscr G$ and all $t\in [0,T]$,
$  \E\left[|\X_t|^{p}\right] \le C_p(1+|x_0|^{p})e^{C_pt}$, and 
$\X_t$ is adapted to  $\mathcal G_t
\coloneqq 
  \sigma\{ (W_s)_{s\le t},  (\xi_{i})_ {t_i < t} \}\vee \cN$. 
\end{Lemma}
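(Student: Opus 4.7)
The plan is to construct $\X$ inductively along the grid intervals $[t_i,t_{i+1}]$, exploiting that on each interval the control $a_{t_i}$ is frozen, so \eqref{eq:sample_sde_abbre} reduces to an SDE whose coefficients are Lipschitz in $x$ with a deterministic constant inherited from (H.\ref{assum:standing}\ref{item:b_sigma}). With well-posedness and adaptedness in hand, the $L^p$ moment bound will then follow from a single application of the Burkholder--Davis--Gundy (BDG) inequality and Gronwall's lemma on $[0,T]$, using (H.\ref{assum:standing}\ref{item:pi}) to control $\E[d_\A(a_{\delta(s)},a_0)^p]$.

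For the base case $i=0$, note that $a_{t_0}=\phi(0,x_0,\xi_0)$ is $\sigma(\xi_0)$-measurable and independent of $W$. Viewing \eqref{eq:sample_sde_abbre} on $[t_0,t_1]$ as an SDE on $(\Omega,\cF,\PP)$ with random coefficients $b(\cdot,\cdot,a_{t_0})$ and $\sigma(\cdot,\cdot,a_{t_0})$ whose Lipschitz constants in $x$ are deterministic, I invoke a standard existence--uniqueness result for SDEs with random Lipschitz coefficients (e.g., \cite{mao2008stochastic}) to obtain a unique $\sF$-adapted strong solution; the resulting $\X_{t_1}$ is measurable with respect to $\sigma\{(W_s)_{s\le t_1},\xi_0\}\vee\cN\subset\cG_{t_1}$. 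For the inductive step, suppose the solution has been constructed up to $t_i$ with $\X_{t_i}$ being $\cG_{t_i}$-measurable. Then $a_{t_i}=\phi(t_i,\X_{t_i},\xi_i)$ is $\cF_{t_i}$-measurable, and $\{W_s-W_{t_i}\}_{s\ge t_i}$ remains independent of $\cF_{t_i}$. The same argument extends the solution uniquely to $[t_i,t_{i+1}]$, and measurability of $\X_t$ for $t\in[t_i,t_{i+1})$ with respect to $\cG_t$ is preserved, since only $\xi_0,\ldots,\xi_i$ enter and $t_j<t$ for $j\le i$. Gluing these pieces yields the $\sF$-adapted strong solution on $[0,T]$.

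For the moment estimate, fix $p\ge 2$. By BDG, H\"older and (H.\ref{assum:standing}\ref{item:b_sigma}),
\[
\E\!\left[\sup_{r\le t}|\X_r|^p\right]\le C_p|x_0|^p+C_p\!\int_0^{t}\!\E\!\left[1+|\X_s|^p+d_\A(a_{\delta(s)},a_0)^p\right]ds,\quad t\in[0,T].
\]
Conditioning on $\cF_{\delta(s)}$ (with respect to which $\X_{\delta(s)}$ is measurable) and applying (H.\ref{assum:standing}\ref{item:pi}) gives $\E[d_\A(a_{\delta(s)},a_0)^p]\le C_p(1+\E[|\X_{\delta(s)}|^p])\le C_p(1+\E[\sup_{r\le s}|\X_r|^p])$. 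Substituting this back and applying Gronwall's inequality delivers the claimed bound $\E[|\X_t|^p]\le C_p(1+|x_0|^p)e^{C_pt}$, uniformly in the grid $\mathscr G$.

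The main technical point is the inductive well-posedness step: the coefficients in \eqref{eq:sample_sde_abbre} are random through $a_{t_i}$, and the relevant filtration $\sF$ must include the sampling variables $\xi_0,\ldots,\xi_i$. What makes this tractable is that the Lipschitz constants in (H.\ref{assum:standing}\ref{item:b_sigma}) are uniform in $a$, so the classical Picard fixed-point argument applies on each interval and the output is jointly measurable by construction. The subsequent moment iteration carries no grid-dependent constants, which is essential for obtaining a bound uniform in $\mathscr G$.
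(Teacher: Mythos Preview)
Your proposal is correct and follows essentially the same approach as the paper: inductive well-posedness over the grid intervals exploiting the uniform-in-$a$ Lipschitz constant from (H.\ref{assum:standing}\ref{item:b_sigma}), followed by a BDG--Gronwall moment estimate that uses (H.\ref{assum:standing}\ref{item:pi}) to bound $\E[d_A(a_{\delta(s)},a_0)^p]$ in terms of $\E[|\X_{\delta(s)}|^p]$. The only cosmetic difference is that you apply Gronwall directly to $t\mapsto \E[\sup_{r\le t}|\X_r|^p]$, whereas the paper works pointwise with $\E[|\X_s|^p]+\E[|\X_{\delta(s)}|^p]$ after summing the estimate at $s$ and at $\delta(s)$; both routes give the same grid-uniform bound.
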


\section{Error Estimates for the Sampled Dynamics}
\label{sec:convergence sampled state}

In this section, we analyze the convergence rate of the sampled dynamics \eqref{eq:sample_sde_abbre} to the aggregated dynamics \eqref{eq:aggregated_sde} under various criteria as
the time grid $\mathscr G$ is refined. In particular, we quantify the weak error for quantities involving expectations over the system noise $W$ (but not necessarily over the execution noises $(\xi_i)_{i=0}^\infty$), and the pathwise error $\tilde X- X^{\mathscr{G}}$.
 To this end, 
we define the mesh size of a time grid $\mathscr G= \{0=t_0<\ldots <t_n =T\} $ by $|\mathscr{G}| = \max_{0\le i\le n-1} (t_{i+1} - t_{i})$.

\subsection{Weak error with expectations over execution noise}
\label{sec: weak_converge_both noise}

We start by analyzing the weak convergence rate of $X^{\mathscr{G}}$ to $\tilde X$ under the measure $\PP$.
That is, for a suitable test function $f:\sR^d\to \sR$,
we quantify the error $\E[f(\aX_s)]- \E[f(\X_s)]$  in terms of the time grid  $\mathscr{G}$, where the expectation is taken over both system noise $W$ and the execution noises $(\xi_i)_{i=0}^\infty$ (cf.~ \eqref{eq:space}).

\subsubsection{Weak convergence for regular coefficients}

\label{sec:weak_error_in_expectation}

To highlight the key ideas without introducing needless technicalities, we begin by providing the analysis for sufficiently smooth coefficients and test functions. In this setting, we obtain the maximum convergence rate  of the order $\mathcal O(|\mathscr{G}|)$ as $|\mathscr{G}|$ tends to zero.

To specify the precise regularity conditions, we introduce the following notation: for each Euclidean space $E$ and $k \in \sN$, $p\in\sN_0$, let $C^{k}_p([0,T]\times \sR^d; E)$ be the space of functions $u:[0,T]\times \sR^d\to E $ such that for all $r\in\sN_0$ and multi-indices $s$ satisfying $2r+|s| \le k$, the partial derivative $\partial_t^r\partial_x^s u$ exists and is continuous for all $(t,x)\in [0,T]\times\R^d$, and they all have polynomial growth in $x$:
\begin{equation}\label{eq:poly norm def}
\|u\|_{C_p^k} :=  \sum_{2r+|s|\le k} \sup_{(t,x)\in [0,T]\times\R^d} \frac{\left|\partial_t^r\partial_x^s u(t,x)\right|}{1+|x|^p}<\infty.
\end{equation}
Let $C^{k,0}_p([0,T]\times \sR^d\times A; E)$
be the space of functions $u: [0,T]\times \sR^d \times \A\to E$ such that for all $r\in \sN_0$ and multi-indices $s$ satisfying $2r+|s| \le k$, the partial derivative
$  \partial_t^r\partial_x^s u(t,x,a) $ exists, is continuous with respect to $t$ and $x$ for all $(t,x,a)\in [0,T]\times\R^d\times\A$ and they all have polynomial growth in $x$ and $a$:
\begin{equation}
\|u\|_{C_p^{k,0}} :=  \sum_{2r+|s|\le k} \sup_{(t,x,a)\in [0,T]\times\R^d\times A} \frac{\left|\partial_t^r\partial_x^s u(t,x,a)\right|}{1+|x|^p + d_A(a,a_0)^p}<\infty,
\end{equation}
where $a_0\in A$ is the fixed element in (H.\ref{assum:standing}).
For simplicity, we omit the range space $E$ when no confusion arises. 
Moreover, we denote by $\mathcal L$ the generator of the aggregated dynamics \eqref{eq:aggregated_sde} satisfying   
for all $\phi \in C^2([0,T]\times \sR^d)$,
\begin{equation}\label{eq: generator L}
\mathcal L \phi (t,x) = \tilde{b}^{\bpi} (t,x)\nabla_x \phi(t,x)+\frac{1}{2}\tr\left((\tilde{\sigma}^{\bpi}(\tilde{\sigma}^{\bpi})^\top)(t,x)     
\textrm{Hess}_x\phi(t,x)\right).
\end{equation}

The following assumption presents the regularity conditions to facilitate the analysis.

\begin{Assumption}\label{assump:solution_kol_pde}
\begin{enumerate}[(1)]
\item  $b$ and $   \sigma\sigma^{\top}$ are in the space   $ {C}^{2,0}_p([0,T]\times \sR^d\times A)$ for some $p\in \sN_0$;
\item 
$\tilde b^{\bpi}, \tilde \sigma^{\bpi}$ are sufficiently regular, such that for any $f\in C_p^4(\mathbb R^d)$ and any $   t^{\prime}\in [0,T]$, the PDE 
\begin{equation}\label{eq:Kol_pde}
(\partial_t v+\mathcal{L} v)(t,x)   = 0,\quad  (t,x)\in [0,t')\times \sR^d;\quad 
v(t^{\prime},x) = f(x), \quad x\in \sR^d
\end{equation}
has a unique solution $ v_f(\cdot,\cdot;t^{\prime}) \in C_p^{4}([0,t^{\prime}]\times \R^d)$.  Moreover, $ \|v_f(\cdot,\cdot;t^\prime)\|_{C_p^4} \leq C \|f \|_{C_p^4}$, where $C$ is a constant depending only on $T,\tilde b^{\bpi}, \tilde \sigma^{\bpi},p$.
\end{enumerate}

\end{Assumption}

\begin{Remark}\label{rek:conditions for kol pde} 
\change{
To obtain maximal first-order weak convergence, Condition (H.\ref{assump:solution_kol_pde}) requires that both the state coefficients and the solution to the associated Kolmogorov PDE possess sufficient regularity. This regularity requirement is stronger than  Condition (H.\ref{assum:lipschitz strong conv}) assumed   for half-order strong convergence. 
This distinction is consistent with the fact that establishing first-order weak convergence of Euler–Maruyama schemes for SDEs typically requires stronger regularity conditions than the Lipschitz continuity that ensures strong convergence (see, e.g., \cite[Theorem 14.1.5]{kloeden1992stochastic} and \cite{higham2002Strong} for the conditions of weak and strong convergence of Euler–type schemes, respectively). However, as emphasized in Section \ref{sec:introduction}, our convergence result does not follow from existing results on Euler–Maruyama schemes, since the sampled dynamics \eqref{eq:sample_sde_abbre} have random coefficients and evolve continuously, and consequently do not arise from a standard Euler–Maruyama discretization of the aggregated dynamics \eqref{eq:aggregated_sde}.

The  PDE solution regularity   in Condition (H.\ref{assump:solution_kol_pde})   holds when the coefficients of the aggregated dynamics \eqref{eq:aggregated_sde} are sufficiently regular.
For instance, if $\tilde b^{\bpi} $ and $ \tilde \sigma^{\bpi}$ are in the  class $ C_p^{4}$ and have bounded  derivatives, then
(H.\ref{assump:solution_kol_pde}) can be verified using the probabilistic argument in \cite{pardoux1992backward}.  Alternatively, if the diffusion matrix $\tilde \sigma^{\bpi}(\tilde \sigma^{\bpi})^\top$ is uniformly elliptic and $   \tilde b^{\bpi}, \tilde \sigma^{\bpi} (\tilde \sigma^{\bpi})^{\top} $ are bounded   functions in $C^3_p$ with bounded derivatives,  then  the desired solution regularity follows from  the standard    regularity theory of parabolic PDEs; see e.g., 
\cite[Lemma 2.2]{ma2024convergence} and 
\cite[Chapter 4, Theorem 5.1]{ladyzhenskaya1968linear}. 
The details are left to the interested reader.
 }
\end{Remark}

The following theorem establishes the optimal first-order weak convergence of the sampled dynamics to the aggregated dynamics.
\begin{Theorem}\label{thm:weak convergence}
Suppose (H.\ref{assum:standing}), (H.\ref{assum:Lipschitz}) and (H.\ref{assump:solution_kol_pde}) hold with $p\ge2$. Then for any $f\in C_p^4(\mathbb R^d)$, there exists a constant $C_p\ge 0$ depending only on $T,\tilde b^{\bpi}, \tilde \sigma^{\bpi},\bpi,b,\sigma,p$
such that for all grids $\mathscr{G}$,
\begin{equation}
\sup_{t\in [0,T]}\left| \E[f(\X_{t })] - \E[f(\aX_{t})]\right| \le C_p\|f\|_{C_p^4}|\mathscr{G}|.
\end{equation}
\end{Theorem}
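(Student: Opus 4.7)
The plan is to follow the classical backward Kolmogorov PDE route for weak error analysis, but adapted to the fact that the sampled dynamics \eqref{eq:sample_sde_abbre} have random, piecewise constant coefficients generated by the stochastic policy. Fix $t\in[0,T]$ and, using (H.\ref{assump:solution_kol_pde}), let $v\in C_p^4([0,t]\times\R^d)$ be the unique solution of the terminal-value problem $\partial_s v+\mathcal L v=0$ on $[0,t)\times\R^d$ with $v(t,\cdot)=f$. Then $\E[f(\tilde X_t)]=v(0,x_0)$. Applying It\^o's formula to $v(s,X^{\mathscr G}_s)$ and using the PDE to eliminate $\partial_s v+\mathcal L v$, the martingale part vanishes in expectation and
\begin{equation*}
\E[f(X^{\mathscr G}_t)]-\E[f(\tilde X_t)]
=\sum_{i=0}^{n-1}\E\!\int_{t_i}^{t_{i+1}\wedge t}\!\!G(s,X^{\mathscr G}_s,a_{t_i})\,\d s,
\end{equation*}
where $G(s,x,a):=(b(s,x,a)-\tilde b^{\bpi}(s,x))^{\!\top}\nabla_x v(s,x)+\tfrac12\mathrm{tr}\!\bigl((\sigma\sigma^{\!\top}(s,x,a)-\tilde\sigma^{\bpi}(\tilde\sigma^{\bpi})^{\!\top}(s,x))\,\mathrm{Hess}_x v(s,x)\bigr)$.

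The crucial observation is a martingale/mean-zero structure at the grid point. Let $\mathcal G_{t_i}$ denote the $\sigma$-algebra generated by everything up to time $t_i$ excluding $\xi_i$, as in Lemma~\ref{lemma:moments of X}. Since $a_{t_i}=\phi(t_i,X^{\mathscr G}_{t_i},\xi_i)$ has conditional law $\bpi(\cdot\,|\,t_i,X^{\mathscr G}_{t_i})$ given $\mathcal G_{t_i}$, the definition \eqref{eq:coef with star} yields $\E[G(t_i,X^{\mathscr G}_{t_i},a_{t_i})\,|\,\mathcal G_{t_i}]=0$, hence $\E[G(t_i,X^{\mathscr G}_{t_i},a_{t_i})]=0$. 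Thus, adding and subtracting the grid value,
\begin{equation*}
\E\!\int_{t_i}^{t_{i+1}\wedge t}\!\!G(s,X^{\mathscr G}_s,a_{t_i})\,\d s
=\E\!\int_{t_i}^{t_{i+1}\wedge t}\!\!\bigl[G(s,X^{\mathscr G}_s,a_{t_i})-G(t_i,X^{\mathscr G}_{t_i},a_{t_i})\bigr]\,\d s.
\end{equation*}

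To obtain an extra order in $|\mathscr G|$, I would apply It\^o's formula once more to the map $s\mapsto G(s,X^{\mathscr G}_s,a_{t_i})$, treating $a_{t_i}$ as frozen (measurable with respect to the filtration at $t_i$ together with $\xi_i$). Under (H.\ref{assump:solution_kol_pde}), $b$ and $\sigma\sigma^{\!\top}$ belong to $C^{2,0}_p$ and $v\in C_p^4$, so $G(\cdot,\cdot,a)$ has the regularity and polynomial growth in $x$ and $d_A(a,a_0)$ needed to apply It\^o. After taking conditional expectation given $\mathcal G_{t_i}\vee\sigma(\xi_i)$, the stochastic integral vanishes and the drift integrand is pointwise bounded by $C(1+|X^{\mathscr G}_u|+d_A(a_{t_i},a_0))^{q}$ for some finite $q$. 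Using Lemma~\ref{lemma:moments of X} together with (H.\ref{assum:standing}\ref{item:pi}) to control moments of $d_A(a_{t_i},a_0)$, this yields $|\E[G(s,X^{\mathscr G}_s,a_{t_i})-G(t_i,X^{\mathscr G}_{t_i},a_{t_i})]|\le C_p\|f\|_{C_p^4}(s-t_i)$. Integrating on $[t_i,t_{i+1}]$ gives $\mathcal O(|\mathscr G|^2)$ per subinterval, and summing over the $\mathcal O(1/|\mathscr G|)$ intervals produces the claimed bound $C_p\|f\|_{C_p^4}|\mathscr G|$; the supremum over $t\in[0,T]$ follows since the argument is uniform in the chosen terminal time, as $v$ and its PDE bound depend continuously on $t$ through (H.\ref{assump:solution_kol_pde}).

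The main obstacle is careful bookkeeping rather than a single hard step: because the coefficients of \eqref{eq:sample_sde_abbre} are random (through $a_{t_i}$) and only measurable (not smooth) in $a$, I cannot differentiate in $a$, so I must freeze $a_{t_i}$ before the second It\^o step and then handle the $a$-dependence purely through conditional expectations and the polynomial-growth moment bounds from (H.\ref{assum:standing}\ref{item:pi}) and Lemma~\ref{lemma:moments of X}. A second subtlety is ensuring that the PDE-based estimates of $\nabla_x v$ and $\mathrm{Hess}_x v$ entering $G$ are bounded by $\|f\|_{C_p^4}$ uniformly in the terminal time, which is exactly what the quantitative bound in (H.\ref{assump:solution_kol_pde}) provides.
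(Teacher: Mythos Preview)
Your proposal is correct and follows essentially the same route as the paper's proof: represent $\E[f(\tilde X_t)]$ via the Kolmogorov PDE solution $v$, apply It\^o to $v(s,X^{\mathscr G}_s)$, use the mean-zero property at the grid point (from the definition of $\tilde b^{\bpi},\tilde\sigma^{\bpi}$ and the PDE) to subtract $G(t_i,X^{\mathscr G}_{t_i},a_{t_i})$, and then apply a second It\^o step to the frozen-$a$ remainder to gain the extra factor $|\mathscr G|$ (this is precisely the paper's Lemma~\ref{lemma:convergence rate of diff}). The only cosmetic difference is that the paper telescopes over grid intervals and keeps the full integrand $\partial_t v + b^\top\partial_x v+\tfrac12\mathrm{tr}(\sigma\sigma^\top\partial_x^2 v)$ rather than first subtracting the aggregated generator to form your $G$; since $\partial_t v+\mathcal L v=0$ these are identical expressions.
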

\begin{proof}
See Section \ref{sec proof: thm weak convergence}.
\end{proof}

By  Theorem \ref{thm:weak convergence},
for a given   function $f\in C_p^4(\mathbb R^d)$,
$f(\X_T)$ is a biased estimator of $\E[f(\aX_T)]$, and the bias vanishes at first order, 
  as the sampling mesh size 
$|\mathscr{G}|$ tends to zero.
  \change{Moreover, by Lemma \ref{lemma:moments of X}, the variance of the estimator $f(\X_T)$ is uniformly bounded, independent of the mesh size $|\mathscr{G}|$. The computational cost of generating one trajectory of $\X_T$ increases linearly with the number of grid points in $\mathscr{G}$.
  For further details on its implications for the sample complexity of Monte Carlo estimators of $\E[f(\aX_t)]$, we refer to Remark \ref{rmk:sample_complexity}.
   
  }

We illustrate several common cases in which the required assumptions in Theorem \ref{thm:weak convergence} hold. 
\begin{Example}[General smooth coefficients]
\label{example:general case}
Assume $\bpi(\dd a|t,x)$ admits a  density function with respect to a  measure $ \mu$ on $A$ (e.g., Lebesgue measure), i.e.,
\begin{equation}\label{eq:example general density}
\bpi(\dd a| t,x)  = H(t,x,a)\mu (\dd a).
\end{equation}
 If it holds
 for any  integer $ r \ge 0$ and multi-index $s$ satisfying $2r + |s| \le 4$ that 
\begin{enumerate}[(\alph*)]
\item  For $ \phi \in \{b,\sigma\}$, $ (t,x,a)\mapsto \partial_t^r \partial_x^s \phi(t,x,a)$ is bounded and continuous.
\item  $ (t,x)\mapsto 
 \int_\A |\partial_t^r\partial_x^s H(t,x,a)| \mu(\dd a)$ is bounded.
\item  $ \sigma\sigma^{\top}$ is uniformly elliptic.
\end{enumerate}
Then $ \tilde b^{\bpi}$ and $ \tilde \sigma^{\bpi}$ are of class $ C_p^{4}$ with bounded derivatives.
\end{Example}

\begin{Example}[Gaussian policies]
\label{example:guassian}
Assume $ A = \R^m$ and $ \bpi(\cdot|t,x)$ is Gaussian with respect to the Lebesgue measure on $ \R^m$, i.e.,
\begin{equation}\label{eq:example gaussian denstiy}
 {\bpi(\dd a|t,x)}  = \varphi(a|\mu(t,x),\Sigma^2(t,x))\dd a,
\end{equation}
where $ \varphi(\cdot|\mu,\Sigma^2)$ is the Gaussian density with mean $ \mu$ and variance $ \Sigma^2$. 
If the following conditions hold: 
\begin{itemize}
\item [(1)] For $\phi \in \{b,\sigma\}$, integer $r\ge 0 $ and multi-index $s_1, s_2$ satisfying that $0< 2r+ |s_1| + |s_2| \le 4$, $ 
(t,x,a)\mapsto 
\partial_t^l \partial_x^{s_1} \partial_a^{s_2} \phi(t,x,a)$ is bounded and continuous.
\item [(2)] $  \mu,\Sigma \in C_p^4$ with bounded derivatives. 
\item [(3)] $ \sigma\sigma^{\top}$ is uniformly elliptic.
\end{itemize} 
Then $ \tilde b^{\bpi}$ and $ \tilde \sigma^{\bpi}$ are of class $ C_p^{4}$ with bounded derivatives.

\end{Example}

Examples \ref{example:general case} and \ref{example:guassian} assume that \( \sigma \sigma^{\top} \) is uniformly elliptic, which implies the uniform  ellipticity of 
 $(t,x)\mapsto \int_A \sigma\sigma^{\top}(t,x,a) \bpi(\dd a|t,x) $. 
This nondegeneracy condition ensures that after taking the square root, 
\( (t,x)\mapsto \tilde{\sigma}^{\bpi} 
(t,x)
\) 
has the same regularity as 
$(t,x)\mapsto \int_A \sigma\sigma^{\top}(t,x,a) \bpi(\dd a|t,x) $. 
The existence of a sufficiently regular square root $\tilde{\sigma}^{\bpi}$ can be ensured by replacing the nondegeneracy condition with certain structural conditions on $\sigma$ and $\bpi$. Specifically, if we can express \( \int_A \sigma\sigma^{\top}(t,x,a) \bpi(\dd a|t,x)  = \lambda(t,x)\lambda^{\top}(t,x) \) with a sufficiently smooth function \( \lambda(t,x) \), then \( \tilde{\sigma}^{\bpi} = \lambda \) will have the desired regularity, even if it is degenerate. We provide several examples to illustrate this.

\begin{Example}[Uncontrolled volatility]
\label{example:degenerate}
Consider the case in which $ \sigma$ is uncontrolled, i.e., $ \sigma(t,x,a) = \sigma(t,x)$. In this case,
\begin{equation}
\int_A \sigma\sigma^{\top}(t,x,a)\bpi(\dd a|t,x) = \sigma\sigma^{\top}(t,x).
\end{equation}
Note that we only require $ \tilde \sigma^{\bpi} (\tilde \sigma^{\bpi})^\top = \int_A \sigma\sigma^{\top}(t,x,a)\bpi(\dd a|t,x)$, which means that we can choose $ \tilde\sigma^{\bpi}$ to be $\sigma$ instead of $ \sqrt{\sigma\sigma^{\top}}$. Therefore, the conditions on $ \sigma $ in Examples \ref{example:general case} and \ref{example:guassian} can be relaxed so that $ \sigma$ is of class $C_p^{4}$ with bounded derivatives and it can be degenerate.
\end{Example}

\begin{Example}[Gaussian policies with linear volatility]
\label{eg:gaussian linear}
Consider the case where the volatility linearly depends on the action, i.e., $ \sigma(t,x,a) = a\beta^{\top}$ with $ \beta \in R^d$ and \eqref{eq:example gaussian denstiy} holds with $ m = d $ and $ \mu(t,x) = 0$ and, then 
\begin{equation}
\int_{A} \sigma\sigma^{\top}(t,x,a)\bpi(\dd a|t,x)= \int_{\R^d} \Sigma(t,x)u\beta^{\top}\beta u^{\top} \Sigma(t,x)^{\top} \varphi(u)\dd u.
\end{equation}
We can choose $ \tilde \sigma^{\bpi}(t,x)  = \Sigma(t,x)\sqrt{\int_{\R^d}u\beta^{\top}\beta u^{\top}\varphi(u)\dd u}$. Therefore, $ \tilde \sigma^{\bpi}$ can degenerate; the assumption in this case is that $ \Sigma \in C_p^{4}$ has bounded derivatives.

If $ d = 1$, then we can extend the second case to a general form. Assume that $\bpi(\cdot|t,x)$ is the location scaled distribution, i.e., there is a function $ \Pi(t,x)$ such that $ \bpi(\cdot|t,x)$ satisfies for any measurable function $ f$, $ \int_A f(a)\bpi(\dd a|t,x) = \int_B f(\Pi(t,x)u)\mu(\dd u)$, where $ (B,\mathscr{F}_B,\mu)$ is a measure space. Let $ \sigma(t,x,a) = h(t,x)a$ for a bounded $ h$. Moreover, assume that $ h \ge 0$ and $ \Pi \ge 0$. We have
\begin{equation}
(\tilde\sigma^{\bpi}(t,x))^2 = \int_B h^2(t,x)\Pi^2(t,x)u^2\mu(\dd u).
\end{equation}
If we can choose $ \tilde \sigma^{\bpi}(t,x) = h(t,x)\Pi(t,x)\sqrt{\int_B u^2 \mu(\dd u )}$, then $\tilde \sigma^{\bpi}$ can   degenerate as long as  $ h\Pi \in C^{4}_p$ and has bounded derivatives. 
\end{Example}

\subsubsection{Weak convergence with irregular coefficients}\label{sec:relax regularity pde approach}
In the previous section, we established a first-order weak convergence of the sampled dynamics under the assumption that 
all coefficients are sufficiently regular so that  
the PDE associated with the aggregated dynamics has a four-times differentiable solution (see (H.\ref{assump:solution_kol_pde}) and Remark \ref{rek:conditions for kol pde}).   In this section, we relax the regularity conditions on the coefficients and quantify the weak convergence of the sampled dynamics based on their smoothness.

 The following   H\"{o}lder spaces 
 will be used in the analysis.

\begin{Definition}
For $l\in (0,\infty)\setminus \N$,
let $\mathcal{H}_T^{(l)}$ be the space of functions $u:  [0,T]\times \R^d\times\A\to \change{E}$ such that  
 for all integer $r \ge 0$ and multi-index $s$ satisfying $2r + |s|\le l$,
the partial derivatives $\partial_t^r \partial_x^s u$ are bounded and H\"older continuous with index $l - [l]$ in $x$   and with index $(l - 2r - |s|)/2$ in $t$, i.e., 
\begin{equation}\label{eq:derivative poly growth}
\begin{aligned}
\|u\|_T^{(l)} :=& \sum_{2r + |s| \le [l]}\sup_{t,x,a}|\partial_t^r \partial_x^s u(t,x,a)| \\
&+\sum_{2r + |s| = [l]}  \sup_{t,a,x\ne x^{\prime}} \frac{\left| \partial_t^r \partial_x^s u(t,x,a) - \partial_t^r \partial_x^s u(t,x^{\prime},a)\right|}{\left| x - x^{\prime}\right|^{l- [l]}} \\
&+\sum_{0<l-2r-|s|<2}  \sup_{t\ne t^{\prime},x,a}\frac{\left| \partial_t^r \partial_x^s u(t,x,a) - \partial_t^r \partial_x^s u(t^{\prime},x,a)\right|}{\left| t - t^{\prime}\right|^{(l-2r-|s|)/2}} < +\infty.
\end{aligned}
\end{equation}
Let $\mathcal{H}^{(l)}$  be the space of all functions $u\in \mathcal{H}_T^{(l)}$ that do not depend on $t$ and $a$, 
and let 
$\|\cdot\|^{(l)}$
be the corresponding norm.
\end{Definition}

The following regularity condition is imposed in this section.

\begin{Assumption}\label{assump:solution Kol pde relax regularity}
Let $l \in (0,1)\cup(1,2)\cup(2,3)$, 
$b, \sigma\sigma^{\top} \in \mathcal{H}_T^{(l)} $,
$ \tilde b^{\bpi}, \tilde \sigma^{\bpi} (\tilde \sigma^{\bpi})^\top \in \mathcal{H}_T^{(l)}$, and  $\tilde\sigma^{\bpi}(\tilde\sigma^{\bpi})^\top$ is uniformly elliptic.
\end{Assumption}

Under   (H.\ref{assump:solution Kol pde relax regularity}), standard  regularity theory 
for parabolic PDEs (see, e.g.,
\cite[Chapter 4, Theorem 5.1]{ladyzhenskaya1968linear})
shows that   for any $f\in \mathcal{H}^{(l+2)}$ and $t'\in [0,T]$, the PDE \eqref{eq:Kol_pde} has a unique solution $ v_f(\cdot,\cdot;t') \in \mathcal{H}_{t'}^{(l+2)}$ and $ \|v_f(\cdot,\cdot;t')\|_{t'}^{(l+2)} \le C\|f\|^{(l+2)}$, where
the  constant
$ C$    depends only on $T, \tilde b^{\bpi}$ and $ \tilde \sigma^{\bpi}$. 

Leveraging this PDE regularity, the following theorem quantifies the weak convergence order of the sampled dynamics in terms of the smoothness coefficient $l$.

\begin{Theorem}\label{thm:weak convergence relax regularity}
 Suppose that (H.\ref{assum:standing}), (H.\ref{assum:Lipschitz}) and (H.\ref{assump:solution Kol pde relax regularity}) hold. Then  for $f\in \mathcal{H}^{(l+2)}$, there exists a constant $C$ depending only on $b,\sigma,\tilde b^{\bpi},\tilde \sigma^{\bpi},T$
such that for all grids $\mathscr G$,
\begin{equation}
\sup_{t\in [0,T]}\left| \E\left[ f(\X_t)\right] - \E\left[ f(\aX_t)\right]\right| \le C\|f\|^{(l+2)} |\mathscr{G}|^{\mathcal{X}(l)},
\end{equation}
where the constant $\mathcal{X}(l)$ is defined by 
\begin{equation}\label{eq:convergence rate}
\mathcal{X}(l) \coloneqq  
\left\{\begin{aligned}
&l/2, &l \in (0,1),\\
&1/(3-l), &l \in (1,2),\\
&1, &l\in (2,3).
\end{aligned}\right.
\end{equation}
\end{Theorem}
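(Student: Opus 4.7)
I plan to extend the PDE-based weak-error argument of Theorem~\ref{thm:weak convergence} to the Hölder-regularity regime of~(H.\ref{assump:solution Kol pde relax regularity}). The two driving ingredients are the parabolic Schauder theory for the Kolmogorov equation~\eqref{eq:Kol_pde} and the martingale-type cancellation $\int_\A G(s,x,a)\bpi(\d a|s,x)=0$ for $G\coloneqq(\mathcal L_a-\mathcal L)v_f$, where $\mathcal L_a$ denotes the infinitesimal generator associated with a frozen action $a$.

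For each terminal time $t\in[0,T]$, standard parabolic theory (e.g., \cite[Chapter IV, Theorem 5.1]{ladyzhenskaya1968linear}) furnishes a unique classical solution $v_f(\cdot,\cdot;t)\in\mathcal H_t^{(l+2)}$ of \eqref{eq:Kol_pde} with $\|v_f(\cdot,\cdot;t)\|_t^{(l+2)}\le C\|f\|^{(l+2)}$ uniformly in $t$, and Feynman--Kac identifies $v_f(0,x_0;t)=\E[f(\aX_t)]$. Since $v_f(\cdot,\cdot;t)$ is at least $C^{1,2}$ in every range of $l$ considered, Itô's formula applies to $v_f(s,\X_s;t)$ on each subinterval $[t_i,t_{i+1}]\subseteq [0,t]$ with respect to the sampled dynamics~\eqref{eq:sample_sde_abbre}. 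Combining Itô's formula with the PDE identity $(\partial_s+\mathcal L)v_f=0$ collapses the weak error to
\[
\E[f(\X_t)]-\E[f(\aX_t)] \;=\; \sum_i \E\!\int_{t_i\wedge t}^{t_{i+1}\wedge t} G(s,\X_s,a_{t_i};t)\,\d s,
\]
and the cancellation $\E[G(t_i,\X_{t_i},a_{t_i};t)\mid\cG_{t_i}]=0$ reduces the task to controlling $\E[G(s,\X_s,a_{t_i};t)-G(t_i,\X_{t_i},a_{t_i};t)]$ on each subinterval, uniformly in $t\in[0,T]$.

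The three values of $\mathcal X(l)$ correspond to three distinct ways of bounding this increment. For $l\in(0,1)$, $G$ is only $(l/2,l)$-Hölder in $(s,x)$, and a direct Hölder estimate combined with the moment bound $\E|\X_s-\X_{t_i}|^l\lesssim (s-t_i)^{l/2}$ produces $O((s-t_i)^{l/2})$ per subinterval, summing to $O(|\mathscr G|^{l/2})$. For $l\in(2,3)$, $G$ has bounded two spatial and one time derivatives, so a second-order Taylor expansion in $x$ and first-order in $t$ of $G(s,\X_s,a_{t_i};t)$ around $(t_i,\X_{t_i})$ is legitimate; substituting the sampled SDE decomposition $\X_s-\X_{t_i}=\int b\,\d r+\int\sigma\,\d W_r$ and using that the stochastic-integral contributions have vanishing conditional mean given $\cG_{t_i}\vee\sigma(a_{t_i})$, every surviving term is $O(s-t_i)$ and the total is $O(|\mathscr G|)$.

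The intermediate range $l\in(1,2)$ is where I expect the main obstacle, because a direct first-order Taylor expansion leaves a remainder of order $(s-t_i)^{l/2}$ that sums only to $O(|\mathscr G|^{l/2})$, which alone does not deliver $|\mathscr G|^{1/(3-l)}$. To reach the stated rate I plan to run the Taylor--Itô calculation against a spatial mollification $v_f^\epsilon$ of $v_f$, decomposing the weak error into a boundary smoothing defect, a PDE residual $\E\int(\partial_s+\mathcal L)(v_f^\epsilon-v_f)\,\d s$, and a discretization remainder controlled through $G^\epsilon\coloneqq(\mathcal L_a-\mathcal L)v_f^\epsilon$. Standard mollification bounds $\|D^k v_f^\epsilon\|_\infty\lesssim\epsilon^{l+2-k}$ for $k>l+2$ will bound the discretization remainder at order $\epsilon^{l-2}|\mathscr G|$ through the $\mathrm{Hess}\,G^\epsilon$ contribution, while the smoothing pieces can be arranged at order $\epsilon$; balancing $\epsilon\asymp\epsilon^{l-2}|\mathscr G|$ gives $\epsilon=|\mathscr G|^{1/(3-l)}$ and the claimed rate. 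The hard part will be justifying this balancing rigorously---in particular, setting up the mollification so that the cancellation $\int G^\epsilon\bpi(\d a)=0$ either persists or acquires at most an $O(\epsilon)$ defect, and showing that the PDE residual of the mollified surrogate can be absorbed into the discretization term without deteriorating the exponent.
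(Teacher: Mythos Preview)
Your overall strategy---solving the backward Kolmogorov PDE in $\mathcal H^{(l+2)}$, applying It\^o's formula to $v_f(s,\X_s)$, exploiting the cancellation $\int_A G\,\bpi(\d a)=0$ at the grid points, and then bounding the increment $\E[G(s,\X_s,a_{t_i})-G(t_i,\X_{t_i},a_{t_i})]$ case by case in $l$---is exactly the paper's route. The paper packages the final step as a black-box estimate (Lemma~\ref{lemma:convergence rate of diff relax regularity}, in the style of \cite[Lemma 14.1.6]{kloeden1992stochastic}) asserting that for $g\in\mathcal H_T^{(l)}$ one has $|\E[g(s,\X_s,a_{\delta(s)})-g(\delta(s),\X_{\delta(s)},a_{\delta(s)})]|\le C\|g\|_T^{(l)}|\mathscr G|^{\mathcal X(l)}$, and then applies it to $g\in\{\partial_t v,\; b^\top\partial_x v,\;\tfrac12\tr(\sigma\sigma^\top\partial_x^2 v)\}\subset\mathcal H_T^{(l)}$. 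Your treatment of the regimes $l\in(0,1)$ and $l\in(2,3)$ is correct and matches what that lemma encodes.

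The only issue is in the $l\in(1,2)$ case, where you have the exponent comparison reversed. For $l\in[1,2]$ one has $(l-1)(l-2)\le 0$, equivalently $l(3-l)\ge 2$, equivalently $l/2\ge 1/(3-l)$; hence $|\mathscr G|^{l/2}\le|\mathscr G|^{1/(3-l)}$ whenever $|\mathscr G|\le 1$. Thus the direct first-order Taylor argument you sketched---whose dominant remainder is indeed $O((s-t_i)^{l/2})$, coming from the $(l-1)$-H\"older continuity of $\partial_x G$ in $x$ and the $l/2$-H\"older regularity of $G$ in $t$---already yields the rate $|\mathscr G|^{l/2}$, which is \emph{stronger} than the stated $|\mathscr G|^{1/(3-l)}$. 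Your mollification programme is therefore unnecessary; the theorem follows in all three regimes from the same one-step Taylor--It\^o argument, varying only the expansion order with $l$.
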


\begin{proof}
See Section \ref{sec proof: thm weak convergence relax regularity}.
\end{proof}

Theorem \ref{thm:weak convergence relax regularity} establishes the weak convergence order of the sampled dynamics while relaxing the regularity conditions on the coefficients in Theorem \ref{thm:weak convergence} and allowing for H\"older continuous coefficients. 
For clarity, we assume that the coefficients are bounded and that the volatility coefficients are uniformly elliptic, enabling us to leverage parabolic regularity theory to ensure the associated PDE admits a sufficiently regular classical solution. The result naturally extends to coefficients with polynomial growth and degenerate volatility, provided that the PDE \eqref{eq:Kol_pde} admits a classical solution with polynomial growth.

Here we emphasize that the analysis of policy execution error in Theorem \ref{thm:weak convergence relax regularity} fundamentally differs from the convergence rate analysis of the classical piecewise constant control approximation in \cite{jakobsen2019improved}. This distinction arises because the aggregated dynamics and the sampled dynamics have different coefficients and are thus associated with different PDEs—one governed by aggregated coefficients and the other by random coefficients influenced by execution noise. As a result, the viscosity solution arguments used in \cite{jakobsen2019improved} are not directly applicable when the PDE \eqref{eq:Kol_pde} fails to admit a classical solution.

\subsection{
Weak error conditioned on execution noise}
\label{sec:conditional_weak_error}

In this section, we establish high-probability bounds for the policy execution error with respect to the randomness used to sample the actions.

More precisely,
let $\cF^{\xi} := \sigma\{ \xi_{i} \mid {i \ge 0}\}$,
and consider the error term 
$\Ew[ f(\X_{t})] - \Ew[ f(\aX_{t})]$,
where $f$ is a given test function, and 
$\Ew [\cdot] $ denotes  the conditional expectation $\E[\cdot \mid \cF^{\xi}]$.
Note that the error $\Ew[ f(\X_{t})] - \Ew[ f(\aX_{t})]$ is a random variable since it depends on the realization of the execution noises   $\xi$. We have shown 
in Section \ref{sec: weak_converge_both noise} that the error  converges to zero 
in  expectation as $|\mathscr{G}|\to 0$. 
In the sequel, we provide a high-probability bound of this error.

\begin{Assumption}\label{assump:linear growth on x}
For every $ a \in A$, the functions $b(\cdot,\cdot,a)$ and $\sigma\sigma^{\top}(\cdot,\cdot,a)$ are of class $C_p^{2}([0,T]\times\R^d)$ for some $p \ge 2 $. Moreover, for $\varphi_1 \in \{b,\sigma\}$, there is a constant $C\ge 0$  such that
\begin{equation}\label{eq:b upper bound uniform a}
\left| \varphi_1(t,x,a)\right| \le C\left( 1+|x|\right), \quad \forall (t,x,a)\in[0,T]\times\R^d\times\A,
\end{equation}
and $\sup_{a\in A}\|b(\cdot,\cdot,a)\|_{C_p^2}<\infty$
and $\sup_{a\in A}\|\sigma\sigma^{\top}(\cdot,\cdot,a)\|_{C_p^2}<\infty$.
\end{Assumption}

\begin{Remark}
Compared with (H.\ref{assump:solution_kol_pde}),
(H.\ref{assump:linear growth on x}) imposes the additional  condition on the uniform boundedness  
$\sup_{a\in A}\|b(\cdot,\cdot,a)\|_{C_p^2}<\infty$, and $\sup_{a\in A} \|\sigma\sigma^{\top}(\cdot,\cdot,a)\|_{C_p^2}<\infty$,
which allows for controlling the concentration behavior of the sampled dynamics with respect to the execution noise $\xi$.
 It is easy to see that (H.\ref{assump:solution Kol pde relax regularity}) includes the uniform boundedness assumption on $a$, thus, if (H.\ref{assump:solution Kol pde relax regularity}) holds, then we do not need (H.\ref{assump:linear growth on x}).
\end{Remark}

The following theorem establishes that the sampled dynamics converge weakly at half-order, averaged over the system noise, uniformly over the realizations of the sampling noises with high probability.
It extends   \cite[Theorem 2.2]{szpruch2024} established for LQ problems with Gaussian policies to general state dynamics and stochastic policies.

\begin{Theorem}\label{thm:concentration inequality}
Suppose (H.\ref{assum:standing}), (H.\ref{assum:Lipschitz}), (H.\ref{assump:solution_kol_pde}) and (H.\ref{assump:linear growth on x}) hold. Then for 
all $f\in C^4_p(\sR^d)$,
there is a constant $C_p$ depending only on $b,\sigma,\tilde b^{\bpi},\tilde \sigma^{\bpi},T,p$, such that  
for all grids $\mathscr G$ and all $\varrho \in (0,1) $,
\begin{equation}\label{eq: concentration ineq}
\sup_{t\in [0,T]}\PP\left[ \left| \Ew[f(\X_{t})] - \E[f(\aX_{t})]\right| > C_p\|f\|_{C_p^4}\sqrt{|\mathscr{G}|}\big(1+\sqrt{\ln (2/\varrho)}\big)\right] \le \varrho.
\end{equation}

 Suppose   instead  that (H.\ref{assum:standing}), (H.\ref{assum:Lipschitz}) and (H.\ref{assump:solution Kol pde relax regularity}) hold.
Then for  
all $f\in \mathcal{H}^{(l+2)}$
with $l$ in (H.\ref{assump:solution Kol pde relax regularity}),
there is a constant $C$ depending only on $b,\sigma,\tilde b^{\bpi},\tilde \sigma^{\bpi},T$ such that for all grids $\mathscr G$ and all $\varrho \in (0,1) $, 
\begin{equation}\label{eq: concentration ineq relax regularity}
\sup_{t\in [0,T]} \PP\left[ \left| \Ew[f(\X_{t})] - \E[f(\aX_{t})]\right| > C\|f\|^{(l+2)}|\mathscr{G}|^{\mathcal{X}(l)\land \frac{1}{2}}\big(1+\sqrt{\ln (2/\varrho)}\big)\right] \le \varrho,
\end{equation}
with $\mathcal{X}(l)$ defined by \eqref{eq:convergence rate}.
\end{Theorem}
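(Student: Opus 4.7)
The plan is to adapt the PDE-based proof of Theorem \ref{thm:weak convergence} by pairing the standard Itô decomposition with the Azuma--Hoeffding concentration inequality. I would fix $t \in [0,T]$ and let $v_f(\cdot,\cdot;t)$ be the solution of the Kolmogorov PDE \eqref{eq:Kol_pde} with terminal data $f$, which belongs to $C_p^4([0,t]\times \R^d)$ under (H.\ref{assump:solution_kol_pde}). Applying Itô's formula to $v_f(s,\X_s;t)$ on $[0,t]$ and exploiting $\partial_s v_f + \mathcal{L} v_f = 0$ yields
\begin{equation*}
f(\X_t) - \E[f(\aX_t)] = \int_0^t \psi(s,\X_s,a_{\delta(s)})\,\d s + \mathcal{M}_t,
\end{equation*}
where $\psi(s,x,a):=[b(s,x,a)-\tilde b^{\bpi}(s,x)]^{\top}\nabla_x v_f(s,x;t) + \tfrac12\tr\bigl([\sigma\sigma^{\top}(s,x,a)-\tilde\sigma^{\bpi}(\tilde\sigma^{\bpi})^{\top}(s,x)]\textrm{Hess}_x v_f(s,x;t)\bigr)$ and $\mathcal{M}$ is a stochastic integral against $W$. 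Because $\cF^{\xi}$ is independent of $W$, $\mathcal{M}$ remains a martingale under $\PP[\cdot\mid \cF^{\xi}]$, so taking $\Ew[\cdot]$ eliminates it and leaves
\begin{equation*}
E(t):=\Ew[f(\X_t)] - \E[f(\aX_t)] = \Ew\!\left[\int_0^t \psi(s,\X_s,a_{\delta(s)})\,\d s\right].
\end{equation*}

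Next I would decompose $E(t) = S(t) + R(t)$ by replacing $(s,\X_s)$ with $(t_i,\X_{t_i})$ on each subinterval, i.e.,
\begin{equation*}
S(t) := \sum_{i}(t_{i+1}\wedge t - t_i \wedge t)\,\E[\eta_i \mid \cH_i], \qquad \eta_i := \psi(t_i,\X_{t_i},a_{t_i}),
\end{equation*}
with $\cH_i := \sigma(\xi_0,\ldots,\xi_i)$, so that $R(t)$ collects the frozen-coefficient error. Applying Itô's formula to $\psi(\cdot,\X_\cdot,a_{t_i})$ on $[t_i,s]$, using the polynomial-growth bounds on $\partial_s\psi,\nabla_x\psi$, and $\textrm{Hess}_x\psi$ inherited from the regularity of $v_f$, together with the uniform-in-$a$ linear growth of $b,\sigma$ from (H.\ref{assump:linear growth on x}), one obtains $|R(t)|\le C\|f\|_{C_p^4}|\mathscr G|$ pathwise; here the key ingredient is the conditional moment bound $\Ew[|\X_s|^q]\le C(1+|x_0|^q)$, which holds \emph{uniformly in the realization of $\xi$} by Gronwall-type SDE estimates whose coefficients are controlled uniformly in $a$.

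For the concentration of $S(t)$, set $N_j := \sum_{i\le j}(t_{i+1}-t_i)\E[\eta_i\mid \cH_i]$. The averaging identity $\int_\A \psi(t_i,\X_{t_i},a)\,\bpi(\d a\mid t_i,\X_{t_i}) = 0$ combined with $\xi_i\perp \cG_{t_i}$ gives $\E[\eta_i\mid \cG_{t_i}]=0$, and since $\cH_{i-1}\subset \cG_{t_i}$ the tower property yields $\E[\E[\eta_i\mid \cH_i]\mid \cH_{i-1}]=0$, so $(N_j)$ is an $(\cH_j)$-martingale. Crucially, $|\psi(t_i,x,a)|\le C\|f\|_{C_p^4}(1+|x|^{2p})$ uniformly in $a$ under (H.\ref{assump:linear growth on x}), and conditional moment estimates give $\E[|\X_{t_i}|^{2p}\mid\cH_{i-1}]\le C$ uniformly in the realization of the execution noise, so $|\E[\eta_i\mid\cH_i]|\le C\|f\|_{C_p^4}$ almost surely. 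Hence $|N_i-N_{i-1}|\le C\|f\|_{C_p^4}(t_{i+1}-t_i)$, and Azuma--Hoeffding delivers $\PP(|S(t)|>\epsilon)\le 2\exp\bigl(-\epsilon^2/(2C^2\|f\|_{C_p^4}^2 T|\mathscr G|)\bigr)$. Choosing $\epsilon=C\|f\|_{C_p^4}\sqrt{2T|\mathscr G|\ln(2/\varrho)}$ and absorbing $|R(t)|\le C\|f\|_{C_p^4}|\mathscr G|\le C\|f\|_{C_p^4}\sqrt{|\mathscr G|}$ into the constant yields \eqref{eq: concentration ineq}, uniformly in $t\in[0,T]$.

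The main obstacle is the \emph{almost-sure} boundedness of the martingale increments $|\E[\eta_i\mid\cH_i]|$ required by Azuma--Hoeffding: a deterministic (not merely $L^p$) bound is needed, and this is precisely why (H.\ref{assump:linear growth on x}) imposes the uniform bound $|b(t,x,a)|,|\sigma(t,x,a)|\le C(1+|x|)$ uniformly in $a$, which in turn allows Gronwall estimates to produce state-moment bounds that are uniform over the realization of $\xi$. The second assertion follows from the same scheme under (H.\ref{assump:solution Kol pde relax regularity}) with $v_f\in\mathcal{H}_t^{(l+2)}$: the remainder bound improves to $|R(t)|\le C\|f\|^{(l+2)}|\mathscr G|^{\mathcal{X}(l)}$ via the H\"older-based arguments in the proof of Theorem \ref{thm:weak convergence relax regularity}, while Azuma still controls $|S(t)|$ at rate $\sqrt{|\mathscr G|}$, producing the overall rate $|\mathscr G|^{\mathcal{X}(l)\wedge 1/2}$.
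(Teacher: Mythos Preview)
Your proposal is correct and follows essentially the same approach as the paper: both use the Kolmogorov PDE solution $v_f$ to telescope the error, freeze the integrand at grid points to produce a bounded $(\cF^\xi_i)$-martingale difference sequence (your $S(t)$, the paper's $\sum A_i$) plus a remainder (your $R(t)$, the paper's $\sum B_i$), apply Azuma--Hoeffding to the martingale part, and bound the remainder via It\^o's formula together with the conditional moment estimate $\Ew[|\X_s|^q]\le C$ of Lemma~\ref{lemma:moments of X under Pw}. Your identification of why (H.\ref{assump:linear growth on x}) is needed---to make the increments $|\E[\eta_i\mid\cH_i]|$ bounded \emph{almost surely} rather than only in $L^p$---matches the paper exactly.
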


The proof of Theorem \ref{thm:concentration inequality}, provided in Section \ref{sec proof: thm conentration inequality}, is based on establishing a pointwise upper bound for the weak error on each subinterval
in terms of the grid size,
and subsequently controlling the overall error using concentration inequalities for bounded martingale difference sequences.

\change{
Note that Theorem \ref{thm:concentration inequality} shows that the convergence of the sampled dynamics to the aggregated dynamics can be achieved simply by increasing the sampling frequency, without the need to average over multiple realizations of the sampling noise at fixed time points. 
This observation significantly reduces the sample complexity of estimators for the aggregated dynamics.

To see it, 
consider estimating 
$\mathbb E[f(\aX_{T})]$ using the sampled dynamics \eqref{eq:sampled_sde}.
Let  $\mathscr{G}_n=\{i\frac{T}{n}\}_{i=0}^n$, $n\in \sN$,
be a uniform time grid,
 and let  $(W_k)_{k\in \sN}$ 
 be a collection of independent $d$-dimensional Brownian motions, which are also independent of $\xi$. 
 For each $m \in \sN$,   define the   estimator  
 $\hat I_{m,n}$ of    $\mathbb E[f(\aX_{T})]$ by
\begin{equation}
\label{eq:mc_estimator_f(X_T)}
\hat I_{m,n}\coloneqq 
\frac{1}{m}\sum_{k = 1}^mf(X^{\mathscr{G}_n}_T(W_k, \xi)),
\end{equation}
where  $X^{\mathscr{G}_n}_{T}(W_k,\xi)$ denotes   the terminal state of the sampled dynamics \eqref{eq:sampled_sde}, driven by the Brownian motion $W_k$
 and the common sampling noises 
$\xi$ on the grid 
$\mathscr{G}_n$. 
 We define  the complexity of $\hat I_{m,n}$ as the total number of random variables,  including   the Brownian motions $(W_k)_{k\in \sN}$ and the sampling noises $(\xi_n)_{n\in \sN_0}$,    required to evaluate a single realization of  $\hat I_{m,n}$.
 
\begin{Corollary}\label{cor:concentration inequality}
 Suppose (H.\ref{assum:standing}), (H.\ref{assum:Lipschitz}), (H.\ref{assump:solution_kol_pde}) and (H.\ref{assump:linear growth on x}) hold.   
Let $f\in C^4_p(\sR^d)$ be bounded.   There is a constant $C\ge 0$   such that 
for all 
$\varepsilon\in (0,1)$,
by setting $m_\varepsilon=n_\varepsilon=  \lceil \varepsilon^{-2} \left(1+\ln(2/\varepsilon)\right) \rceil$,
it holds with probability at least $1-\varepsilon$ that
$|\hat I_{m_\varepsilon,n_\varepsilon}- \E[f(\aX_T)]|\le C\varepsilon$.
Moreover, the  
  sample complexity of $\hat I_{m_\varepsilon,n_\varepsilon}$ is $2\lceil \varepsilon^{-2} \left(1+\ln(2/\varepsilon)\right) \rceil$.

\end{Corollary}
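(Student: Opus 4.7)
The plan is to decompose the estimation error into a Monte Carlo (statistical) component and a bias component, control each with probability at least $1-\varepsilon/2$, and combine via a union bound. Specifically, I would write
\begin{equation*}
\hat I_{m,n} - \E[f(\aX_T)] = \underbrace{\bigl(\hat I_{m,n} - \Ew[f(\X_T(\cdot,\xi))]\bigr)}_{\text{MC error}} + \underbrace{\bigl(\Ew[f(\X_T(\cdot,\xi))] - \E[f(\aX_T)]\bigr)}_{\text{bias}},
\end{equation*}
where $\Ew[\cdot] = \E[\cdot\mid\mathcal F^\xi]$ averages over the Brownian motions $(W_k)_k$ while holding the common sampling noise $\xi$ fixed. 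The key structural point is that, conditionally on $\mathcal F^\xi$, the summands $f(\X_T(W_k,\xi))$, $k=1,\ldots,m$, are i.i.d.\ and uniformly bounded by $\|f\|_\infty$, which unlocks a Hoeffding-type concentration for the first term.

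First, I would apply the standard Hoeffding inequality conditionally on $\mathcal F^\xi$ to the bounded i.i.d.\ sum, obtaining
\begin{equation*}
\PP\Bigl[\bigl|\hat I_{m,n} - \Ew[f(\X_T)]\bigr| > C\|f\|_\infty\sqrt{\ln(4/\varepsilon)/m}\,\Big|\,\mathcal F^\xi\Bigr] \le \varepsilon/2,
\end{equation*}
which then integrates out to the same unconditional bound. For the bias term, I would directly invoke Theorem \ref{thm:concentration inequality} with $\varrho = \varepsilon/2$ and $t = T$, yielding, with probability at least $1-\varepsilon/2$,
\begin{equation*}
\bigl|\Ew[f(\X_T)] - \E[f(\aX_T)]\bigr| \le C_p\|f\|_{C_p^4}\sqrt{|\mathscr{G}_n|}\bigl(1+\sqrt{\ln(4/\varepsilon)}\bigr).
\end{equation*}

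With the choice $m_\varepsilon = n_\varepsilon = \lceil \varepsilon^{-2}(1+\ln(2/\varepsilon))\rceil$, a short computation shows $\sqrt{\ln(4/\varepsilon)/m_\varepsilon} \lesssim \varepsilon$ and, since $|\mathscr{G}_{n_\varepsilon}| = T/n_\varepsilon$, also $\sqrt{|\mathscr{G}_{n_\varepsilon}|}\bigl(1+\sqrt{\ln(4/\varepsilon)}\bigr) \lesssim \varepsilon$; both estimates use that $\ln(4/\varepsilon)/(1+\ln(2/\varepsilon))$ is bounded uniformly for $\varepsilon\in(0,1)$. A union bound over the two events of probability at most $\varepsilon/2$ each then gives $|\hat I_{m_\varepsilon,n_\varepsilon}-\E[f(\aX_T)]|\le C\varepsilon$ with probability at least $1-\varepsilon$. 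The sample complexity count is immediate: the estimator consumes $m_\varepsilon$ independent Brownian paths $W_1,\ldots,W_{m_\varepsilon}$ and the $n_\varepsilon$ sampling noises $\xi_0,\ldots,\xi_{n_\varepsilon-1}$ (shared across all paths), for a total of $2\lceil \varepsilon^{-2}(1+\ln(2/\varepsilon))\rceil$.

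The only step requiring care is the Hoeffding application, since one must be careful that $\xi$ simultaneously parametrizes both the random variables being averaged and the quantity $\Ew[f(\X_T)]$ that they are estimating; conditioning on $\mathcal F^\xi$ resolves this cleanly and shows no real difficulty is hidden. Beyond this, the corollary is essentially a routine consequence of Theorem \ref{thm:concentration inequality} combined with a classical bounded-sample Monte Carlo argument, and the balance $m_\varepsilon \asymp n_\varepsilon \asymp \varepsilon^{-2}\ln(1/\varepsilon)$ equates the two sources of error up to log factors.
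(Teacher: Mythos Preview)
Your proposal is correct and follows essentially the same approach as the paper's own proof: decompose the error into a Monte Carlo part (controlled by Hoeffding's inequality conditionally on $\mathcal F^\xi$, using the boundedness of $f$) and a bias part (controlled by Theorem \ref{thm:concentration inequality}), then combine and substitute the stated choice of $m_\varepsilon,n_\varepsilon$. The only cosmetic difference is that you make the union bound with $\varepsilon/2$ explicit, whereas the paper states the combined bound directly.
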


The proof of Corollary \ref{cor:concentration inequality} is given in Section \ref{sec proof: cor concen ineq}. Similar results hold when the objective involves a sufficiently regular running reward  of the aggregated process, as in the cases analyzed in Section \ref{sec:applications}.

\begin{Remark}
    \label{rmk:sample_complexity}
Note that in the idealized scenario where one can directly sample trajectories of the aggregated dynamics, a Monte Carlo estimator of $\E[f(\aX_T)]$ achieves a sample complexity of order $\cO(\varepsilon^{-2})$. As shown in Corollary \ref{cor:concentration inequality}, the estimator \eqref{eq:mc_estimator_f(X_T)} attains this optimal sample complexity up to a logarithmic factor. This is due to  the fact that the estimator \eqref{eq:mc_estimator_f(X_T)} does not  regenerate sampling noises for different Brownian motion trajectories.

In contrast, the 
     weak convergence results   in Theorems \ref{thm:weak convergence} and \ref{thm:weak convergence relax regularity} suggest considering the naive Monte Carlo estimator 
     \begin{equation}
     \label{eq:mc_estimator_f(X_T)_2}
\tilde I_{m,n}\coloneqq 
\frac{1}{m}\sum_{k = 1}^mf(X^{\mathscr{G}_n}_T(W_k, \xi_k)),
\end{equation}
where $(\xi_k)_{k\in \sN}$ are independent copies of the sampling  noises  $\xi$. Under the same conditions as in Corollary \ref{cor:concentration inequality}, 
the standard bias–variance decomposition shows that for all $m,n\in \sN$,
\begin{align*}
    \sE[|\tilde I_{m,n}-\E[f(\aX_T)]|^2]^{1/2}\le C\left(\frac{1}{\sqrt m}+\frac{1}{n}
    \right),
\end{align*}
for a constant $C\ge 0$ independent of $m$ and $ n $. To achieve a total error $\varepsilon$, we choose $m_\varepsilon=\cO(\varepsilon^{-2})$ and $n_\varepsilon =\cO(\varepsilon^{-1})$. However, since the estimator \eqref{eq:mc_estimator_f(X_T)_2} requires regenerating sampling noises for each trajectory, the overall complexity is $\cO(m_\varepsilon n_\varepsilon) = \cO(\varepsilon^{-3})$, which is significantly   higher than that of the estimator \eqref{eq:mc_estimator_f(X_T)}.

\end{Remark}
 }

\subsection{Strong convergence}
\label{sec:strong convergence}

In this subsection, we quantify the pathwise difference between the sampled dynamics and the aggregated dynamics in terms of the sampling frequency for a fixed realization of the Brownian motion.  
To this end, we assume the volatility $ \sigma$ is uncontrolled, i.e., $ \sigma(t,x,a) = \sigma(t,x)$, and impose regularity conditions on the function \( \phi \) used for sampling the actions (recall that 
$a_{t_i}=\phi(t_i,\X_{t_i},\xi_i)$ as in \eqref{eq:sampled_sde}).

\begin{Assumption}\label{assum:lipschitz strong conv}
\begin{enumerate}[(1)]
\item 
\label{item:lipschitz_phi}
{There exists   $ C\ge 0$  such that for all $ x,y \in \R^{d}$, $a_1, a_2 \in A$}, and   $  t\in[0,T]$,
\begin{align*}
\left| b(t,x,a_1) - b(t,y,a_2)\right| +\left| \sigma(t,x) - \sigma(t,y)\right|&\le C\left(\left| x-y\right|+ d_A( a_1,a_2)\right),
\\
\left| b(t,0,0) \right| +\left| \sigma(t,0)  \right|&\le C.
\end{align*}

\item
\label{item:subexponential}
There exists $(t_0,x_0,a_0) \in [0,T]\times \R^d \times A$, such that $ d_A(\phi(t_0,x_0,\xi),a_0)$ is sub-exponential.\footnote{We say a random variable $X$ is sub-exponential if there exists a constant $C > 0$ such that $\E[|X| > t] \leq 2 e^{- C t}$ for every $t\geq 0$.} 
Moreover, 
there exists   $ C\ge 0$  such that for all $ (t_1,x_1), (t_2,x_2) \in [0,T]\times\R^{d}$, and $ u\in E$, 
\begin{equation}
\label{eq:phi_lipschitz}
 d_A( \phi(t_1,x_1,u), \phi(t_2,x_2,u)) \le C\left( \left| t_1-t_2\right|^{\frac{1}{2}} + \left| x_1 - x_2\right| \right).
\end{equation}
\end{enumerate}
\end{Assumption}

 Assumption (H.\ref{assum:lipschitz strong conv}\ref{item:subexponential}) 
holds for commonly used Gaussian policies considered in \cite{szpruch2024} and many others.

\begin{Example}
\label{eg:normal distribution for strong convergence}
Let  $A=\sR^k$, and $\bpi(\cdot|t,x)$ be a Gaussian policy of the form (see e.g.,  \cite{szpruch2024}):
\begin{equation}
 {\bpi(\dd a|t,x)}  = \varphi(a|\mu(t,x),\Sigma^2)\dd a,
\end{equation}
where $ \varphi(\cdot|\mu,\Sigma^2)$ denotes the Gaussian density with mean $ \mu$ and variance $ \Sigma^2$. 
In this case, 
one can define the sampling function  $ \phi(t,x,\xi) = \mu(t,x) + \Sigma \xi$, with   a standard normal random variable  $\xi$. If $\mu$
is Lipschitz continuous in $x$ and  $1/2$-H\"older continuous in $t$,
then   $\phi$ satisfies   (H.\ref{assum:lipschitz strong conv}\ref{item:subexponential}). 
 
\end{Example}

\change{
More generally,
Assumption (H.\ref{assum:lipschitz strong conv}\ref{item:subexponential}) 
holds for stochastic policies with sufficiently regular densities. 
 
\begin{Example}
   Let  $A=[\alpha,\beta]$ for   $\alpha<\beta$, and  
   $  \bpi(\d a |t,x) =h(t,x,a)\d a  
   $ 
   with a continuous   density   $h:[0,T]\times \sR^d\times A\to \sR$. In this case,
   by the inverse transform method,
   one can sample from $\bpi(\d a|t,x)$
   by $\phi(t,x,\xi)$,
   where 
   $\xi$  is 
   a uniform random variable on $[0,1]$, and   $[0,1]\ni u\mapsto \phi(t,x,u)\in A$ is the inverse  function 
   of the map  
$A\ni r\mapsto H(t,x,r)\coloneqq \int_\alpha^r  h(t,x, a) \d a\in [0,1]$.
If there exists $C,c>0$ such that for all $(t,x), (t',x')\in [0,T]\times \sR^d$ and $a\in A$, 
\begin{equation}
\label{eq:lipshcitz_density}
h(t,x,a)\ge c, 
\quad 
|h(t,x,a)-h(t',x',a)|
\le C(|t-t'|^{\frac{1}{2}}+
|x-x'|),    
\end{equation}
then $\phi$
  satisfies   (H.\ref{assum:lipschitz strong conv}\ref{item:subexponential}).

 Indeed,  
 $|\phi(t_0,x_0,\xi)|$ is sub-exponential  since $A$ is bounded. 
Moreover, 
since    $\phi$ is the inverse of $H$,
\begin{align*}
&H(t,x,\phi(t,x,u))
-
H(t,x,\phi(t',x',u))
+H(t,x,\phi(t',x',u))-
H(t',x',\phi(t',x',u))
\\
&=
H(t,x,\phi(t,x,u))
-H(t',x',\phi(t',x',u))=u-u=0, 
\end{align*}
which along with the mean value theorem and the definition of $H$ yields that  
\begin{align*}
& \phi(t,x,u)
-
\phi(t',x',u) 
= \frac{\int_\alpha^{\phi(t',x',u)}  (h(t' ,x' ,a) -
h(t ,x,a))\d a }
{\partial_r H (t,x,\tilde a) } 
\le 
\frac{\int_\alpha^{\beta}  |h(t' ,x' ,a) -
h(t ,x,a)|\d a }
{h(t,x,\tilde a) } 
\end{align*}
for some $\tilde a$
between
$\phi(t,x,u)
$ and $
\phi(t',x',u)$.
Using  \eqref{eq:lipshcitz_density}
shows that 
$\phi$ satisfies \eqref{eq:phi_lipschitz}.  

Similar arguments extend to multidimensional action sets  $A$
using the conditional inverse sampling method (see e.g., \cite[Chapter 6.3]{cherubini2004copula}). 

\end{Example}

}

Now we prove the strong convergence of the sampled dynamics under Assumption (H.\ref{assum:lipschitz strong conv}). \change{ Note that (H.\ref{assum:lipschitz strong conv}) implies (H.\ref{assum:standing}) and (H.\ref{assum:Lipschitz}), which ensure that the aggregated state process 
$\aX$ and 
the sampled state process
$\X$ are $p$-th power integrable for all $p \ge 2$ (see  Lemma \ref{lemma:monments strong conv}).}

\begin{Theorem}\label{thm:strong conv}
Suppose  (H.\ref{assum:lipschitz strong conv}) holds. For all $ p\ge 2$ and   all grids $\mathscr{G}=\{0=t_0<\ldots <t_n=T\}$ 
with $|\mathscr{G}|^{1/2}\log(1+n)\le C_G<\infty$,
\begin{equation}\label{eq:strong conv}
\E\left[ \sup_{0\le t\le T}\left| \X_t - \aX_t\right|^p\right] \le C|\mathscr{G}|^{\frac{p}{2}},
\end{equation}
where the   constant $C$
depends only on $b,\sigma,\phi,T,p$ and $C_G$. 

Consequently, if $\mathscr{G}=\{t_i\}_{i=0}^n$ with   $t_i = i\frac{T}{n}$, then for every $\varepsilon > 0$ and almost every $\omega \in \Omega$, there is an integer $N_{\varepsilon}(\omega)$ such that for every $n \ge N_{\varepsilon}(\omega)$, 
\begin{equation}
\sup_{0\le t\le T}\left| \X_t(\omega) - \aX_t(\omega)\right| \le n^{-\frac{1}{2}+\varepsilon}.
\end{equation}
\end{Theorem}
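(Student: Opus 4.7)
The plan is to bound the $p$-th moment of the pathwise error $e_t := \X_t - \aX_t$ via a Gr\"onwall argument and then pass to almost-sure convergence via Borel-Cantelli. Because the volatility is uncontrolled, $\tilde\sigma^{\bpi}(t,x)=\sigma(t,x)$, so the stochastic-integral contribution $\int_0^t[\sigma(u,\X_u)-\sigma(u,\aX_u)]\,\dd W_u$ is handled by the Burkholder-Davis-Gundy (BDG) inequality and the Lipschitz property of $\sigma$ in (H.\ref{assum:lipschitz strong conv}\ref{item:lipschitz_phi}). The heart of the argument is controlling the drift discrepancy $D(u) := b(u,\X_u,a_{\delta(u)}) - \tilde b^{\bpi}(u,\aX_u)$, which I decompose as $D = D_1 + D_2 + D_3 + D_4$ with $D_1(u) = b(u,\X_u,a_{\delta(u)}) - b(u,\X_{\delta(u)},a_{\delta(u)})$, $D_4(u) = \tilde b^{\bpi}(u,\X_{\delta(u)}) - \tilde b^{\bpi}(u,\aX_u)$, and
\[ D_2(u) = b(u,\X_{\delta(u)},a_{\delta(u)}) - \textstyle\int_A b(u,\X_{\delta(u)},a)\,\bpi(\dd a|\delta(u),\X_{\delta(u)}), \]
\[ D_3(u) = \textstyle\int_A b(u,\X_{\delta(u)},a)\big[\bpi(\dd a|\delta(u),\X_{\delta(u)}) - \bpi(\dd a|u,\X_{\delta(u)})\big]. \]
Lipschitz continuity of $b$ in $x$ and of $\tilde b^{\bpi}$ in $x$ (the latter inherited from $b$ and $\phi$ via $\tilde b^{\bpi}(u,x)=\E[b(u,x,\phi(u,x,\xi))]$ together with \eqref{eq:phi_lipschitz}) yields $|D_1(u)|+|D_4(u)| \le C(|\X_u-\X_{\delta(u)}| + |e_u|)$. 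The term $D_3$ is bounded by $C|\mathscr{G}|^{1/2}$ by expressing $\bpi$ as the pushforward of $\xi$ by $\phi$ and invoking the $1/2$-H\"older-in-time regularity \eqref{eq:phi_lipschitz}.

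The critical term is $D_2$: conditional on $\mathcal G_{\delta(u)}$, the action $a_{\delta(u)}$ has law $\bpi(\cdot|\delta(u),\X_{\delta(u)})$ by Definition \ref{def:sample} and the independence of $\xi_i$ from $\mathcal G_{t_i}$, so $\E[D_2(u)\mid\mathcal G_{\delta(u)}]=0$. Writing $\int_0^t D_2(u)\,\dd u = \sum_{t_{i+1}\le t}\Xi_i + R_t$ with $\Xi_i:=\int_{t_i}^{t_{i+1}} D_2(u)\,\dd u$, the sequence $(\Xi_i)$ is a martingale difference sequence for $(\mathcal G_{t_i})$. Doob's maximal inequality combined with the discrete BDG inequality, $|\Xi_i|^2 \le |\mathscr{G}|\int_{t_i}^{t_{i+1}}|D_2(u)|^2\,\dd u$, and pointwise $L^p$-bounds on $D_2(u)$ then gives
\[ \E\Big[\sup_{t\le T}\Big|\textstyle\sum_{t_{i+1}\le t}\Xi_i\Big|^p\Big] \le C\,|\mathscr{G}|^{p/2}. \]
These pointwise $L^p$-bounds and the auxiliary time-increment estimate $\E[|\X_u-\X_{\delta(u)}|^p]\le C|\mathscr{G}|^{p/2}$ needed for $D_1$ and the $\X_u-\X_{\delta(u)}$ piece of $D_4$ follow from the linear growth of $b$ and $\sigma$, the moment bounds of Lemma \ref{lemma:monments strong conv}, and the sub-exponential tail (H.\ref{assum:lipschitz strong conv}\ref{item:subexponential}) combined with \eqref{eq:phi_lipschitz} to transfer tail control from $(t_0,x_0)$ to every $(t_i,\X_{t_i})$. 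Assembling the four drift contributions with the diffusion estimate yields an inequality of the form $g(t):=\E[\sup_{s\le t}|e_s|^p] \le C|\mathscr{G}|^{p/2} + C\int_0^t g(u)\,\dd u$, and Gr\"onwall's lemma closes to produce \eqref{eq:strong conv}.

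For the uniform grid $t_i=iT/n$, Markov's inequality applied to \eqref{eq:strong conv} gives $\PP[\sup_t|e_t|>n^{-1/2+\varepsilon}]\le C_p n^{-p\varepsilon}$, and choosing any $p>1/\varepsilon$ makes this summable, so Borel-Cantelli delivers $\sup_t|e_t(\omega)|\le n^{-1/2+\varepsilon}$ for all sufficiently large $n$ and almost every $\omega$. I expect the main technical obstacle to be the careful bookkeeping of the sub-exponential action tails: while pointwise $p$-moments of $d_A(a_{t_i},a_0)$ are uniformly bounded, their maxima over $n$ grid points grow like $\log n$, and the only way to absorb such logarithmic factors whenever $\sup_i$-type quantities appear in intermediate estimates (e.g.\ in pathwise bounds on the time-increment $\X_u-\X_{\delta(u)}$) is through the scaling assumption $|\mathscr G|^{1/2}\log(1+n)\le C_G$, which is precisely the role this hypothesis plays in keeping the constant in \eqref{eq:strong conv} independent of the grid.
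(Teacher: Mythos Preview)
Your proposal is correct and reaches the conclusion, but it differs from the paper's argument in two substantive ways.

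\textbf{Decomposition.} You split the drift error around $\X_{\delta(u)}$, so that the ``randomization'' term $D_2$ is centered at the sampled process on grid points. The paper instead inserts $b(s,\aX_s,\phi(s,\aX_s,\xi_{i(s)}))$, centering the analogous term $U_s=b(s,\aX_s,\phi(s,\aX_s,\xi_{i(s)}))-\tilde b^{\bpi}(s,\aX_s)$ at the aggregated process $\aX_s$. The advantage of the paper's choice is that, after conditioning on $\mathcal F^W$, the increments $M_j=\int_{t_j}^{t_{j+1}}U_s\,\dd s$ become \emph{independent} (not merely martingale differences), which is what enables the Orlicz-norm machinery below. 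Your choice keeps $D_2$ adapted to $(\mathcal G_{t_i})$ and yields a genuine martingale-difference sequence without conditioning; this is enough for your purposes.

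\textbf{Tool for the martingale sum.} You control $\sum_i\Xi_i$ by Doob's inequality plus the discrete Burkholder inequality, which only requires uniform $L^p$-bounds on $D_2(u)$; these follow from sub-exponentiality and the Lipschitz transfer via \eqref{eq:phi_lipschitz}. The paper instead conditions on $\mathcal F^W$ and invokes the sub-exponential martingale bound of Ledoux--Talagrand (Theorem~6.21 there) to estimate the conditional $\psi_1$-norm $\|\sum_jM_j\|_{\psi_1}$, then converts this to a tail bound and integrates. Your route is more elementary and entirely sufficient for the stated $L^p$-estimate \eqref{eq:strong conv}; the paper's route additionally yields an exponential concentration bound \emph{conditional on the Brownian path}, which is stronger information than the theorem records but is not needed for the conclusion.

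Both arguments use the hypothesis $|\mathscr G|^{1/2}\log(1+n)\le C_G$ in exactly the place you identify: the boundary/remainder piece of $\int_0^tD_2$ (your $R_t$, the paper's $\int_{\delta(u)}^uU_s\,\dd s$), where a $\max_i$ over sub-exponential action sizes produces a $\log(1+n)$ factor that must be absorbed. The Borel--Cantelli step for the almost-sure statement is identical in both proofs.
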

\begin{proof}
See Section \ref{sec proof: thm strong conv}.
\end{proof}

The proof of Theorem \ref{thm:strong conv}
is given in Section \ref{sec proof: thm strong conv}.
The analysis involves carefully estimating the Orlicz norm of the pathwise difference between the aggregated and sampled dynamics on each subinterval by conditioning on the realization of the Brownian motion, and then controlling the overall error using concentration inequalities for conditional sub-exponential martingale difference sequences.

Theorem \ref{thm:strong conv} indicates  that the convergence of the sampled dynamics to the aggregated dynamics can be interpreted as   \emph{a law of large numbers in time}.
Specifically,  
   when the volatility is uncontrolled, the $1/2$-order convergence is achieved by increasing the sampling frequency for any fixed  realization of the system noise.
  
 However,
 such a pathwise convergence fails when the volatility is controlled, as shown in the following example.

\begin{Example}
\label{eg:counter example}
Let 
$d=1$, $A\subset \sR$,
$ b(t,x,a) = 0$ and $ \sigma(t,x,a) = a$.
Assume $ \bpi(\dd a|t,x) = \mu(\dd a)$ 
for some $\mu\in \cP(A)$
and $ Q := \int_A a^2\mu(\dd a) < +\infty$. Assume that $ \{\xi_i\}_{i=0}^\infty$ are i.i.d.~random variables with distribution $\mu$. Let $a_{t_i} = \xi_i$  for all $i$, then we have 
\begin{equation}
\X_t - \aX_t = \int_0^t \left(a_{\delta(s)} - \sqrt{Q}\right) \dd W_s.
\end{equation}
Note that $ W$ is a Brownian motion under the filtration 
$\{ \mathcal{F}_t\}_{t\ge 0}$
with 
$ \mathcal{F}_t := \sigma\{a_{\delta(s)}:s\le t\}\lor \mathcal{F}^W_t$. By  It\^o's isometry and Fubini's theorem, 
\begin{equation}
\E\left[ |\X_t-\aX_t|^2\right] = \E\left[\int_0^t (a_{\delta(s)} - \sqrt{Q})^2 \dd s \right] = t\E\left[ (\xi_1 - \sqrt{Q})^2\right], 
\end{equation}
which does not converge  to zero   as $ |\mathscr{G}| \rightarrow 0$, except that $\xi_1$ is a constant (or equivalently $\mu$ is a Dirac distribution).
\end{Example}

\change{
\subsection{Numerical illustration}
 We present numerical results illustrating the theoretical strong and weak convergence rates for two representative one-dimensional cases: uncontrolled volatility $\dd X_t = a_t\dd t+ \dd W_t$ and controlled volatility $\dd X_t = a_t\dd W_t$. In both cases, we 
 set 
  the time horizon $[0,1]$ and 
the initial state $X_0=0$, and 
 consider sampling actions from a    standard  Gaussian policy (i.e., $a_t\sim  \mathcal{N}(0,1)$) over   a uniform   grid $\mathscr{G}=\{i\frac{T}{n}\}_{i=0}^n$, where $n\in \sN$ is the number of grid points.  
We consider the quantity 
$ |\E[(\X_T)^4] - \E[(\aX_T)^4]|$ to examine the 
 weak convergence of the sampled dynamics,
 and  
 the root mean squared error (RMSE)
 $ \sqrt{\E[|\X_T - \aX_T|^2]}$ to assess  
  the strong convergence. All  expectations
  are estimated using  the Monte Carlo method with 50000 samples. Each point   in the following plots is an average over 100 independent runs, and the shaded regions represent the 95\% confidence intervals, estimated using 1000 bootstrap resamples.

\begin{figure}[!ht]
    \centering
    \includegraphics[width=\textwidth]{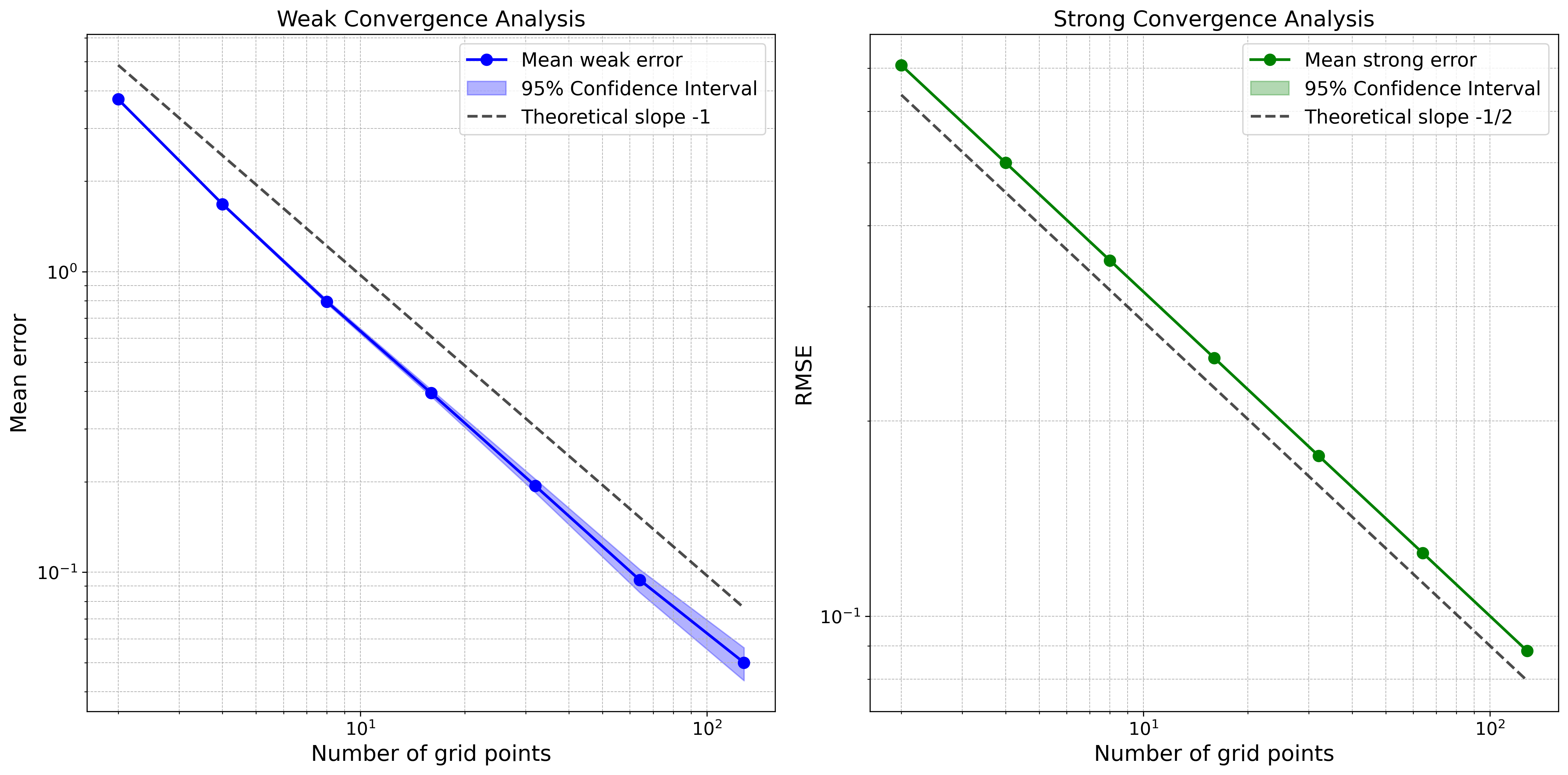}
    \caption{Weak and strong convergence analysis for the uncontrolled volatility case $\dd X_t = a_t\dd t+ \dd W_t$ with actions $a_t\sim \mathcal{N}(0,1)$. The corresponding aggregated dynamics is $\dd \tilde X_t = \dd W_t$. The test function is $f(x)=x^4$ and $T=1$. Left: Weak error versus the number of grid points $n$. Right: RMSE versus the number of grid points $n$. Both axes are on log scales.}
    \label{fig:drift_control}
\end{figure}

Figure \ref{fig:drift_control} presents the convergence results for the case with uncontrolled volatility. The left panel shows that the   weak error decays linearly with  the sampling frequency  on a log-log scale, with a slope of approximately $-1$. This  confirms  the first-order convergence rate established in Theorem \ref{thm:weak convergence}. The right panel shows that the RMSE also decays linearly on the log-log plot, with a slope of approximately $-1/2$,  confirming the strong convergence rate of $O(|\mathscr{G}|^{1/2})$ as stated in Theorem \ref{thm:strong conv}. 
The confidence interval for the strong error is not visible due to the very small variance of the estimator.

\begin{figure}[!ht]
    \centering
    \includegraphics[
    width=\textwidth]{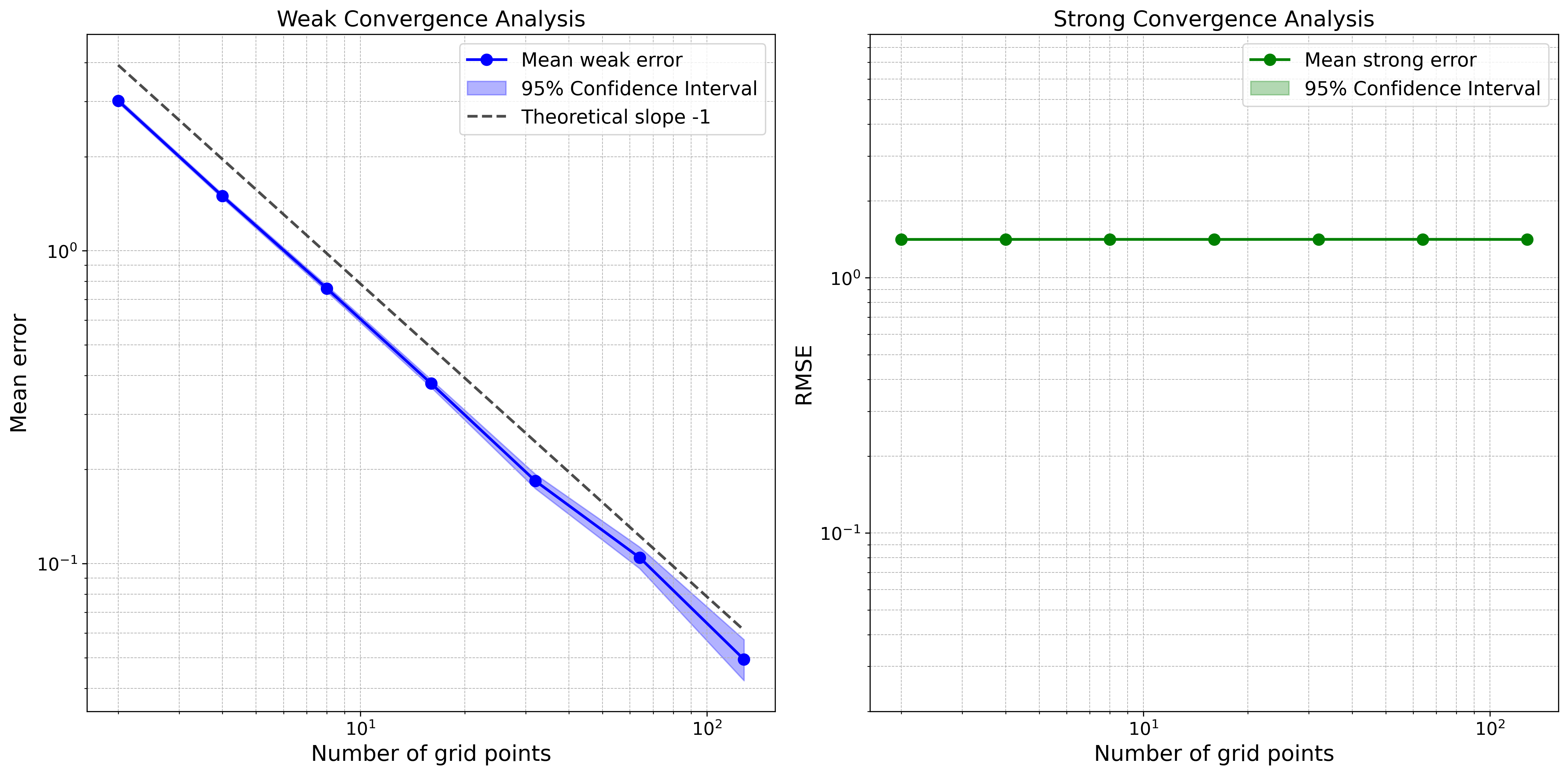}
    \caption{Weak and strong convergence analysis for the controlled volatility case $\dd X_t = a_t \dd W_t$ with actions $a_t\sim \mathcal{N}(0,1)$. The corresponding aggregated dynamics is $\dd \tilde X_t = \dd W_t$. The test function is $f(x)=x^4$ and $T=1$. Left: Weak error versus   the number of grid points $n$. Right: RMSE versus the  number of grid points $n$. Both axes are on log scales.}
    \label{fig:volatility_control}
\end{figure}

Figure \ref{fig:volatility_control} presents the convergence results for the second case, where the volatility is controlled. Similar to the first case with uncontrolled volatility, the left panel shows that the weak error exhibits a first-order decay with a slope of $-1$, consistent with Theorem \ref{thm:weak convergence}. In contrast, the right panel, which depicts strong convergence, shows that the RMSE remains essentially constant as the number of grid points $n$ increases. This provides a clear numerical illustration of the failure of strong convergence under controlled volatility, confirming the theoretical argument presented in Example \ref{eg:counter example}.

}

\section{Applications}
\label{sec:applications}
This section builds on the convergence results from Section \ref{sec:convergence sampled state} to quantify the accuracy of various estimators used in continuous-time RL algorithms. 
The proofs of all statements are given in Section \ref{sec proof: proposition}. 

\subsection{Accuracy of the Monte-Carlo evaluation}

Recall that in RL, the agent has the objective function \eqref{eq:classical objective}, and she needs to estimate its value under a given policy $\bpi$ with data that is generated by executing this policy. This problem is known as the (on-policy) policy evaluation. 

The agent can only discretely sample the actions at the pre-specified time grid and observe the state variable at the same time grid $\mathscr G =\{0=t_0<\ldots <t_n =T\}$. Therefore, the agent can sample the process $\X_s$. For simplicity, we assume that the agent can collect and observe the accumulated running reward $R_{t_{i}}^{\mathscr{G}} = \int_{t_{i}}^{t_{i+1}} r(s,\X_s,\act_{t_i}) \dd s $.\footnote{If only the instantaneous reward $r(s,\X_{t_{i-1}},\act_{t_i})$ can be observed at the grid point, then the accumulated reward needs to be further approximated as $R_{t_{i}}^{\mathscr{G}}\approx r(s,\X_{t_{i-1}},\act_{t_i}) (t_i - t_{i-1})$ as in the usual Euler scheme. Under suitable assumptions on the regularity conditions on the function $r$, the order of our analysis does not change. (cf. \cite{jia2022a})} Hence, the expected reward collected by the agent is 
\begin{equation}\label{eq:value function sampled sde}
    J^{\mathscr{G}}\coloneqq \E \left[ \int_0^T r(s,\X_s,\act_{\delta(s)}) \dd s + h(\X_T)\right] = \E\left[\sum_{i=0}^{n-1} R_{t_{i}}^{\mathscr{G}} + h(\X_T) \right].
\end{equation}
In contrast, the  expected reward with a continuous-time execution of  the policy is given by \eqref{eq:objective entropy relaxed control}:
\begin{equation}\label{eq:value function aggregated sde}
\tilde J\coloneqq
\E \left[ \int_0^T \tilde r(s,\aX_s)\dd s  + h(\aX_T)\right] =
  \E \left[ \int_0^T \int_{A} r(s,\aX_s,a)\bpi(\dd a|s,\aX_s) \dd s  + h(\aX_T)\right].
\end{equation}

The following corollary characterizes the difference between \eqref{eq:value function aggregated sde} and \eqref{eq:value function sampled sde}, that is, the \textit{bias} of using $\int_0^T r(s,\X_s,\act_{\delta(s)}) \dd s + h(\X_T)$ as an estimate for $\tilde J$. 
\begin{Proposition}\label{prop:value function}
Assume (H.\ref{assum:standing}) and (H.\ref{assum:Lipschitz}) hold. 
\begin{enumerate}
    \item \label{enum:value function 1}If  $ r \in C^{2,0}_p, \tilde r \in C^{2}_p$ and (H.\ref{assump:solution_kol_pde}) holds with the terminal function  $f =  h, \tilde r(t,\cdot), 0\le t \le T$,  then there is a constant $ C_p$ depending only on $b,\sigma,\tilde b^{\bpi},\tilde \sigma^{\bpi},\bpi,T,p$  such that
    \begin{equation}
        \left| J^{\mathscr{G}} - \tilde J\right| \le C_p\left(\|h\|_{C_p^4} + \|r\|_{C_p^{2,0}}+\sup_{t\in[0,T]}\|\tilde r(t,\cdot)\|_{C_p^4} + \|\tilde r\|_{C_p^2}\right)|\mathscr{G}|.
    \end{equation} 
    \item \label{enum:value function 2}If (H.\ref{assump:solution Kol pde relax regularity}) holds, and  $r \in \mathcal{H}_T^{(l)}, \tilde r \in \mathcal{H}_T^{(l)}, h\in \mathcal{H}^{(l+2)}$ and $ \tilde r(t,\cdot) \in \mathcal{H}^{(l+2)},\ 0\le t\le T$, then there is a constant $C$ depending only on $b,\sigma,\tilde b^{\bpi},\tilde \sigma^{\bpi},T$ such that
    \begin{equation}
        \left| J^{\mathscr{G}} - \tilde J\right| \le C\left(\|h\|^{(l+2)} + \|r\|_T^{(l)}+\sup_{t\in[0,T]}\|\tilde r(t,\cdot)\|^{(l+2)} + \|\tilde r\|_T^{(l)}\right)|\mathscr{G}|^{\mathcal{X}(l)},
    \end{equation}
    with $ \mathcal{X}(l) $ defined by \eqref{eq:convergence rate}.
\end{enumerate}  
\end{Proposition}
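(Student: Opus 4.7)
The plan is to decompose $J^{\mathscr G}-\tilde J$ into three pieces and reduce each to the weak convergence results of Theorems \ref{thm:weak convergence} and \ref{thm:weak convergence relax regularity}. Writing
\[
J^{\mathscr G}-\tilde J = \bigl(\E[h(\X_T)]-\E[h(\aX_T)]\bigr) + \E\int_0^T\bigl(r(s,\X_s,a_{\delta(s)})-\tilde r(s,\X_s)\bigr)\dd s + \E\int_0^T\bigl(\tilde r(s,\X_s)-\tilde r(s,\aX_s)\bigr)\dd s,
\]
the first piece (terminal weak error) is bounded directly by Theorem \ref{thm:weak convergence} in Part (\ref{enum:value function 1}), or by Theorem \ref{thm:weak convergence relax regularity} in Part (\ref{enum:value function 2}), with $f=h$. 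The third piece (weak error of the running reward) is handled by applying the same theorem pointwise in $s\in[0,T]$ with test function $\tilde r(s,\cdot)$ and integrating in $s$; the uniform-in-$s$ hypotheses $\sup_t\|\tilde r(t,\cdot)\|_{C_p^4}$, respectively $\sup_t\|\tilde r(t,\cdot)\|^{(l+2)}$, ensure that the constants remain controlled under the integral sign.

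The main work lies in the middle piece, the sampling bias of the running reward. I would telescope along the grid and, on each subinterval $[t_i,t_{i+1})$, write
\[
r(s,\X_s,a_{t_i})-\tilde r(s,\X_s) = \bigl[r(t_i,\X_{t_i},a_{t_i})-\tilde r(t_i,\X_{t_i})\bigr] + \bigl[r(s,\X_s,a_{t_i})-r(t_i,\X_{t_i},a_{t_i})\bigr] - \bigl[\tilde r(s,\X_s)-\tilde r(t_i,\X_{t_i})\bigr].
\]
Taking expectation and conditioning on the filtration $\mathcal G_{t_i}$ from Lemma \ref{lemma:moments of X}, the first bracket vanishes: $\X_{t_i}$ is $\mathcal G_{t_i}$-measurable while $\xi_i$ is independent of $\mathcal G_{t_i}$, so by Definition \ref{def:sample} one has $\E[r(t_i,\X_{t_i},\phi(t_i,\X_{t_i},\xi_i))\mid\mathcal G_{t_i}] = \int_A r(t_i,\X_{t_i},a)\,\bpi(\dd a\mid t_i,\X_{t_i}) = \tilde r(t_i,\X_{t_i})$. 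For Part (\ref{enum:value function 1}), the remaining two brackets are controlled by It\^o's formula applied to $(s,x)\mapsto r(s,x,a_{t_i})$ and $(s,x)\mapsto\tilde r(s,x)$ along $\X$ on $[t_i,s]$: the martingale parts have zero mean, while $r\in C_p^{2,0}$, $\tilde r\in C_p^2$ combined with the moment estimates of Lemma \ref{lemma:moments of X} yield drift contributions of order $(s-t_i)(\|r\|_{C_p^{2,0}}+\|\tilde r\|_{C_p^2})$; integrating in $s$ and summing over $i$ produces a total sampling bias of order $|\mathscr G|$.

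For Part (\ref{enum:value function 2}), the three-bracket decomposition is identical, but the within-subinterval brackets must be estimated via H\"older regularity since second derivatives need not exist when $l<2$. Using $r,\tilde r\in\mathcal H_T^{(l)}$ together with the pathwise estimate $\E[|\X_s-\X_{t_i}|^q]^{1/q}\le C|s-t_i|^{1/2}$ (a consequence of Lemma \ref{lemma:moments of X} and standard moment arguments), and a short case analysis on $l\in(0,1)\cup(1,2)\cup(2,3)$, yields a within-subinterval expectation bounded by a constant multiple of $|s-t_i|^{\min(l/2,1)}$, whose aggregation over $i$ is dominated by $|\mathscr G|^{\mathcal X(l)}$. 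The most delicate point I anticipate is keeping the constants uniform in $s$ when the weak convergence theorems are invoked pointwise with the family of test functions $\{\tilde r(s,\cdot)\}_{s\in[0,T]}$; this is exactly what the $\sup_{t\in[0,T]}$ hypotheses on $\tilde r(t,\cdot)$ are designed to guarantee, and collecting the three contributions delivers the stated bounds in both parts.
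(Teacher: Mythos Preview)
Your proposal is correct and follows essentially the same approach as the paper: decompose into a terminal weak error (handled by Theorem~\ref{thm:weak convergence}/\ref{thm:weak convergence relax regularity} with $f=h$), a mean-zero sampling-noise term killed by conditioning on $\mathcal G_{t_i}$, an It\^o/H\"older correction within each subinterval, and a weak-error comparison of $\X$ versus $\aX$ for $\tilde r$. The only cosmetic difference is the order of the telescope: you insert the intermediate $\tilde r(s,\X_s)$ and invoke the weak-convergence theorem pointwise at each $s\in[0,T]$, so your It\^o correction for $\tilde r$ runs along $\X$; the paper instead inserts $\tilde r(t_i,\X_{t_i})$ and $\tilde r(t_i,\aX_{t_i})$, invokes the theorem only at grid points, and runs the corresponding It\^o correction along $\aX$. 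Both routes produce exactly the same four norms in the final bound.
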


Proposition \ref{prop:value function} characterizes the (unconditional) bias of the estimated value of a given policy by discrete sampling and piecewise constant execution. Given this result, one may understand the required simulation budget in order to obtain an estimate with desired accuracy. Next, we use the notion of the conditional weak error to highlight that the required simulation budget can be refined to those conditioned on the policy sampler. More precisely, consider the value conditional on $\xi$,
\begin{equation}\label{eq:value function sampled sde conditional}
    J^{\mathscr{G}}(\xi)\coloneqq \Ew \left[ \int_0^T  r(s,\X_s,\act_{\delta(s)}) \dd s + h(\X_T)\right].
\end{equation}
Combining the proof of Proposition \ref{prop:value function} and Theorem \ref{thm:concentration inequality}, we obtain the following result for the large deviation bound for the value function.
\begin{Proposition}
   Assume (H.\ref{assum:standing}) and (H.\ref{assum:Lipschitz}) hold. If  (H.\ref{assump:linear growth on x}) and the conditions in  Proposition \ref{prop:value function}(\ref{enum:value function 1})  hold, and $ \sup_{a\in A}\|r(\cdot,\cdot,a)\|_{C_p^{2}} < +\infty$, then there is a constant $ C_p$ depending only on $b,\sigma,\tilde b^{\bpi},\tilde \sigma^{\bpi},T,p$ such that for every $ \varrho \in (0,1)$,
    \begin{equation}
        \PP\left[ |J^{\mathscr{G}}(\xi) - \tilde J| > C_p\left(\|h\|_{C_p^4} + \sup_{a\in A}\|r(\cdot,\cdot,a)\|_{C_p^{2}}+\sup_{t\in[0,T]}\|\tilde r(t,\cdot)\|_{C_p^4} + \|\tilde r\|_{C_p^2}\right)\sqrt{|\mathscr{G}|}(1+\sqrt{\ln(2/\varrho)})\right] \le \varrho.
    \end{equation}
   If conditions in  Proposition \ref{prop:value function}(\ref{enum:value function 2})   holds, then there is a constant $C$ depending only on $b,\sigma,\tilde b^{\bpi},\tilde \sigma^{\bpi},T$ such that for every $ \varrho \in (0,1)$,
    \begin{equation}
        \PP\left[ |J^{\mathscr{G}}(\xi) - \tilde J| > C\left(\|h\|^{(l+2)} + \|r\|_T^{(l)}+\sup_{t\in[0,T]}\|\tilde r(t,\cdot)\|^{(l+2)} + \|\tilde r\|_T^{(l)}\right)|\mathscr{G}|^{\mathcal{X}(l)\land \frac{1}{2}}(1+\sqrt{\ln(2/\varrho)})\right] \le \varrho 
    \end{equation}
     with $ \mathcal{X}(l) $ defined by \eqref{eq:convergence rate}.
\end{Proposition}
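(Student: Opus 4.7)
The plan is to reduce the statement to a direct application of Theorem~\ref{thm:concentration inequality} by augmenting the state with the running reward. Define the augmented sampled process $\bar X^{\mathscr{G}}_t \coloneqq (\X_t, Y^{\mathscr{G}}_t)\in\sR^{d+1}$ with $Y^{\mathscr{G}}_t\coloneqq \int_0^t r(s,\X_s,a_{\delta(s)})\,\d s$, and the augmented aggregated process $\bar{\tilde X}_t \coloneqq (\aX_t, \tilde Y_t)$ with $\tilde Y_t \coloneqq \int_0^t \tilde r(s,\aX_s)\,\d s$. These processes satisfy SDEs with augmented drift $\bar b(t,(x,y),a) = (b(t,x,a), r(t,x,a))$ and augmented diffusion $\bar\sigma(t,(x,y),a) = (\sigma(t,x,a),0)$, whose aggregated counterparts are $(\tilde b^{\bpi}(t,x), \tilde r(t,x))$ and $(\tilde\sigma^{\bpi}(t,x),0)$. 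Choosing the test function $\bar f(x,y) = h(x)+y$, one has the identity $\Ew[\bar f(\bar X^{\mathscr{G}}_T)] - \E[\bar f(\bar{\tilde X}_T)] = J^{\mathscr{G}}(\xi) - \tilde J$, so a concentration bound on the left-hand side via Theorem~\ref{thm:concentration inequality} applied to the augmented system immediately yields the claim.

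I would next verify that the hypotheses of Theorem~\ref{thm:concentration inequality} carry over to the augmented system. The polynomial-growth and Lipschitz conditions of (H.\ref{assum:standing}) and the well-posedness (H.\ref{assum:Lipschitz}) are inherited by $\bar b,\bar\sigma,\bar{\tilde b}^{\bpi},\bar{\tilde\sigma}^{\bpi}$ under the assumed regularity of $r$ and $\tilde r$. For the first part, the additional hypothesis (H.\ref{assump:linear growth on x}) applied to $\bar b$ requires $\sup_{a\in A}\|r(\cdot,\cdot,a)\|_{C_p^2}<\infty$, which is precisely the extra assumption in the proposition. The crucial step is verifying the PDE regularity (H.\ref{assump:solution_kol_pde}) for the augmented system with terminal $\bar f$. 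Since $\bar f$ is linear in $y$ and the augmented coefficients are independent of $y$, Feynman--Kac yields
\begin{equation*}
\bar v(t,x,y) = y + v_h(t,x;T) + \int_t^T v_{\tilde r(s,\cdot)}(t,x;s)\,\d s,
\end{equation*}
where $v_h(\cdot,\cdot;T)$ and $v_{\tilde r(s,\cdot)}(\cdot,\cdot;s)$ are the solutions of \eqref{eq:Kol_pde} with terminal functions $h$ and $\tilde r(s,\cdot)$, respectively. Their $C_p^4$ regularity follows from the assumed PDE regularity in Proposition~\ref{prop:value function}(\ref{enum:value function 1}), and the $C_p^4$ norm of $\bar v$ is controlled by $\|h\|_{C_p^4} + \sup_{t}\|\tilde r(t,\cdot)\|_{C_p^4}$, while the presence of $\|\tilde r\|_{C_p^2}$ arises from the derivative $\partial_t \bar v$ via the boundary term $\tilde r(t,x)$.

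Invoking the first part of Theorem~\ref{thm:concentration inequality} on the augmented system with the function $\bar f$ then gives the first bound, with the constant absorbing the various regularity norms. The second part of the proposition follows by the same augmentation argument, combined with the second part of Theorem~\ref{thm:concentration inequality} under (H.\ref{assump:solution Kol pde relax regularity}); the H\"older regularity of $r$, $\tilde r$, and $h$ translates into the corresponding regularity of $\bar b, \bar\sigma, \bar f$, and the augmented PDE solution via the same Feynman--Kac decomposition.

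The main technical obstacle is verifying the PDE regularity for the augmented system, whose diffusion is degenerate in the $y$-direction and therefore outside the scope of standard parabolic regularity theory for uniformly elliptic operators. The resolution exploits the linearity of $\bar f$ in $y$: via Feynman--Kac, the augmented PDE decouples into a family of PDEs in $x$ alone, so the required regularity in $(x,y)$ reduces to that of $v_h$ and $v_{\tilde r(s,\cdot)}$ assumed in Proposition~\ref{prop:value function}, bypassing any need for non-degeneracy in $y$.
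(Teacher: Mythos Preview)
Your augmentation strategy is correct and amounts to the same computation the paper sketches by ``combining the proof of Proposition~\ref{prop:value function} and Theorem~\ref{thm:concentration inequality}'': both routes telescope $J^{\mathscr G}(\xi)-\tilde J$ via the inhomogeneous value-function PDE $\partial_t u+\mathcal L u+\tilde r=0$, $u(T,\cdot)=h$, split each increment into a bounded martingale difference $A_i$ and a drift $B_i$, and apply Azuma--Hoeffding. Your reformulation via an augmented state is a tidy way to package this, and the Feynman--Kac decomposition $\bar v(t,x,y)=y+u(t,x)$ is exactly the link between the two viewpoints.

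One point to flag: Theorem~\ref{thm:concentration inequality} is not applicable purely as a black box to the augmented system. In the H\"older part, $\bar f(x,y)=h(x)+y\notin\mathcal H^{(l+2)}$ since it is unbounded in $y$; and the linear-growth condition \eqref{eq:b upper bound uniform a} in (H.\ref{assump:linear growth on x}) for the augmented drift would require $|r(t,x,a)|\le C(1+|x|)$, which is not implied by $\sup_a\|r(\cdot,\cdot,a)\|_{C_p^2}<\infty$ when $p\ge 2$. Both are cosmetic for the same reason you already identified for the degenerate diffusion: because the augmented coefficients are independent of $y$ and $\bar v$ is affine in $y$, the $A_i$ and $B_i$ bounds in the proof of Theorem~\ref{thm:concentration inequality} involve only the $x$-derivatives of $u$, the moments of $\X_t$ (not $Y^{\mathscr G}_t$), and $\sup_a\|r(\cdot,\cdot,a)\|_{C_p^2}$. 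So you must trace through that proof rather than cite the theorem, but no new idea is required.
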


\subsection{Accuracy of Temporal-Difference learning} The \textit{temporal difference} (TD) plays a crucial role in many RL algorithms as the policy evaluation methods. In contrast to the Monte-Carlo evaluation, TD learning often aims to find an approximation of the value function associated with a given policy $\bpi$, defined as
\begin{equation}
\label{eq:value func relaxed}
\tilde J(t,x;\bpi) = \E \left[ \int_t^T \tilde r(s,\aX_s)\dd s  + h(\aX_T) \Big| \aX_t = x \right],
\end{equation}
within a suitable parametric family of functions, and it makes use of the increment of the value function process. 

In particular, suppose $V^{\theta}$ is a family of parameterized functions of $(t,x)$ with $\theta$ a finite-dimensional parameter. TD learning concerns the increment in the form ``$\dd V^{\theta}(t, \aX_t)$'' and the instantaneous reward ``$r(t,\aX_t)\dd t$''. In the continuous-time framework, these two increments have different natures. ``$\dd V^{\theta}(t, \aX_t)$'' is merely an informal notation that shall be combined with a suitable integrand as in the definition of It\^o's integral, and the latter ``$r(t,\aX_t)\dd t$'' is the usual Lebesgue integral with respect to time. In theory, \cite{jia2022b} identify the continuous-time counterpart of TD learning as solving the martingale orthogonality condition. More precisely, if $\theta^*$ is the root to  
\begin{equation}\label{eq:martingale orth equation}
    \Ew\left[ \int_0^T S(t,\aX_t) \left[ \dd V^{\theta}(t,\aX_t) +\tilde r(t,\aX_t) \dd t\right]\right] = 0,
\end{equation}
for all admissible test functions $S(t,x)$, then the $V^{\theta^*}(t,x) = \tilde J(t,x;\bpi)$ is the desired value function associated with a given policy $\bpi$. Therefore, various RL algorithms are based on using stochastic approximation (cf. \cite{borkar2000ode}) to find the root $\theta^*$ to the martingale orthogonality condition \eqref{eq:martingale orth equation}. The most popular choice is the so-called ``TD(0)'' algorithm, which corresponds to taking $S(t,x) = \frac{\partial V^{\theta}}{\partial \theta}(t,x)$.

For a given test function $S$ and parameter $\theta$, applying the stochastic approximation algorithm means approximating this expectation with random samples, i.e., an estimator of this expectation. The typical convergence conditions of the stochastic approximation algorithm are threefold\footnote{Besides these conditions on the accuracy, the convergence also depends on the landscape of \eqref{eq:martingale orth equation} as a function of $\theta$. However, this is jointly determined by the parametrization  $\theta\mapsto V^{\theta}$ and the properties of the aggregated dynamics, which is not affected by time discretization.}: first, the estimator has diminishing bias; second, the estimator has non-exploding variances; third, the learning rates should be suitably chosen according to the bias and variance of the estimator. Therefore, it is crucial to understand the magnitude of bias and variance and how they depend on the grid size. 

In continuous time, however, due to the discretely sampled actions and the nature of the stochastic integral inside \eqref{eq:martingale orth equation}, the agent only has access to the sampled dynamics and uses the discrete sum (we drop the superscript $\theta$ in the following to illustrate the results for a generic function $V$)
\[ S \circ V := \sum_{i=0}^{n-1}S(t_{i},\X_{t_{i}})\left( V(t_{i+1},\X_{t_{i+1}}) - V(t_i,\X_{t_i}) + R_{t_i}^{\mathscr{G}}\right), \]
with the observed accumulated reward 
$R_{t_{i}}^{\mathscr{G}} = \int_{t_{i}}^{t_{i+1}} r(s,\X_s,\act_{t_i}) \dd s $ 
as an approximation for the expectation in \eqref{eq:martingale orth equation}. We characterize the bias and variance of such an approximation in Proposition \ref{prop: martingale orth no a application}.
 
\begin{Proposition}\label{prop: martingale orth no a application}
Suppose (H.\ref{assum:standing}) and (H.\ref{assum:Lipschitz}) hold. Assume that $ V \in C_p^2$. Denote 
\begin{equation}\label{eq:g}
     g(t,x,a) = \partial_t V(t,x) + b^{\top}(t,x,a)\partial_x V(t,x) + \frac{1}{2}\tr\left(\sigma\sigma^{\top}(t,x,a)  \textrm{Hess}_xV(t,x)\right) + r(t,x,a)=:\mathcal{L}^a V(t,x).
\end{equation}
For the variance of $ S\circ V$, if $ S \in C_p^{0}$ and $r\in C^{0,0}_p$, then
    \begin{equation}
        \var{S\circ V} \le C_p\|S\|_{C_p^{0}}^2(\|V\|_{C_p^2}+\|r\|_{C_p^{0,0}})^2,
    \end{equation}
    where $ C_p$ is a constant depending only on $b,\sigma,\tilde b^{\bpi},\tilde \sigma^{\bpi},\bpi,T,p$.\\

For the bias of $ S\circ V$, we have the following holds.
    \begin{enumerate}
    \item[(1)] If $ g \in C_p^{0,0}, S \in C^{2}_p$ and $gS$ satisfies the condition (\ref{enum:value function 1}) in Proposition \ref{prop:value function} as the running term, then there is a constant $C_p$ depending only on $b,\sigma,\tilde b^{\bpi}, \tilde \sigma^{\bpi},\bpi,T,p$ such that
    \begin{equation}
    \begin{aligned}
        &\left| \E[S \circ V] - \Ew\left[ \int_0^T S(t,\aX_t)\left[\dd V(t,\aX_t) + \tilde r(t,\aX_t)\dd t\right]\right] \right| \\
        &\le C_p\left(\|S\|_{C_p^{2}}\|g\|_{C_p^{0,0}}+\|Sg\|_{C_p^{2,0}} + \|\widetilde{Sg}\|_{C_p^2} + \sup_{t\in[0,T]}\|\widetilde{Sg}(t,\cdot)\|_{C_p^4}\right)|\mathscr{G}|,
        \end{aligned}
    \end{equation}
    where $\widetilde{Sg}(t,x) := \int_A Sg(t,x,a)\bpi(\dd a|t,x)$.
    \item[(2)] 
    If $ g\in \mathcal{H}_T^{(0)}, S \in \mathcal{H}_T^{(l)}$ and $ gS$ satisfies the condition (\ref{enum:value function 2}) in Proposition \ref{prop:value function} as the running term, then there is a constant $C$  depending only on $b,\sigma,\tilde b^{\bpi},\tilde \sigma^{\bpi},T$ such that
    \begin{equation}
    \begin{aligned}
        &\left|\E[S \circ V ] - \Ew\left[ \int_0^T S(t,\aX_t)\left[\dd V(t,\aX_t) + \tilde r(t,\aX_t)\dd t\right]\right] \right| \\
        &\le C\left(\|S\|_T^{(l)}\|g\|_T^{(0)}+\|Sg\|^{(l)}_T + \|\widetilde{Sg}\|^{(l)}_T + \sup_{t\in[0,T]}\|\widetilde{Sg}(t,\cdot)\|^{(l+2)}\right)|\mathscr{G}|^{\mathcal{X}(l)}.
    \end{aligned}
    \end{equation} 
    \end{enumerate}
\end{Proposition}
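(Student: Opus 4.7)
The plan is to apply Itô's formula to $V$ on each subinterval in order to rewrite $S\circ V$ as the sum of a Riemann-type integral involving $g$ and a stochastic integral, and then to (i) read off the variance from this decomposition and (ii) reduce the bias to a discretisation error in $S$ plus the running-cost error of Proposition \ref{prop:value function} applied with running reward $Sg$.

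Since the control is frozen at $a_{t_i}$ on $[t_i,t_{i+1}]$, Itô's formula for $V\in C_p^2$ along \eqref{eq:sample_sde_abbre} gives
\[
V(t_{i+1},\X_{t_{i+1}})-V(t_i,\X_{t_i})+R^{\mathscr G}_{t_i}=\int_{t_i}^{t_{i+1}}g(s,\X_s,a_{t_i})\d s+\int_{t_i}^{t_{i+1}}(\partial_x V)^\top\sigma(s,\X_s,a_{t_i})\d W_s.
\]
Multiplying by $S(t_i,\X_{t_i})$ and summing over $i$ yields $S\circ V=A^{\mathscr G}+M^{\mathscr G}$, with $A^{\mathscr G}=\int_0^T S(\delta(s),\X_{\delta(s)})g(s,\X_s,a_{\delta(s)})\d s$ and $M^{\mathscr G}=\int_0^T S(\delta(s),\X_{\delta(s)})(\partial_x V)^\top\sigma(s,\X_s,a_{\delta(s)})\d W_s$. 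Since $S(\delta(s),\X_{\delta(s)})$ is $\mathcal F_{\delta(s)}$-measurable, $M^{\mathscr G}$ is a square-integrable $\sF$-martingale and $\E[M^{\mathscr G}]=0$. Itô's isometry on $M^{\mathscr G}$, Cauchy--Schwarz on $A^{\mathscr G}$, the polynomial growth of $S,\partial_x V,\sigma,r$, and Lemma \ref{lemma:moments of X} give $\E[(M^{\mathscr G})^2]+\E[(A^{\mathscr G})^2]\le C_p\|S\|_{C_p^0}^2(\|V\|_{C_p^2}+\|r\|_{C_p^{0,0}})^2$, whence the variance bound.

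For the bias, $\E[S\circ V]=\E[A^{\mathscr G}]$, and Itô's formula for $V$ along $\aX$, combined with $\tilde{\mathcal L}^{\bpi}V=\tilde g-\tilde r$ (where $\tilde g(t,x)=\int_A g(t,x,a)\bpi(\d a|t,x)$), yields
\[
\Ew\!\int_0^T S(t,\aX_t)\bigl[\d V(t,\aX_t)+\tilde r(t,\aX_t)\d t\bigr]=\E\!\int_0^T\widetilde{Sg}(t,\aX_t)\d t.
\]
Adding and subtracting $\E\int_0^T Sg(s,\X_s,a_{\delta(s)})\d s$ splits the bias as $(I)+(II)$ with
\[
(I)=\E\!\int_0^T\!\![S(\delta(s),\X_{\delta(s)})-S(s,\X_s)]g(s,\X_s,a_{\delta(s)})\d s,\quad (II)=\E\!\int_0^T\!\! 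Sg(s,\X_s,a_{\delta(s)})\d s-\E\!\int_0^T\!\!\widetilde{Sg}(t,\aX_t)\d t.
\]
Term $(II)$ is exactly the running-cost error of Proposition \ref{prop:value function}\textup{(\ref{enum:value function 1})} with integrand $Sg$, zero terminal cost and aggregated target $\widetilde{Sg}$, giving $|(II)|\le C_p(\|Sg\|_{C^{2,0}_p}+\|\widetilde{Sg}\|_{C^2_p}+\sup_t\|\widetilde{Sg}(t,\cdot)\|_{C^4_p})|\mathscr G|$. For $(I)$, expanding $S\in C_p^2$ by Itô's formula on $[t_i,t_{i+1}]$ with frozen control $a_{t_i}$ splits $S(s,\X_s)-S(t_i,\X_{t_i})$ into a drift piece and a martingale piece $M_s=\int_{t_i}^s(\partial_xS)^\top\sigma\d W$. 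The drift, multiplied by $g$ and integrated, contributes $O(|\mathscr G|)\|S\|_{C^2_p}\|g\|_{C^{0,0}_p}$ by polynomial moment bounds. The martingale piece is recast via Itô's product rule for $M_sG_s$ (with $G_s=\int_{t_i}^s g\d u$ and vanishing cross-variation) into $\E[M_{t_{i+1}}G_{t_{i+1}}]$; the contribution from the $\mathcal F_{t_i}$-measurable part $g(t_i,\X_{t_i},a_{t_i})$ vanishes because $\E[M_{t_{i+1}}\mid\mathcal F_{t_i}]=0$, while the remainder is bounded by a second Itô expansion of $Sg\in C_p^{2,0}$ together with Itô's isometry, yielding an $O(|\mathscr G|_i^2)$ estimate per subinterval and hence $O(|\mathscr G|)\|S\|_{C^2_p}\|g\|_{C^{0,0}_p}$ after summation. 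Combining $(I)$ and $(II)$ gives item (1); item (2) follows identically with Proposition \ref{prop:value function}\textup{(\ref{enum:value function 2})} controlling $(II)$ and Hölder versions of the above estimates controlling $(I)$, giving the rate $|\mathscr G|^{\mathcal X(l)}$.

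The main obstacle will be the sharp estimate of the martingale piece in $(I)$: a direct Cauchy--Schwarz using only $g\in C^{0,0}_p$ and $\E[M_s^2]=O(|\mathscr G|_i)$ yields $O(|\mathscr G|_i^{3/2})$ per subinterval, summing only to $O(|\mathscr G|^{1/2})$, which is insufficient. Recovering the announced first-order rate requires exploiting the stronger regularity of the product $Sg$ and the conditional martingale structure sketched above.
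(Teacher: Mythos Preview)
Your approach is essentially identical to the paper's. Both proofs apply It\^o's formula to $V$ along the sampled dynamics to write $S\circ V=A^{\mathscr G}+M^{\mathscr G}$ with $A^{\mathscr G}=\int_0^T S(\delta(t),\X_{\delta(t)})g(t,\X_t,a_{\delta(t)})\,\d t$ and $M^{\mathscr G}$ the stochastic integral; the variance bound then follows from It\^o's isometry and Lemma~\ref{lemma:moments of X}. For the bias, the paper writes
\[
\E[A^{\mathscr G}]=\E\!\int_0^T (Sg)(t,\X_t,a_{\delta(t)})\,\d t-\E\!\int_0^T\Delta S_t\,g(t,\X_t,a_{\delta(t)})\,\d t,
\]
which is exactly your split into $(II)+(I)$ (up to sign), handles $(II)$ by Proposition~\ref{prop:value function}, and dismisses $(I)$ in one sentence as ``bounded by $C_p\|S\|_{C_p^{2}}\|g\|_{C_p^{0,0}}|\mathscr G|$ using the methods of Theorem~\ref{thm:weak convergence}''. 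So the strategies coincide; you simply give more detail on $(I)$ than the paper does.

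Your diagnosis that $(I)$ is the delicate step, and that a crude Cauchy--Schwarz on the martingale part of $\Delta S$ loses half an order, is correct. However, your proposed resolution is not convincing as written. After the product-rule reduction to $\E\bigl[M_{t_{i+1}}\int_{t_i}^{t_{i+1}}(g(s)-g(t_i))\,\d s\bigr]$, ``a second It\^o expansion of $Sg\in C_p^{2,0}$'' does not directly bound $g(s)-g(t_i)$: writing $(Sg)(s)-(Sg)(t_i)=\Delta S_s\,g(s)+S(t_i)\bigl(g(s)-g(t_i)\bigr)$ and solving for the second summand reintroduces $\Delta S_s\,g(s)$, the very quantity you started from, so the argument is circular. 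The route that actually yields $O(|\mathscr G|_i^2)$ per subinterval---applying It\^o's product rule to $S(\cdot,\X_\cdot)g(\cdot,\X_\cdot,a_{t_i})$ and picking up the covariation term $\int_{t_i}^t(\partial_x S)^\top\sigma\sigma^\top\partial_x g\,\d u$---requires a spatial derivative of $g$, which is stronger than $g\in C_p^{0,0}$. In other words, the paper's stated constant $\|S\|_{C_p^2}\|g\|_{C_p^{0,0}}$ for term $(I)$ \emph{alone} is not obviously achievable under the sole hypothesis $g\in C_p^{0,0}$; the overall bound in the proposition is rescued because it also contains $\|Sg\|_{C_p^{2,0}}$, which encodes the extra regularity needed. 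Neither the paper's one-line citation nor your sketch makes this explicit, so if you want a complete proof you should spell out precisely which norm ($\|Sg\|_{C_p^{2,0}}$ or some derivative of $g$) absorbs the covariation contribution.
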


Proposition \ref{prop: martingale orth no a application} implies that we can obtain a more accurate stochastic approximation with a refined grid. In addition, the variance of such approximate is uniformly bounded in the grid size. It means that with a suitable schedule to refine the grid, we may expect the typical convergence conditions in the stochastic approximation algorithm will be satisfied. 

\subsection{Accuracy of policy gradient}
The policy gradient is one of the most commonly used methods in policy optimization in RL. The well-celebrated (model-free) policy gradient representation (e.g., \cite{sutton1999policy}) gives an unbiased estimator for the gradient of the value function with respect to certain parameters in the policy. For example, the stochastic policy $\bpi^{\psi}$ is parameterized by $\psi$, a finite-dimensional parameter, and it admits a density function with respect to the Lebesgue measure. We denote the density function by $H^{\psi}(t,x,a) = \frac{\bpi^{\psi}(\dd a|t,x)}{\mu(\dd a)}$. Then we know that the value function under the policy $\bpi^{\psi}$ (defined in \eqref{eq:value func relaxed}) will depend on $\psi$ implicitly, denoted by $\tilde J(t,x;\bpi^{\psi})$. The policy gradient concerns about the derivative of $\tilde J(t,x;\bpi^{\psi})$ with respect to $\psi$, i.e., $\frac{\partial \tilde J(t,x;\bpi^{\psi})}{\partial \psi}$. For simplicity, we focus on the initial time-state gradient and suppress the time-state pair's dependence. 

A heuristic derivation in \cite{jia2022b} shows that 
the gradient  $\frac{\partial \tilde J(\bpi^{\psi})}{\partial \psi}$  
can be represented by
\begin{align}
&\E\left[ \int_0^T  \frac{\partial \log H^{\psi}}{\partial \psi}(a_t|t,X_t) \left[ \dd \tilde J(t,X_t;\bpi^{\psi}) + r(t,X_t,a_t)\dd t \right]   \right] \notag\\
&= \Ew\left[ \int_0^T \int_A\frac{\partial \log H^{\psi}}{\partial \psi}(a|t,\aX_t)\mathcal{L}^a \tilde J(t,\aX_t;\bpi^{\psi})\bpi(\dd a|t,\aX_t)\dd t\right] .
\label{eq:policy gradient repre}
\end{align}
The expectation \eqref{eq:policy gradient repre} can be approximated by the forward-Euler-type of finite sums using sampled dynamics, i.e.,
\[\sum_{i=0}^{n-1} \frac{\partial \log H^{\psi}}{\partial \psi}(a_{t_i}|t_i,\X_{t_i}) \left[  \tilde J(t_{i+1},\X_{t_{i+1}};\bpi^{\psi}) - \tilde J(t_i,\X_{t_i};\bpi^{\psi})  +  R_{t_i}^{\mathscr{G}} \right] .   \]

Naturally, the convergence of gradient-based algorithms depends on whether we have an accurate gradient estimate. To  analyze the accuracy of the policy gradient estimator, we   consider the following general expression:
\begin{equation}
\label{eq:pg discrete sum}
S \circ V := \sum_{i=0}^{n-1}S(t_{i},\X_{t_{i}},a_{t_i})\left( V(t_{i+1},\X_{t_{i+1}}) - V(t_i,\X_{t_i}) + R_{t_i}^{\mathscr{G}}\right),
\end{equation}
where 
$R_{t_{i}}^{\mathscr{G}} = \int_{t_{i}}^{t_{i+1}} r(s,\X_s,\act_{t_i}) \dd s $ is the observed accumulated reward over $[t_i,t_{i+1}]$. 

The form \eqref{eq:pg discrete sum} also appears in the continuous-time q-learning algorithm proposed in \cite{jia_q-learning_2022}, where $S$ is the derivative of the q-function that needs to be learned. The difference between \eqref{eq:pg discrete sum} and \eqref{eq:martingale orth equation} is that now the function $S$ also depends on the sampled action $a_{t_i}$. Thus, \eqref{eq:pg discrete sum} is not a straightforward substitution and discretization of any integration along the aggregated dynamics and can only be well-defined using the sampled dynamics. Note that given $\X_{t_i}$, the value of $\X_{t_{i+1}}$ not only depends on the environmental noise (the underlying Brownian motions) but also depends on $a_{t_i}$. The subtle difference in this measurability condition will affect the limiting value of \eqref{eq:pg discrete sum}.
 
We characterize the limit of a general estimator in the form \eqref{eq:pg discrete sum} and quantify the magnitude of its bias and variance in the next proposition. The proof is similar to that of Proposition \ref{prop: martingale orth no a application}, so we omit it.

\begin{Proposition}
\label{prop:pg}
     Suppose (H.\ref{assum:standing}) and (H.\ref{assum:Lipschitz}) hold. Assume $V\in C_p^2$. Define $g$ by \eqref{eq:g}. For the variance of $ S\circ V$, if $ S \in C_p^{0,0}$ and $ r\in C_p^{0,0}$, then
    \begin{equation}
        \var{S\circ V} \le C_p\|S\|_{C_p^{0,0}}^2(\|V\|_{C_p^2}+\|r\|_{C_p^{0,0}})^2,
    \end{equation}
    where $ C_p$ is a constant depending only on $b,\sigma,\tilde b^{\bpi}, \tilde \sigma^{\bpi},\bpi,T, p$.

For the bias of $ S\circ V$, we have the following holds.
\begin{enumerate}[(1)]
\item If $ g\in C_p^{0,0}, S \in C^{2,0}_p$ and $gS$ satisfies the condition (\ref{enum:value function 1}) in Proposition \ref{prop:value function} as the running term, then there is a constant $C_p$ depending only on $b,\sigma,\tilde b^{\bpi},\tilde \sigma^{\bpi},\bpi,T, p$ such that
    \begin{equation}
    \begin{aligned}
        &\left| \E[S \circ V] - \Ew\left[ \int_0^T \widetilde{Sg}(t,\aX_t)\dd t\right] \right|\\
        &\le C_p\left(\|S\|_{C_p^{2,0}}\|g\|_{C_p^{0,0}}+\|Sg\|_{C_p^{2,0}} + \|\widetilde{Sg}\|_{C_p^2} + \sup_{t\in[0,T]}\|\widetilde{Sg}(t,\cdot)\|_{C_p^4}\right)|\mathscr{G}|,
        \end{aligned}
    \end{equation}
    where $ \widetilde{Sg}(t,x):= \int_\A S(t,x,a)g(t,x,a)\bpi(\dd a|t,x)$. 

 In particular, if the conditions in \cite{jia2022b} are satisfied so that the policy gradient has the representation \eqref{eq:policy gradient repre}, then by setting $V(\cdot,\cdot) = \tilde J(\cdot,\cdot;\bpi^{\psi})$, and $S(t,x,a) = \frac{\partial \log H^{\psi}}{\partial \psi}(a|t,x)$, we have
  \[ \left| \E\left[ S\circ V \right] - \frac{\partial \tilde J(\bpi^{\psi})}{\partial \psi} \right| \leq C_p\left(\|S\|_{C_p^{2,0}}\|g\|_{C_p^{0,0}}+\|Sg\|_{C_p^{2,0}} + \|\widetilde{Sg}\|_{C_p^2} + \sup_{t\in[0,T]}\|\widetilde{Sg}(t,\cdot)\|_{C_p^4}\right) |\mathscr{G}| . \]
\item  
If $ g \in \mathcal{H}_T^{(0)}, S \in \mathcal{H}_T^{(l)}$ and $ gS$ satisfies the condition (\ref{enum:value function 2}) in Proposition \ref{prop:value function} as the running term, then there is a constant $C$ depending only on $b,\sigma,\tilde b^{\bpi},\tilde \sigma^{\bpi},T$ such that
    \begin{equation}
    \begin{aligned}
        &\left| \E[S\circ V] - \Ew\left[ \int_0^T \widetilde{Sg}(t,\aX_t)\dd t\right]\right| \\
        &\le C\left(\|S\|_T^{(l)}\|g\|_T^{(0)}+\|Sg\|^{(l)}_T + \|\widetilde{Sg}\|^{(l)}_T + \sup_{t\in[0,T]}\|\widetilde{Sg}(t,\cdot)\|^{(l+2)}\right)|\mathscr{G}|^{\mathcal{X}(l)}.
        \end{aligned}
    \end{equation}
    
In particular, if the conditions in \cite{jia2022b} are satisfied so that the policy gradient has the representation \eqref{eq:policy gradient repre}, then by setting $V(\cdot,\cdot) = \tilde J(\cdot,\cdot;\bpi^{\psi})$, and $S(t,x,a) = \frac{\partial \log H^{\psi}}{\partial \psi}(a|t,x)$, we have
  \[ \left| \E\left[ S\circ V \right] - \frac{\partial \tilde J(\bpi^{\psi})}{\partial \psi} \right| \le C\left(\|S\|_T^{(l)}\|g\|_T^{(0)}+\|Sg\|^{(l)}_T + \|\widetilde{Sg}\|^{(l)}_T + \sup_{t\in[0,T]}\|\widetilde{Sg}(t,\cdot)\|^{(l+2)}\right)|\mathscr{G}|^{\mathcal{X}(l)}.  \]
\end{enumerate}
 
\end{Proposition}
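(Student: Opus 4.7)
The plan is to follow the same outline as the proof of Proposition~\ref{prop: martingale orth no a application}, with additional care to accommodate the dependence of the weight $S$ on the sampled action $a_{t_i}$. The starting point is Itô's formula applied to $V\in C_p^2$ along the sampled dynamics \eqref{eq:sampled_sde} on each subinterval $[t_i,t_{i+1}]$ with the frozen action $a_{t_i}$, which gives
\begin{equation*}
V(t_{i+1},\X_{t_{i+1}}) - V(t_i,\X_{t_i}) + R_{t_i}^{\mathscr{G}} = \int_{t_i}^{t_{i+1}} g(s,\X_s,a_{t_i})\dd s + \int_{t_i}^{t_{i+1}} [\sigma^\top \partial_x V](s,\X_s,a_{t_i})\dd W_s,
\end{equation*}
with $g$ as in \eqref{eq:g}. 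Multiplying by the $\mathcal{F}_{t_i}$-measurable weight $S(t_i,\X_{t_i},a_{t_i})$ and summing over $i$ yields a decomposition $S\circ V = D^{\mathscr{G}} + M^{\mathscr{G}}$, where $M^{\mathscr{G}}$ is a sum of orthogonal martingale increments (hence of zero mean), and $D^{\mathscr{G}} = \sum_i S(t_i,\X_{t_i},a_{t_i}) \int_{t_i}^{t_{i+1}} g(s,\X_s,a_{t_i}) \dd s$.

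For the variance bound, I would use $\mathrm{Var}(S\circ V) \le 2\mathrm{Var}(D^{\mathscr{G}}) + 2\E[|M^{\mathscr{G}}|^2]$. The martingale term is controlled by Itô isometry applied to each orthogonal increment, giving $\E[|M^{\mathscr{G}}|^2] \le C\E[\int_0^T |S|^2|\sigma^\top \partial_x V|^2\dd s]$, which is uniformly bounded via the polynomial growth of $S$, $\partial_x V$, $\sigma$ together with the moment estimates for $\X$ from Lemma~\ref{lemma:moments of X}. The drift term $\mathrm{Var}(D^{\mathscr{G}}) \le \E[|D^{\mathscr{G}}|^2]$ is bounded using Cauchy--Schwarz across sub-intervals and polynomial growth of $Sg$. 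This yields the stated variance estimate $C_p\|S\|_{C_p^{0,0}}^2(\|V\|_{C_p^2}+\|r\|_{C_p^{0,0}})^2$.

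For the bias, since $\E[M^{\mathscr{G}}]=0$, one has $\E[S\circ V] = \E\big[\int_0^T S(\delta(s),\X_{\delta(s)},a_{\delta(s)}) g(s,\X_s,a_{\delta(s)})\dd s\big]$. The main task is to show
\begin{equation*}
\E[S\circ V] = \E\left[\int_0^T (Sg)(s,\X_s,a_{\delta(s)})\dd s\right] + O(|\mathscr{G}|),
\end{equation*}
by expanding $S(s,\X_s,a_{t_i})-S(t_i,\X_{t_i},a_{t_i})$ via Itô's formula on $[t_i,s]$ (with frozen action $a_{t_i}$), using $S\in C_p^{2,0}$. Taking conditional expectations with respect to $\mathcal{F}_{t_i}$ and using Fubini, the stochastic-integral contribution pairs with $g(s,\X_s,a_{t_i})$ and, together with the drift piece, yields an $O((t_{i+1}-t_i)^2)$ contribution per sub-interval; summation gives the $O(|\mathscr{G}|)$ rate. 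Once in the form $\E[\int_0^T (Sg)(s,\X_s,a_{\delta(s)})\dd s]$, $Sg$ can be treated as a running reward, and Proposition~\ref{prop:value function} applies directly; since aggregation gives $\widetilde{Sg}(t,x)=\int_A S(t,x,a)g(t,x,a)\bpi(\dd a|t,x)$, the resulting bound $\E[\int_0^T \widetilde{Sg}(t,\aX_t)\dd t]$ inherits the $O(|\mathscr{G}|)$ (resp.\ $O(|\mathscr{G}|^{\mathcal{X}(l)})$) error. Part (2) is handled analogously, replacing the classical Itô expansion by Hölder estimates in space and applying Proposition~\ref{prop:value function}(2) at the final step. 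The policy gradient statements then follow by specializing $V$ and $S$ and invoking the representation \eqref{eq:policy gradient repre}.

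The main obstacle is controlling the discrepancy between $(\delta(s),\X_{\delta(s)})$ and $(s,\X_s)$ at the \emph{optimal} rate: a naive Lipschitz bound in the state variable yields only $O(|\mathscr{G}|^{1/2})$ because of the $O(|s-t_i|^{1/2})$ diffusive fluctuation of $\X_s$ away from $\X_{t_i}$. The improvement to $O(|\mathscr{G}|)$ requires a cancellation argument inside the conditional expectation, where the Itô-stochastic contribution vanishes by the martingale property and only a quadratic-in-time drift correction survives. The joint $(s,\X_s,a)$-dependence of $S$ makes this step more delicate than in Proposition~\ref{prop: martingale orth no a application}, but the action $a_{t_i}$ being frozen on each sub-interval allows the same conditioning strategy to go through.
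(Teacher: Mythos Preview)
Your proposal is correct and follows essentially the same approach the paper takes: the paper explicitly states that the proof of Proposition~\ref{prop:pg} ``is similar to that of Proposition~\ref{prop: martingale orth no a application}, so we omit it,'' and your outline—It\^o expansion of $V$ on each subinterval, the decomposition $S\circ V = D^{\mathscr{G}}+M^{\mathscr{G}}$, handling the $\Delta S_s\cdot g$ cross term via Lemma~\ref{lemma:convergence rate of diff}-type arguments, and then invoking Proposition~\ref{prop:value function} with $Sg$ as the running reward—mirrors exactly the structure of that proof with $S$ now carrying the extra action argument $a_{t_i}$ (frozen on each subinterval, as you note). Your identification of the ``main obstacle'' (that a naive Lipschitz bound on $\Delta S_s$ only gives $O(|\mathscr{G}|^{1/2})$ and one needs the martingale cancellation to reach $O(|\mathscr{G}|)$) is precisely the subtlety the paper addresses by appealing to ``the methods of Theorem~\ref{thm:weak convergence}.''
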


Proposition \ref{prop:pg} implies that with a refined grid, we can obtain a more accurate estimate for the policy gradient. In addition, the variance of such an estimator is uniformly bounded in the grid size. 
These properties are crucial for adjusting the sampling frequency to ensure the convergence of the overall policy gradient algorithm.\footnote{The typical conditions of convergence of the stochastic gradient descent algorithm also require $\frac{\partial \tilde J(\bpi^{\psi})}{\partial \psi}$ to satisfy certain conditions as a function of $\psi$, but it is not affected by time discretization. Note that in a typical actor-critic algorithm, one does not have an exact estimate of $\tilde J(\cdot,\cdot;\bpi^{\psi})$, and can only obtain its approximation $V^{\theta} \approx \tilde J(\cdot,\cdot;\bpi^{\psi})$ via policy evaluation. Therefore, the convergence of a general actor-critic algorithm is more complex and open for future research.}

\subsection{Accuracy of risk-sensitive q-learning}
So far, all the discussions have been restricted to the additive functional, which is often regarded as a risk-neutral problem. In continuous-time risk-sensitive problems (e.g., \cite{bielecki1999risk}), one encounters the objective function in the exponential form:
\[ \frac{1}{\epsilon}\log\E\left[ \exp\left\{ \epsilon \left[ \int_0^T r(s,X_s,a_s)\dd s + h(X_T) \right] \right\} \right] . \]
\cite{jia2024continuous} establishes the risk-sensitive q-learning theory and reveals that it only differs from the non-risk-sensitive counterpart by a penalty on the quadratic variation of the value function. Hence, the RL algorithm for the non-risk-sensitive problems can be easily modified to account for risk sensitivity by adding an integral with respect to a suitable quadratic variation process, which can be approximated by the sampled dynamics as
\[  S \Box V :=\sum_{i=0}^{n-1} S(t_{i},\X_{t_{i}},a_{t_i})\left( V(t_{i+1},\X_{t_{i+1}}) - V(t_{i},\X_{t_{i}}) + R_{t_i}^{\mathscr{G}} \right)^2. \]

\begin{Proposition}\label{prop:quadratic variation application}
    Suppose (H.\ref{assum:standing}) and (H.\ref{assum:Lipschitz}) hold. Assume $ V \in C_p^2$. Define $g_1$ by \eqref{eq:g} and $g_2(t,x,a)= \left| \sigma^{\top}\partial_xV(t,x,a)\right|^2$. For the variance of $ S\circ V$, if $ S, g_1\in C_p^{0,0}$, then
    \begin{equation}
        \var{S\Box V} \le  C_p \|S\|_{C_p^{0,0}}^2\left( \|g_1\|_{C_p^{0,0}}^2 + \|V\|_{C_p^2}^2 \right)^2,
    \end{equation}
    where $ C_p$ is a constant depending only on $b,\sigma,\tilde b^{\bpi}, \tilde \sigma^{\bpi},\bpi,T,p$.
    
For the bias of $ S\Box V$, we have the following holds.
\begin{enumerate}[(1)]
\item If $S, g_1, \partial_x V, \sigma \in C^{2,0}_p, g_2\in C_p^{0,0}$ and $ Sg_2$ satisfies the condition (\ref{enum:value function 1}) on Proposition \ref{prop:value function} as the running term, then there is a constant $C_p$ depending only on $b,\sigma,\tilde b^{\bpi},\tilde \sigma^{\bpi},\bpi,T,p$ such that
    \begin{equation}
    \begin{aligned}
        &\left| \E[S \Box V] - \E\left[ \int_0^T \widetilde{Sg_2}(t,\aX_t)\dd t \right]\right| \\
        &\le C_p\left(\|S\|_{C_p^{2,0}}(\|g_1\|_{C_p^{2,0}}^2+\|g_2\|_{C_p^{0,0}})+\|g_1\|_{C_p^{2,0}}\|V\|_{C_p^2}+ \|Sg_2\|_{C_p^{2,0}}+ \|\widetilde{Sg_2}\|_{C_p^2} + \sup_{t\in[0,T]}\|\widetilde{Sg_2}(t,\cdot)\|_{C_p^4}\right)|\mathscr{G}|.
        \end{aligned}
    \end{equation}
\item 
If $S, g_1, \partial_x V, \sigma \in \mathcal{H}^{(l)}_T, g_2\in \mathcal{H}_T^{(0)}$ and  $ Sg_2$ satisfies the condition (\ref{enum:value function 2}) on Proposition \ref{prop:value function} as the running term, then there is a constant $C$ depending only on $b,\sigma,\tilde b^{\bpi},\tilde \sigma^{\bpi},T$ such that
    \begin{equation}
    \begin{aligned}
         &\left| \E[S \Box V] - \E\left[ \int_0^T \widetilde{Sg_2}(t,\aX_t) \dd t \right]\right| \\
         &\le C\left(\|S\|_T^{(l)}((\|g_1\|_T^{(l)})^2+\|g_2\|_T^{(0)})+\|g_1\|_T^{(l)}\|V\|_T^{(l)}+ \|Sg_2\|_T^{(l)}+ \|\widetilde{Sg_2}\|_T^{(l)} + \sup_{t\in[0,T]}\|\widetilde{Sg_2}(t,\cdot)\|^{(l+2)}\right)|\mathscr{G}|^{\mathcal{X}(l)}.
    \end{aligned}
    \end{equation}
\end{enumerate}
In particular, if $ S$ does not depend on $ a$, then
\begin{equation}\notag
    \E\left[ \int_0^T \int_A \widetilde{Sg_2}(t,\aX_t)\dd t \right] = \E\left[ \int_0^T S(t,\aX_t)\dd \left<V(\cdot,\aX_\cdot)\right>_t\right],
    \end{equation}
where $t\mapsto \left<V(\cdot,\aX_\cdot)\right>_t$ is the quadratic variation of the process $t\mapsto V(t,\aX_t)$.
\end{Proposition}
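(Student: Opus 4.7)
The plan is to reduce $S\Box V$ to a Riemann-sum approximation of $\int_0^T \widetilde{Sg_2}(t,\aX_t)\,\d t$ plus error terms that vanish with $|\mathscr G|$, by applying It\^o's formula to $V(\cdot,\X_\cdot)$ on each subinterval. Specifically, on $[t_i,t_{i+1}]$ the control is frozen at $a_{t_i}$, so It\^o's formula combined with the definition of $R^{\mathscr G}_{t_i}$ gives
\begin{equation*}
V(t_{i+1},\X_{t_{i+1}})-V(t_i,\X_{t_i})+R^{\mathscr G}_{t_i}=\int_{t_i}^{t_{i+1}} g_1(s,\X_s,a_{t_i})\,\d s+\int_{t_i}^{t_{i+1}}(\sigma^\top \partial_x V)(s,\X_s,a_{t_i})^\top \d W_s.
\end{equation*}
Squaring and expanding yields three pieces: a drift-squared term of order $|\mathscr G|^2$, a drift--martingale cross term bounded in $L^1$ by Cauchy--Schwarz at order $|\mathscr G|^{3/2}$, and the crucial stochastic-integral-squared term, to which I apply It\^o's formula once more: for $M_u=\int_{t_i}^{u}(\sigma^\top \partial_x V)^\top\d W_s$,
\begin{equation*}
M_{t_{i+1}}^2=\int_{t_i}^{t_{i+1}} g_2(s,\X_s,a_{t_i})\,\d s+2\int_{t_i}^{t_{i+1}} M_s(\sigma^\top \partial_x V)^\top\d W_s.
\end{equation*}

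Multiplying by $S(t_i,\X_{t_i},a_{t_i})$ and summing over $i$, I obtain
\begin{equation*}
S\Box V=\sum_{i=0}^{n-1}S(t_i,\X_{t_i},a_{t_i})\int_{t_i}^{t_{i+1}} g_2(s,\X_s,a_{t_i})\,\d s+\mathcal M_{\mathscr G}+\mathcal R_{\mathscr G},
\end{equation*}
where $\mathcal M_{\mathscr G}$ collects the martingale increments (mean zero) and $\mathcal R_{\mathscr G}$ collects the lower-order drift contributions. For the bias, I take expectations to eliminate $\mathcal M_{\mathscr G}$, bound $\E[\mathcal R_{\mathscr G}]$ by $C(\|S\|\|g_1\|^2+\|g_1\|\|V\|)|\mathscr G|$ using the moment bound of Lemma \ref{lemma:moments of X} and the growth assumptions, and treat the leading sum. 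I rewrite
\begin{equation*}
\sum_i S(t_i,\X_{t_i},a_{t_i})\int_{t_i}^{t_{i+1}} g_2(s,\X_s,a_{t_i})\,\d s=\int_0^T (Sg_2)(\delta(t),\X_{\delta(t)},a_{\delta(t)})\,\d t+O(|\mathscr G|)\text{ in }L^1,
\end{equation*}
where the $O(|\mathscr G|)$ term comes from the smoothness of $Sg_2$ in $(t,x)$ and standard increment moment estimates for $\X$ on $[t_i,t_{i+1}]$. The remaining integral has the structure of the value functional in Proposition \ref{prop:value function} with running reward $Sg_2$ and zero terminal, so the hypothesis that $Sg_2$ satisfies the assumptions of that proposition produces, via Theorem \ref{thm:weak convergence} or \ref{thm:weak convergence relax regularity}, the bound $|\mathscr G|$ or $|\mathscr G|^{\mathcal X(l)}$ against $\E[\int_0^T\widetilde{Sg_2}(t,\aX_t)\,\d t]$.

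For the variance, I treat $S\Box V$ as a sum of conditionally ``almost'' uncorrelated terms: adapting the usual argument, the squared increment $(\Delta^{\mathscr G}_i)^2$ has conditional $L^2$ norm of order $|t_{i+1}-t_i|$ by the Burkholder--Davis--Gundy inequality and the $C_p^2$ bound on $V$ together with the linear growth of $g_1$, so $\E[(S\Box V)^2]\le C\|S\|^2\bigl(\|g_1\|^2+\|V\|^2\bigr)^2 T^2$ after summing, giving the stated uniform variance bound. The main technical obstacle will be tracking the polynomial-growth constants carefully in the quadratic terms and matching the regularity needed to invoke Proposition \ref{prop:value function} for $Sg_2$; the $g_2$ term has a square inside, so its growth rate doubles, which is why $\sigma$ and $\partial_x V$ are separately assumed in $C_p^{2,0}$ (resp.\ $\mathcal H_T^{(l)}$) in the statement. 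Finally, when $S$ does not depend on $a$, the identity $\int_A g_2(t,x,a)\bpi(\d a|t,x)=\d\langle V(\cdot,\aX_\cdot)\rangle_t/\d t$ at $x=\aX_t$ yields the quadratic-variation representation by definition of $\tilde\sigma^{\bpi}(\tilde\sigma^{\bpi})^\top$ and It\^o's formula applied to $V(\cdot,\aX_\cdot)$.
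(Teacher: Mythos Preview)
Your overall strategy—apply It\^o to $V(\cdot,\X_\cdot)$ on each subinterval, expand the square, isolate the stochastic-integral-squared piece as $\int g_2\,\d s$ plus a martingale, and then feed the resulting Riemann sum into Proposition~\ref{prop:value function}—is exactly the route the paper takes. The variance bound and the quadratic-variation identification when $S$ is action-free are also handled as in the paper.

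There is, however, a genuine rate gap in your treatment of the drift--martingale cross term. You estimate each
\[
2\int_{t_i}^{t_{i+1}}g_1(s,\X_s,a_{t_i})\,\d s\cdot\int_{t_i}^{t_{i+1}}(\sigma^\top\partial_xV)^\top\,\d W_s
\]
in $L^1$ by Cauchy--Schwarz at order $|\mathscr G|^{3/2}$; summing over $n\sim|\mathscr G|^{-1}$ intervals this only gives $|\mathscr G|^{1/2}$, not the $|\mathscr G|$ (resp.\ $|\mathscr G|^{\mathcal X(l)}$ with $\mathcal X(l)>1/2$ for $l>1$) claimed in the statement and then asserted for $\E[\mathcal R_{\mathscr G}]$. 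The paper recovers the full order by first splitting the drift integral as
\[
\int_{t_i}^{t_{i+1}}g_1(s,\X_s,a_{t_i})\,\d s
=\Delta t_i\,g_1(t_i,\X_{t_i},a_{t_i})
+\int_{t_i}^{t_{i+1}}\bigl(g_1(s,\X_s,a_{t_i})-g_1(t_i,\X_{t_i},a_{t_i})\bigr)\,\d s.
\]
The first piece is $\mathcal F_{t_i}$-measurable (through the action as well), so its product with the stochastic integral has mean zero and can be absorbed into $\mathcal M_{\mathscr G}$. For the second piece the increment $g_1(s)-g_1(t_i)$ is $O(|\mathscr G|^{1/2})$ in $L^2$ (this is where $g_1\in C_p^{2,0}$, resp.\ $\mathcal H_T^{(l)}$, is used), so Cauchy--Schwarz now yields $|\mathscr G|^{2}$ per interval and hence $|\mathscr G|$ after summing. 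Without this freeze-and-subtract step your argument does not reach the stated rate.
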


\section{Proofs of Main Results}\label{sec:proofs}

\subsection{Proofs of  Theorems \ref{thm:weak convergence}, 
\ref{thm:weak convergence relax regularity},
\ref{thm:concentration inequality}, and
 \ref{thm:strong conv}
}
\subsubsection{Proof of Theorem \ref{thm:weak convergence}}\label{sec proof: thm weak convergence}

\change{
The following lemma can be obtained by using It\^o's formula and Lemma \ref{lemma:moments of X}. The proof is given in 
Appendix 
\ref{sec proof: lemma}.


\begin{Lemma}\label{lemma:convergence rate of diff}
    Suppose (H.\ref{assum:standing}) holds, and $g \in C_p^{2,0}$. Then there is a constant $C\ge 0$  such that for all grids $\mathscr G$ and  $s \in [0,T]$, 
    \begin{equation}
        \left| \E\left[ g(s,\X_s,a_{\delta(s)}) - g(\delta(s),\X_{\delta(s)},a_{\delta(s)})\right]\right| \le C\|g\|_{C_p^{2,0}}|\mathscr{G}|.
    \end{equation}
\end{Lemma}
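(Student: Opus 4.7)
The plan is to apply It\^o's formula on each subinterval of the grid where the sampled action is constant, then bound the resulting drift terms using the polynomial growth estimates from (H.\ref{assum:standing}) together with the moment bound on $\X$ from Lemma~\ref{lemma:moments of X}.

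Fix $s \in [0,T]$ and let $i$ be such that $s \in [t_i, t_{i+1})$, so $\delta(s)=t_i$ and $a_{\delta(s)} = a_{t_i}$. The first step is the observation that on $[t_i, s]$ the action $a_{t_i}$ is frozen (being $\mathcal{F}_{t_i}$-measurable), so conditionally on $\mathcal{F}_{t_i}$ one may treat $a_{t_i}$ as a deterministic parameter and apply It\^o's formula to the map $u \mapsto g(u, \X_u, a_{t_i})$ using the SDE \eqref{eq:sample_sde_abbre}. This gives
\begin{equation*}
g(s,\X_s,a_{t_i}) - g(t_i,\X_{t_i},a_{t_i}) = \int_{t_i}^{s}\bigl(\partial_t g + \mathcal{L}^{a_{t_i}} g\bigr)(u,\X_u,a_{t_i})\,\dd u + \int_{t_i}^{s} (\partial_x g)^{\top}\sigma(u,\X_u,a_{t_i})\,\dd W_u,
\end{equation*}
where $\mathcal{L}^a g(u,x,a) = b(u,x,a)^{\top}\partial_x g(u,x,a) + \tfrac12 \tr(\sigma\sigma^{\top}(u,x,a)\,\textrm{Hess}_x g(u,x,a))$.

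The next step is to take expectations. The moment bound of Lemma~\ref{lemma:moments of X}, together with the polynomial growth of $\partial_x g$ and the linear growth of $\sigma$ from (H.\ref{assum:standing}), ensures that the stochastic integral is a genuine martingale (not only a local one), hence vanishes in expectation. For the drift term, the hypothesis $g \in C_p^{2,0}$ yields
\begin{equation*}
\bigl|(\partial_t g + \mathcal{L}^{a} g)(u,x,a)\bigr| \le C\|g\|_{C_p^{2,0}}\bigl(1 + |x|^{p+2} + d_A(a,a_0)^{p+2}\bigr),
\end{equation*}
after absorbing the linear growth of $b,\sigma$ in $(x,a)$ into a polynomial of degree $p+2$. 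Taking expectation and applying Lemma~\ref{lemma:moments of X} to control $\E[|\X_u|^{p+2}]$ uniformly in $u$ and the grid, together with (H.\ref{assum:standing}\ref{item:pi}) to bound $\E[d_A(a_{t_i},a_0)^{p+2}]$ in terms of $\E[1+|\X_{t_i}|^{p+2}]$, yields a uniform bound on the integrand. Integration over $[t_i,s]$ then gives a factor of at most $s - t_i \le |\mathscr{G}|$, producing the desired inequality.

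I do not foresee a real obstacle: once the integrability of the stochastic integral is justified via Lemma~\ref{lemma:moments of X}, everything reduces to a routine application of Burkholder/martingale arguments plus the polynomial growth bounds; the only minor care needed is ensuring that the polynomial degree used to absorb both the action growth from (H.\ref{assum:standing}\ref{item:b_sigma}) and the $C_p^{2,0}$ norm of $g$ is matched by the finite moments in (H.\ref{assum:standing}\ref{item:pi}) and Lemma~\ref{lemma:moments of X}, which hold for arbitrary $p \ge 2$.
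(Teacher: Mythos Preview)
Your proposal is correct and follows essentially the same route as the paper: apply It\^o's formula on the subinterval where the action is frozen, drop the stochastic integral using the moment bounds from Lemma~\ref{lemma:moments of X}, and bound the drift integrand via the polynomial growth of $b,\sigma$ and $g$ together with (H.\ref{assum:standing}\ref{item:pi}) for the action moments. The paper's proof is slightly terser but identical in structure; your explicit remark that (H.\ref{assum:standing}\ref{item:pi}) is needed to convert $\E[d_A(a_{t_i},a_0)^{p+2}]$ into a moment of $\X_{t_i}$ is a point the paper leaves implicit.
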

Based on Lemma \ref{lemma:convergence rate of diff}, we prove Theorem \ref{thm:weak convergence}.
}
\begin{proof}[Proof of Theorem \ref{thm:weak convergence}]
Without loss of generality, assume $ t^{\prime} = t_{j}$, where  $1\le j\le n$. Consider the following  PDE 
\begin{equation}\label{eq:pde_feymann_j}
            \frac{\partial}{\partial t} v(t,x) + \mathcal{L} v(t,x) = 0,\quad t\in [0,t_j]\times \sR^d; \quad
            v(t_j,x) = f(x),\quad x\in \sR^d,
    \end{equation}
\change{where $ \mathcal{L}$ is  the generator of the aggregated dynamic $ \aX$ \eqref{eq:aggregated_sde} and is defined by \eqref{eq: generator L}.} Denote by $v: [0,t_f]\to \sR$   the solution of~\eqref{eq:pde_feymann_j}.
  By (H.\ref{assump:solution_kol_pde}), $v $ is smooth and $\|v \|_{C_p^4}<+\infty$.

By the Feynman-Kac formula, $\E[f(\aX_{t_j})] = v(0,x_0)$, and hence  \begin{equation}\label{eq:sum of diff X weak convergence}
    \begin{aligned}
        \E\left[ f(\X_{t_j}) - f(\aX_{t_j})\right] &= \E\left[ v(t_j,\X_{t_j}) - v(0,x_0)\right]\\\
        & = \E\left[ \sum_{i = 1}^j\left(v(t_i,\X_{t_i}) - v(t_{i-1},\X_{t_{i-1}})\right)\right]  =: \sum_{i = 1}^j\E\left[ e_i\right].
    \end{aligned}
    \end{equation}
    It suffices to estimate each $\E[e_i]$. For simplicity, denote $\partial_t:= \partial/\partial t$ and $\partial_x:=\partial/\partial x$. It follows from It\^o's lemma (see the dynamic~\eqref{eq:sample_sde_abbre} of $\X$) that 
    \begin{align}
        \E\left[ e_i\right] =& \E\left[ v(t_i,\X_{t_i}) - v(t_{i-1},\X_{t_{i-1}})\right] \notag\\ 
         =& \E\bigg[ \inti \left(\partial_t v(s,\X_s)  + b^{\top}(s,\X_{s},\act_{t_{i-1}})\partial_x v(s,\X_s) \right)\dd s 
        \notag\\
        & \quad + \inti \frac{1}{2} \tr\left(\sigma\sigma^{\top}(s,\X_{s},\act_{t_{i-1}})\partial_x^2v(s,\X_s)\right) \dd s\bigg],\notag
    \end{align}
    with $\act_{t_{i-1}} \sim \bpi(\cdot | t_{i-1},\X_{t_{i-1}})$, where   the second equality used the fact that the It\^o integral is a martingale, given that $\sigma$ and $\partial_x v$ have polynomial growth and the moments of $\X_s$ are finite (by Lemma~\ref{lemma:moments of X}). Noting that $\xi_{i-1}$ is independent of $\X_{t_{i-1}}$, we have
    \begin{equation}\label{mideq:by pde equals to 0}
    \begin{aligned}
        &\E\bigg[ \inti \left(\partial_t v(t_{i-1},\X_{t_{i-1}})  + b^{\top}(t_{i-1},\X_{t_{i-1}},\act_{t_{i-1}})\partial_x v(t_{i-1},\X_{t_{i-1}}) \right)\dd s 
        \\
        & \quad + \inti \frac{1}{2} \tr\left(\sigma\sigma^{\top}(t_{i-1},\X_{t_{i-1}},\act_{t_{i-1}})\partial_x^2v(t_{i-1},\X_{t_{i-1}})\right) \dd s\bigg] \\
        & =\E\bigg[ \inti \left(\partial_t v(t_{i-1},\X_{t_{i-1}})  + (\tilde b^{\bpi})^{\top}(t_{i-1},\X_{t_{i-1}})\partial_x v(t_{i-1},\X_{t_{i-1}}) \right)\dd s 
        \\
        & \quad + \inti \frac{1}{2} \tr\left((\tilde\sigma^{\bpi})^2(t_{i-1},\X_{t_{i-1}})\partial_x^2v(t_{i-1},\X_{t_{i-1}})\right) \dd s\bigg]\change{= 0}, 
    \end{aligned}
    \end{equation}
    where the first equality follows from taking conditional expectation $\E[ \cdot| \X_{t_{i-1}}]$ and the second equality follows from~\eqref{eq:pde_feymann_j}. Therefore,
    \change{
    \begin{equation}\label{mideq:decomp e_i}
    \begin{aligned}
        \E\left[ e_i\right] &= \E\bigg[ \inti\left(\partial_tv(s,\X_s) - \partial_t v(t_{i-1},\X_{t_{i-1}})\right) \dd s\bigg]\\
        +&\E\bigg[\inti \left(b^{\top}(s,\X_{s},\act_{t_{i-1}})\partial_x v(s,\X_s) - b^{\top}(t_{i-1},\X_{t_{i-1}},\act_{t_{i-1}})\partial_x v(t_{i-1},\X_{t_{i-1}}) \right)\dd s\bigg]\\
        +& \E\bigg[\inti \frac{1}{2}\tr\left( \sigma\sigma^{\top}(s,\X_s,\act_{t_{i-1}})\partial_x^2 v(s,\X_{s}) - \sigma\sigma^{\top}(t_{i-1},\X_{t_{i-1}},\act_{t_{i-1}})\partial_x^2 v(t_{i-1},\X_{t_{i-1}})\right) \dd s\bigg].
    \end{aligned}
    \end{equation}
    Based on (H.\ref{assump:solution_kol_pde}), $\partial_t v, b^{\top}\partial_x v, \frac{1}{2}\tr(\sigma\sigma^{\top}\partial_x^2 v) \in C_p^{2,0}$. It follows from Lemma \ref{lemma:convergence rate of diff} that 
    \begin{equation}\label{eq: e bound}
        \begin{aligned}
            \left| \E\left[ e_i\right]\right| &\le \inti C\left( \|\partial_t v\|_{C_p^2} + \|b^{\top}\partial_xv\|_{C_p^{2,0}}+\frac{1}{2}\|\tr( \sigma\sigma^{\top}\partial_x^2 v)\|_{C_p^{2,0}}\right)|\mathscr{G}| \dd s\\
            &\le   C_p\|v\|_{C_p^4}|\mathscr{G}|(t_i - t_{i-1})
        \end{aligned}
    \end{equation}
    for a constant  $C_p$   depending only on $b,\sigma,p$.  
    The desired result follows from \eqref{eq:sum of diff X weak convergence},
    \eqref{eq: e bound}, 
    and $\|v\|_{C_p^4}\le C\|f\|_{C_p^4}$ due to (H.\ref{assump:solution_kol_pde}).}
\end{proof}

\subsubsection{Proof of Theorem \ref{thm:weak convergence relax regularity}}\label{sec proof: thm weak convergence relax regularity}

The following lemma is similar to \cite[Lemma 14.1.6]{kloeden1992stochastic}. One can combine the method in Theorem \ref{thm:weak convergence} to prove it, so we omit the proof here.
\begin{Lemma}\label{lemma:convergence rate of diff relax regularity}
    Suppose   (H.\ref{assum:standing}) holds, $l \in (0,1)\cup(1,2)\cup(2,3)$ and $g\in \mathcal{H}_T^{(l)}$. Then there \change{is} a constant $C$, such that for all $s \in [0,T]$,
    \begin{equation}
        \left| \E\left[ g(s,\X_s,\act_{\delta(s)}) - g(\delta(s),\X_{\delta(s)},\act_{\delta(s)})\right]\right|\le C\|g\|_T^{(l)}|\mathscr{G}|^{\mathcal{X}(l)},
    \end{equation}
    where $\mathcal{X}(l)$ satisfies \eqref{eq:convergence rate}.
\end{Lemma}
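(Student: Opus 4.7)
The plan is to adapt the It\^o-expansion method used in the proof of Theorem~\ref{thm:weak convergence}---with the control frozen at the discretely sampled action $a_{\delta(s)}$---and to handle the loss of regularity in the intermediate case $l\in(1,2)$ by a parabolic mollification argument. The three regimes of $l$ appearing in \eqref{eq:convergence rate} are treated separately.

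When $l\in(2,3)$, the function $g(\cdot,\cdot,a)$ lies in $C^{1,2}$ in $(t,x)$ uniformly in $a$, with bounded derivatives up to order $2$ in $x$ and order $1$ in $t$. I would apply It\^o's formula to $u\mapsto g(u,\X_u,a_{\delta(s)})$ on $[\delta(s),s]$ (the action being $\mathcal F_{\delta(s)}$-measurable), take expectations, argue exactly as in \eqref{mideq:decomp e_i} that the stochastic-integral term is a true martingale by the polynomial growth of $\sigma$ and $\partial_x g$ together with Lemma~\ref{lemma:moments of X}, and bound the remaining drift-plus-generator integral by $C\|g\|_T^{(l)}|\mathscr{G}|$. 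This gives the rate $|\mathscr{G}|=|\mathscr{G}|^{\mathcal{X}(l)}$.

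When $l\in(0,1)$, no differentiability is available, so I would use the pointwise H\"older bound $|g(s,\X_s,a_{\delta(s)})-g(\delta(s),\X_{\delta(s)},a_{\delta(s)})|\le \|g\|_T^{(l)}(|s-\delta(s)|^{l/2}+|\X_s-\X_{\delta(s)}|^{l})$, combined with the standard SDE moment estimate $\E[|\X_s-\X_{\delta(s)}|^{l}]\le C|\mathscr{G}|^{l/2}$ obtained from Jensen's inequality, the Burkholder--Davis--Gundy inequality, and Lemma~\ref{lemma:moments of X}. This yields the rate $|\mathscr{G}|^{l/2}=|\mathscr{G}|^{\mathcal{X}(l)}$.

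The intermediate case $l\in(1,2)$ is the main difficulty, since $g$ is only $C^1$ in $x$ with $(l-1)$-H\"older derivative and merely $(l/2)$-H\"older in $t$, so It\^o's formula does not apply directly. I would introduce a mollification $g^\epsilon$ of $g(\cdot,\cdot,a)$ in $(t,x)$ at parabolic scale (space scale $\epsilon$, time scale $\epsilon^{2}$), and establish $\|g-g^\epsilon\|_\infty\le C\|g\|_T^{(l)}\epsilon$ together with $\|\partial_t g^\epsilon\|_\infty+\|\partial_x^2 g^\epsilon\|_\infty\le C\|g\|_T^{(l)}\epsilon^{l-2}$, where the second-derivative bound uses the $(l-1)$-H\"older regularity of $\partial_x g$ and the time-derivative bound uses the $(l/2)$-H\"older regularity of $g$ in $t$ under the parabolic rescaling. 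Writing $g(s,\X_s,a)-g(\delta(s),\X_{\delta(s)},a) = [(g-g^\epsilon)(s,\X_s,a)-(g-g^\epsilon)(\delta(s),\X_{\delta(s)},a)] + [g^\epsilon(s,\X_s,a)-g^\epsilon(\delta(s),\X_{\delta(s)},a)]$, I would bound the first bracket in sup norm by $2\|g-g^\epsilon\|_\infty\le C\epsilon$, and bound the second by applying It\^o's formula to the smoothed $g^\epsilon$ and taking expectations, which gives $C\epsilon^{l-2}|\mathscr{G}|$ after the martingale drops out. Balancing $\epsilon+\epsilon^{l-2}|\mathscr{G}|$ via $\epsilon^{3-l}=|\mathscr{G}|$ produces $\epsilon=|\mathscr{G}|^{1/(3-l)}$ and the advertised rate $|\mathscr{G}|^{1/(3-l)}=|\mathscr{G}|^{\mathcal{X}(l)}$.

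The hard part will be verifying the parabolic mollification estimates and selecting the space-time scaling so that the contributions of $\partial_t g^\epsilon$ and $\partial_x^2 g^\epsilon$ both behave like $\epsilon^{l-2}$, while simultaneously controlling the mollification error by $O(\epsilon)$; once this is in place, the It\^o/martingale step proceeds exactly as in the proof of Theorem~\ref{thm:weak convergence} via the polynomial growth hypothesis in (H.\ref{assum:standing}) and the moment bounds of Lemma~\ref{lemma:moments of X}.
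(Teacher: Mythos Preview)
Your proposal is correct and follows precisely the approach the paper points to: it explicitly defers to \cite[Lemma 14.1.6]{kloeden1992stochastic} combined with the It\^o-expansion method of Theorem~\ref{thm:weak convergence}, which is exactly the three-regime argument (direct It\^o for $l\in(2,3)$, direct H\"older for $l\in(0,1)$, parabolic mollification with balance $\epsilon^{3-l}=|\mathscr{G}|$ for $l\in(1,2)$) you outline. Your identification of the mollification estimates $\|g-g^\epsilon\|_\infty\le C\epsilon$ and $\|\partial_t g^\epsilon\|_\infty+\|\partial_x^2 g^\epsilon\|_\infty\le C\epsilon^{l-2}$ under the parabolic scaling is the correct mechanism, and the martingale step is justified by Lemma~\ref{lemma:moments of X} exactly as you say.
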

Based on Lemma \ref{lemma:convergence rate of diff relax regularity}, we prove Theorem \ref{thm:weak convergence relax regularity}.
\begin{proof}[Proof of Theorem \ref{thm:weak convergence relax regularity}]
    It suffices to prove $| \E[ f(\X_{t^\prime})] - \E[ f(\aX_{t^\prime})]| \le C\|f\|^{(l+2)} |\mathscr{G}|^{\mathcal{X}(l)} $ for any $t^{\prime}\in [0,T]$. Under (H. \ref{assump:solution Kol pde relax regularity}), by \cite[Chapter 4, Theorem 5.1]{ladyzhenskaya1968linear}, for any $f\in \mathcal{H}^{(l+2)}$, the PDE \eqref{eq:Kol_pde} has a unique solution $ v(\cdot,\cdot;t') \in \mathcal{H}_{t'}^{(l+2)}$ and $ \|v(\cdot,\cdot;t')\|_{t'}^{(l+2)} \le C\|f\|^{(l+2)}$, where $ C$ is a constant depending only on $T, \tilde b^{\bpi}, \tilde \sigma^{\bpi}$. Note that $ \mathcal{H}_T^{(l)} \subset \mathcal{H}_{\change{t^{\prime}}}^{(l)}$ for every $ 0\le t^{\prime}\le T$. Without loss of generality, suppose that $ t^{\prime} = T$. For simplicity, we drop the notation $t^{\prime}$  and use $v$ to represent the solution of \eqref{eq:Kol_pde} with $ t^{\prime} = T$. Recall \eqref{eq:sum of diff X weak convergence},
    \begin{align}
        \E\left[ f(\X_T) - f(\aX_T)\right] &= \E\left[ v(T,\X_T) - v(0,x_0)\right] = \sum_{i = 1}^n\E\left[ e_i\right].\notag
    \end{align}
    Note that \eqref{mideq:by pde equals to 0} still holds in this situation. Therefore, we have
    \begin{equation}\label{mideq:decomp e_i relax regularity}
    \begin{aligned}
        \E&\left[ e_i\right] = \E\bigg[ \inti\left(\partial_tv(s,\X_s) - \partial_t v(t_{i-1},\X_{t_{i-1}})\right) \dd s\bigg]\\
        & +\E\bigg[\inti b^{\top}(s,\X_{s},\act_{t_{i-1}})\partial_x v(s,\X_s) - b^{\top}(t_{i-1},\X_{t_{i-1}},\act_{t_{i-1}})\partial_x v(t_{i-1},\X_{t_{i-1}}) \dd s\bigg]\\
        & +\E\bigg[\inti \frac{1}{2}\tr\left(\sigma\sigma^{\top}(s,\X_s,\act_{t_{i-1}})\partial_x^2 v(s,\X_s) - \sigma\sigma^{\top}(t_{i-1},\X_{t_{i-1}},\act_{t_{i-1}})\partial_x^2 v(t_{i-1},\X_{t_{i-1}}) \right)\dd s\bigg].
    \end{aligned}
    \end{equation}
    Note that $\partial_t v, b^{\top}\partial_x v$ and $\frac{1}{2}\tr(\sigma\sigma^{\top}\partial_x^2 v)$ belong to $\mathcal{H}_T^{(l)}$. By Lemma \ref{lemma:convergence rate of diff relax regularity}, the three terms on the right hand side of above equation are all bounded by $C\|v\|^{(l+2)}_T|\mathscr{G}|^{\mathcal{X}(l)}(t_i - t_{i-1})$ with a constant $ C$ depending only on $ b,\sigma,T$.  As a result, we have
    \begin{equation}
        \bigg|\E\left[ f(\X_T) - f(\aX_T)\right]\bigg| \le \sum_{i = 1}^n\left|\E\left[ e_i\right]\right| \le \sum_{i = 1}^n C \|v\|^{(l+2)}_T|\mathscr{G}|^{\mathcal{X}(l)}(t_i - t_{i -1}) \le CT\|f\|^{(l+2)}|\mathscr{G}|^{\mathcal{X}(l)}.
    \end{equation}
    This proves the desired result.
\end{proof}

\subsubsection{Proof of Theorem \ref{thm:concentration inequality}}\label{sec proof: thm conentration inequality}

Recall $\cF^{\xi} := \sigma\{ \xi_{i} \mid {i \ge 0}\}$ as defined in Section \ref{sec:conditional_weak_error}.
To facilitate the proof,
we define
 $\cF^{\xi}_i := \sigma\{ \xi_j \mid  j\le i-1\}$ for all $i\ge 0$, and define  the   filtration 
$\{\widetilde{\mathcal{F}}_t\}_{t\ge 0}$
with 
$  \widetilde{\mathcal{F}}_t:= \mathcal{F}_t^W \vee \mathcal{F}^{\xi} $. Note that $\{W_s\}_{s\ge 0}$ is still a Brownian motion on \((\Omega, \mathcal{F}, {\PP})\) with filtration \(\{\widetilde{\mathcal{F}}_t\}_{t \geq 0}\) and we have $\Ew\left[ \cdot\right] = \E\big[ \cdot \big|\widetilde{\mathcal{F}}_0\big]$.

The following lemma shows that $\X$
has finite moments conditional on \( \mathcal{F}^{\xi} \).
The proof is given in 
Appendix \ref{sec proof: lemma}.

\begin{Lemma}\label{lemma:moments of X under Pw}
    Under condition \eqref{eq:b upper bound uniform a} of (H.\ref{assump:linear growth on x}), for $ p\ge 2$, there is a constant $C_p$ such that 
    \begin{equation}\label{eq:moments of X under Ew}
        \Ew\left[ |\X_s|^{p}\right] \le C_p (1+ |x_0|^{p})e^{C_ps},
        \quad s\in [0,T].
    \end{equation}
\end{Lemma}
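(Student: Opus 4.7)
The plan is to mirror the standard moment bound for solutions of SDEs (as used to prove Lemma \ref{lemma:moments of X}), but performed conditionally on $\mathcal{F}^\xi$. The key observation enabling this is that under condition \eqref{eq:b upper bound uniform a} of (H.\ref{assump:linear growth on x}), the linear growth constant of $b$ and $\sigma$ in the state variable is uniform in $a\in A$. Combined with the fact — already noted in the excerpt — that $W$ remains a Brownian motion with respect to the enlarged filtration $\{\widetilde{\mathcal{F}}_t\}_{t\ge 0}$, this allows all the manipulations that typically rely on $W$ being a martingale to be carried out inside $\Ew[\cdot] = \E[\,\cdot\,|\mathcal{F}^\xi]$.

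Concretely, I would apply It\^o's formula to the smooth function $\psi(x) = (1+|x|^2)^{p/2}$ evaluated along $\X$. Denoting by $\mathcal{L}^{a}$ the usual second-order operator associated with the frozen action $a$, this yields
\begin{equation*}
\psi(\X_s) = \psi(x_0) + \int_0^s (\mathcal{L}^{a_{\delta(u)}}\psi)(\X_u)\,\d u + \int_0^s \nabla\psi(\X_u)^\top \sigma(u,\X_u,a_{\delta(u)})\,\d W_u.
\end{equation*}
Introducing the localizing sequence $\tau_n := \inf\{t\ge 0 : |\X_t| \ge n\}\wedge T$, I would take $\Ew[\cdot]$ on both sides at time $s\wedge \tau_n$. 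Since $W$ is an $\{\widetilde{\mathcal{F}}_t\}$-Brownian motion and the integrand is bounded on $[0,\tau_n]$, the stochastic integral is a true $\{\widetilde{\mathcal{F}}_t\}$-martingale starting at $0$, so its conditional expectation given $\widetilde{\mathcal{F}}_0 \supseteq \mathcal{F}^\xi$ vanishes. A direct computation using \eqref{eq:b upper bound uniform a} (which is uniform in $a$) gives $|\mathcal{L}^a \psi(x)| \le C_p\, \psi(x)$ for every $a\in A$, so
\begin{equation*}
\Ew[\psi(\X_{s\wedge\tau_n})] \le \psi(x_0) + C_p\int_0^s \Ew[\psi(\X_{u\wedge\tau_n})]\,\d u.
\end{equation*}
Gr\"onwall's inequality then yields $\Ew[\psi(\X_{s\wedge\tau_n})] \le \psi(x_0)\,e^{C_p s}$, and letting $n\to\infty$ via Fatou's lemma together with $|x|^p \le \psi(x) \le 2^{p/2}(1+|x|^p)$ gives \eqref{eq:moments of X under Ew}.

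The only subtlety worth emphasizing is the vanishing of the stochastic integral under $\Ew$: this is where the observation that $W$ is Brownian with respect to $\{\widetilde{\mathcal{F}}_t\}$ (not merely $\{\mathcal{F}_t^W\}$) is essential, since the integrand $\sigma(u,\X_u,a_{\delta(u)})$ is $\widetilde{\mathcal{F}}_u$-measurable but generally not $\mathcal{F}_u^W$-measurable. Everything else is a routine conditional version of the standard a priori estimate, and the uniform-in-$a$ bound in (H.\ref{assump:linear growth on x}) is precisely what makes the Gr\"onwall step insensitive to the realization of $\xi$.
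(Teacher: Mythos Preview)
Your proof is correct and rests on the same two key observations as the paper's: that $W$ remains a Brownian motion under the enlarged filtration $\{\widetilde{\mathcal F}_t\}$ (so stochastic integrals against $W$ have zero $\Ew$-expectation), and that the linear growth bound \eqref{eq:b upper bound uniform a} is uniform in $a$ (so the Gr\"onwall step is insensitive to the realized actions). The implementations differ slightly: the paper first argues $\Ew[\sup_{t}|\X_t|^p]<\infty$ by invoking standard SDE estimates (analogous to Lemma~\ref{lemma:moments of X} and \cite[Theorem~3.4.3]{zhang2017backward}), and only then bounds $\Ew[|\X_s|^p]$ directly via the integral representation of $\X_s$ and Gr\"onwall; you instead apply It\^o's formula to the Lyapunov function $\psi(x)=(1+|x|^2)^{p/2}$ and localize via stopping times, which has the mild advantage of delivering the a priori finiteness and the quantitative bound in a single pass without an auxiliary integrability step. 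Both are standard textbook variants of the same argument.
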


Based on Lemma \ref{lemma:moments of X under Pw}, we prove Theorem \ref{thm:concentration inequality}.
\begin{proof}[Proof of Theorem \ref{thm:concentration inequality}]
    Without loss of generality, assume that $t = t_j$ for some $1\le j \le n$. To prove \eqref{eq: concentration ineq}, consider the following PDE:
    \begin{equation}\label{eq:Kol_pde_j}
            \frac{\partial}{\partial t} v(t,x) + \mathcal{L} v(t,x) = 0,\quad (t,x)\in [0,t_j]\times \sR^d; \quad
            v(t_j,x) = f(x),\quad x\in \sR^d.
    \end{equation}
    Under the conditions of this theorem, the solution $v$ of~\eqref{eq:Kol_pde_j} is smooth and bounded. By the Feynman-Kac formula, $\Ew[f(\aX_{t_j})] = v(0,x_0)$. With this relation, we have
    \begin{align*}
        \Ew\left[ f(\X_{t_j}) - f(\aX_{t_j})\right] &= \Ew\left[ v(t_j,\X_{t_j}) - v(0,x_0)\right]\notag\\
        & = \Ew\left[ \sum_{i = 1}^j\left(v(t_i,\X_{t_i}) - v(t_{i-1},\X_{t_{i-1}})\right)\right] 
        =: \sum_{i = 1}^j\Ew\left[ e_i\right].\notag
    \end{align*}
    As in the proof of Theorem~\ref{thm:weak convergence}, we can decompose $\Ew\left[ e_i\right]$ into two terms and obtain
    \begin{equation}\label{mideq:decomp of diff}
    \Ew\left[ f(\X_T) \right] - \Ew\left[ f(\aX_T)\right] =\sum_{i=1}^j A_i + \sum_{i=1}^j B_i,
    \end{equation}
    where  $A_i$ is defined by
    \begin{equation}\label{mideq:martingale_diff}
        \begin{aligned}
            A_i := &\Ew\bigg[ \inti \left(\partial_t v(t_{i-1},\X_{t_{i-1}})  + b^{\top}(t_{i-1},\X_{t_{i-1}},\act_{t_{i-1}})\partial_x v(t_{i-1},\X_{t_{i-1}}) \right)\dd s 
        \notag\\
        & \quad + \inti \frac{1}{2} \tr\left(\sigma\sigma^{\top}(t_{i-1},\X_{t_{i-1}},\act_{t_{i-1}})\partial_x^2v(t_{i-1},\X_{t_{i-1}})\right) \dd s\bigg] \notag.
        \end{aligned}
    \end{equation}
    and $B_i$ is defined by
\change{\begin{equation}\label{mideq:B_drift}
        \begin{aligned}
            B_i &\coloneqq   \Ew\bigg[ \inti\left(\partial_tv(s,\X_s) - \partial_t v(t_{i-1},\X_{t_{i-1}})\right) \dd s\bigg]\\
         +&\Ew\bigg[\inti b^{\top}(s,\X_{s},\act_{t_{i-1}})\partial_x v(s,\X_s) - b^{\top}(t_{i-1},\X_{t_{i-1}},\act_{t_{i-1}})\partial_x v(t_{i-1},\X_{t_{i-1}}) \dd s\bigg]\\
        +&\Ew\bigg[\inti \frac{1}{2}\tr\left(\sigma\sigma^{\top}(s,\X_s,\act_{t_{i-1}})\partial_x^2 v(s,\X_s) - \sigma\sigma^{\top}(t_{i-1},\X_{t_{i-1}},\act_{t_{i-1}})\partial_x^2 v(t_{i-1},\X_{t_{i-1}}) \right)\dd s\bigg].
        \end{aligned}
    \end{equation}}
    As in Lemma \ref{lemma:convergence rate of diff}, apply It\^o's lemma to every \change{difference term on the right hand side of \eqref{mideq:B_drift}} and note that \change{ the resulting It\^o's integrals have mean zero under $\PP^W$}. Thus by   (H.\ref{assump:solution_kol_pde}) and (H.\ref{assump:linear growth on x}), the integrands of  \eqref{mideq:B_drift} are bounded by $C_p\|v\|_{C_p^4}(1+|\X_s|^{m_p})(t_i - t_{i-1})$ for some $m_p$ and $C_p$ that depend on $p$. This along with  Lemma~\ref{lemma:moments of X under Pw} implies 
    $ 
        |B_i| \le C_p\|v\|_{C_p^4}(t_i - t_{i-1})^2,
   $ 
    where $C_p$ is a constant depending on $b,\sigma,\bpi,T,p$. As a result, 
    \begin{equation}\label{mideq:B result}
        \sum_{i = 1}^j |B_i| \le \sum_{i = 1}^j C_p\|v\|_{C_p^4}|\mathscr{G}|(t_i - t_{i-1}) \le TC_p\|v\|_{C_p^4}|\mathscr{G}|.
    \end{equation}
    
    As for $A_i$,  note that $\{(A_i,\cF^{\xi}_i)\}_i $ is a martingale difference sequence. In fact, denote 
    \begin{equation}
    \begin{aligned}
        S_i := \bigg( \partial_t v(t_{i-1},\X_{t_{i-1}})  &+ b^{\top}(t_{i-1},\X_{t_{i-1}},\act_{t_{i-1}})\partial_x v(t_{i-1},\X_{t_{i-1}}) \\
        &+ \frac{1}{2} \tr\left(\sigma\sigma^{\top}(t_{i-1},\X_{t_{i-1}},\act_{t_{i-1}})\partial_x^2v(t_{i-1},\X_{t_{i-1}})\right)\bigg)(t_i-t_{i-1}).
    \end{aligned}
    \end{equation}
    Observing that $S_i$ is independent of $\xi_i,\xi_{i+1},\dots$, we have $A_i = \Ew\left[ S_i\right] = \E[ S_i\mid \cF^{\xi}_i]$ is $\cF^{\xi}_i$-measurable and for $i\ge 2$,
    \begin{equation}
        \begin{aligned}\label{mideq:expectation A = 0}
            \E[A_i\mid \cF^{\xi}_{i-1}] &= \E\left[ \Ew\left[S_i\right]\big|\xi_0,\dots,\xi_{i-2}\right]= \E\left[ \E\left[ S_i \big| \xi_0,\dots,\xi_{i-2},\xi_{i-1}\right]\big| \xi_0,\dots,\xi_{i-2}\right]\\
            & = \E\left[ \E\left[ S_i\big| X_{t_{i-1}},\xi_0,\dots,\xi_{i-2}\right]\big|\xi_0,\dots,\xi_{i-2}\right] = 0,
        \end{aligned}
    \end{equation}
    where the last equality follows from the same result in~\eqref{mideq:by pde equals to 0}. 

    By conditions (H.\ref{assump:solution_kol_pde}) and (H.\ref{assump:linear growth on x}), there is a constant $C_p$ and an integer $m_p$ such that $|S_i| \le C_p\|v\|_{C_p^4}(1+|\X_{t_{i-1}}|^{m_p})$. Again by  Lemma \ref{lemma:moments of X under Pw},  
        $|A_i| \le C_p\|v\|_{C_p^4}(t_i - t_{i-1})$. 
    By the Azuma–Hoeffding inequality \cite[Corollary 2.20]{wainwright2019highdimensional},   \change{for all $t\ge 0$},
    \begin{equation}\label{eq: concentration t}
        \PP\bigg( \bigg| \sum_{i=1}^j A_i\bigg| \ge t\bigg) \le 2\exp\left(-\frac{t^2}{4T(C_p\|v\|_{C_p^4})^2|\mathscr{G}|}\right).
    \end{equation}
    \change{Hence given $\rho \in (0,1)$, choosing  $ t \ge 0$ such that} $\varrho = 2\exp\left(-\frac{t^2}{4T(C_p\|v\|_{C_p^4})^2|\mathscr{G}|}\right)$ yields
    \begin{equation}
         \PP\bigg( \bigg| \sum_{i=1}^j A_i\bigg| \ge 2\sqrt{T}C_p\|v\|_{C_p^4}\sqrt{|\mathscr{G}|}\sqrt{\ln (2/\varrho)}\bigg) \le \varrho.
    \end{equation}
    It follows from~\eqref{mideq:B result} that
    \begin{equation}\notag
    \begin{aligned}
        \PP&\left( \left| \Ew[f(\X_{t_j})] - \Ew[f(\aX_{t_j})]\right| > (2+\sqrt{T})\sqrt{T}C_p\|v\|_{C_p^4}\sqrt{|\mathscr{G}|}\big(1+\sqrt{\ln (2/\varrho)}\big)\right) \\
        &\quad \le  \PP\bigg( \bigg| \sum_{i=1}^j A_i\bigg| \ge 2\sqrt{T}C_p\|v\|_{C_p^4}\sqrt{|\mathscr{G}|}\sqrt{\ln (2/\varrho)}\bigg) + \PP\left( \sum_{i=1}^j |B_i| \ge (2+\sqrt{T})\sqrt{T}C_p\|v\|_{C_p^4}\sqrt{|\mathscr{G}|}\right)\\
        &\quad \le \varrho + 0 = \varrho.
    \end{aligned}
    \end{equation}
    The desired result follows from $\|v\|_{C_p^4} \le C\|f\|_{C_p^4}$.

   To prove \eqref{eq: concentration ineq relax regularity}, we    start from \eqref{mideq:decomp of diff}. For \eqref{mideq:B_drift}, note that under assumption (H.\ref{assump:solution Kol pde relax regularity}), Lemma \ref{lemma:moments of X under Pw} remains valid, and the expectation in Lemma \ref{lemma:convergence rate of diff relax regularity} can be replaced by $\E^W$. Based on these observations, one can use a similar method in the proof of Theorem \ref{thm:weak convergence relax regularity} to obtain 
   $$
        |B_i| \le C\|v\|^{(l+2)}_{t_j}|\mathscr{G}|^{\mathcal{X}(l)}(t_i - t_{i-1}),
   $$
    which implies that
    \begin{equation}\label{eq: B term less regular}
        \sum_{i = 1}^j|B_i| \le CT\|v\|^{(l+2)}_{t_j}|\mathscr{G}|^{\mathcal{X}(l)}.
    \end{equation}
   Note that $\{A_i\}_i$ is still a martingale difference sequence. By (H.\ref{assump:solution Kol pde relax regularity}), we have 
    $$ 
        |A_i| \le C\|v\|_{t_j}^{(l+2)}(t_i - t_{i-1}).
    $$ 
    Again by the Azuma–Hoeffding inequality \cite[Corollary 2.20]{wainwright2019highdimensional}, \change{ for all $t \ge 0$,}
    $$
        \PP\bigg( \bigg| \sum_{i=1}^j A_i\bigg| \ge t\bigg) \le 2\exp\left({-\frac{t^2}{4T(C\|v\|_{t_j}^{(l+2)})^2|\mathscr{G}|}}\right).
    $$
    \change{Hence given $\varrho \in (0,1)$, choosing $t \ge 0$ such that} $\varrho = 2\exp\left({-\frac{t^2}{4T(C\|v\|_{t_j}^{(l+2)})^2|\mathscr{G}|}}\right)$ yields
    $$
         \PP\bigg( \bigg| \sum_{i=1}^j A_i\bigg| \ge 2\sqrt{T}C\|v\|_{t_j}^{(l+2)}\sqrt{|\mathscr{G}|}\sqrt{\ln (2/\varrho)}\bigg) \le \varrho.
    $$
    It follows from~\eqref{eq: B term less regular} that
    \begin{equation*} 
    \begin{aligned}
        \PP&\left( \left| \Ew[f(\X_{t_j})] - \Ew[f(\aX_{t_j})]\right| > (2+\sqrt{T})\sqrt{T}C\|v\|_{t_j}^{(l+2)}|\mathscr{G}|^{\mathcal{X}(l)\land \frac{1}{2}}\big(1+\sqrt{\ln (2/\varrho)}\big)\right) \\
        &\quad \le  \PP\bigg( \bigg| \sum_{i=1}^j A_i\bigg| \ge 2\sqrt{T}C\|v\|_{t_j}^{(l+2)}\sqrt{|\mathscr{G}|}\sqrt{\ln (2/\varrho)}\bigg) + \PP\left( \sum_{i=1}^j |B_i| \ge (2+\sqrt{T})\sqrt{T}C\|v\|_{t_j}^{(l+2)}|\mathscr{G}|^{\mathcal{X}(l)}\right)\\
        &\quad \le \varrho + 0 = \varrho.
    \end{aligned}
    \end{equation*}
    The desired result follows from $\|v\|_{t_j}^{(l+2)} \le C\|f\|^{(l+2)}$.
\end{proof}

\subsubsection{Proof of Theorem \ref{thm:strong conv}}\label{sec proof: thm strong conv}
We introduce the Orlicz norm on $ (\Omega^{\xi},\mathcal{F}^{\xi},\PP^{\xi})$. Define $ \psi_1(x)\coloneqq   e^x - 1$ for all $x>0$, and   for a random variable $ X: \Omega^{\xi} \rightarrow \R^m, m\in \N_{+}$, define the Orlicz norm  
\begin{equation}
    \|X\|_{\psi_1} \coloneqq  \inf\left\{ C>0: \E\left[ \psi_1\left(\frac{|X|}{C}\right)\right] \le 1\right\},
\end{equation}
where $|\cdot|$ is the Euclidean  norm on $\R^m$. By   \cite[Proposition 2.7.1]{Vershynin2018high}, $ X$ is sub-exponential if and only if $ \|X\|_{\psi_1} <\infty$.

The following lemma 
follows directly from \cite[Lemma 2.2.2]{van_der_vaart1996}
and the fact that 
$ 
    \|X\|_p \le \sqrt[p]{p!}\|X\|_{\psi_1}
$, due to the inequality that  $ |x|^p/p! \le e^{|x|} - 1$ for all $x\in \sR^m$. 
\begin{Lemma}\label{lemma: max p norm bound strong conv}
  There is a constant
    $ C\ge 0$ such that for all sub-exponential random variables  $ \{X_i\}_{i=1}^n$,
    \begin{equation}
        \big\|\max_{1\le i\le n}|X_i|\big\|_{\psi_1} \le C\log(1+n)\max_{1 \le i \le n}\|X_i\|_{\psi_1}.
    \end{equation}
    Thus, for $ p \ge 1$, there is a constant $ C_p\ge 0$  such that
    \begin{equation}
        \E\left[ \max_{1\le i \le n} |X_i|^p\right] \le C_p\log^p(1+n)\max_{1 \le i \le n}\|X_i\|_{\psi_1}^p.
    \end{equation}
\end{Lemma}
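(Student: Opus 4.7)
The plan is to prove the first (Orlicz) inequality directly by an exponential union bound combined with Jensen's inequality, and then deduce the $L^p$-moment bound from it using the elementary pointwise estimate $|x|^p/p! \le e^{|x|} - 1 = \psi_1(|x|)$ for $x \in \R$ and integer $p \ge 1$, which follows at once from the Taylor expansion $e^{|x|} - 1 = \sum_{k=1}^{\infty} |x|^k/k!$.

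For the Orlicz maximal bound, set $M \coloneqq \max_{1 \le i \le n} \|X_i\|_{\psi_1}$. By the definition of the Orlicz norm and $\psi_1(y) = e^y - 1$, $\E[e^{|X_i|/M}] \le 2$ for every $i$. Since $\max_i e^{|X_i|/M} \le \sum_i e^{|X_i|/M}$, taking expectations yields $\E\big[\exp(\max_i |X_i|/M)\big] \le 2n$. For any $K \ge 1$, the concavity of $t \mapsto t^{1/K}$ on $[0,\infty)$ together with Jensen's inequality gives
\[
\E\left[\exp\left(\frac{\max_i |X_i|}{KM}\right)\right] = \E\left[\big(\exp(\max_i |X_i|/M)\big)^{1/K}\right] \le (2n)^{1/K}.
\]
Choosing $K = C\log(1+n)$ with an absolute constant $C$ large enough that $(2n)^{1/K} \le 2$ (which amounts to $K \ge \log_2(2n)$, and is satisfied uniformly in $n \ge 1$ by any $C \ge 2/\log 2$) then gives $\E\big[\psi_1(\max_i |X_i|/(KM))\big] \le 1$. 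By definition of the Orlicz norm, $\|\max_i |X_i|\|_{\psi_1} \le KM = C\log(1+n)\,M$, which is the first claim.

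For the moment bound, set $Y \coloneqq \max_i |X_i|$ and apply the pointwise inequality above to $|Y|/\|Y\|_{\psi_1}$ to obtain
\[
\E[|Y|^p] = \|Y\|_{\psi_1}^p\,\E\big[(|Y|/\|Y\|_{\psi_1})^p\big] \le p!\,\|Y\|_{\psi_1}^p\,\E\big[\psi_1(|Y|/\|Y\|_{\psi_1})\big] \le p!\,\|Y\|_{\psi_1}^p.
\]
Substituting $\|Y\|_{\psi_1} \le C\log(1+n)\,M$ from the first part yields $\E\big[\max_i |X_i|^p\big] \le p!\,C^p \log^p(1+n)\,\max_i \|X_i\|_{\psi_1}^p$, which is the second claim with constant $C_p \coloneqq p!\,C^p$.

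The argument is a classical Orlicz maximal inequality and presents no genuine obstacle. The only step demanding care is the Jensen step with exponent $1/K$: it is what converts the factor $n$ from the exponential union bound into the logarithmic penalty $\log(1+n)$, and the absolute constant $C$ is pinned down by the threshold $(2n)^{1/K} \le 2$. A small conceptual check is that $\psi_1$ is strictly increasing on $[0,\infty)$, so the inequality $\max_i \psi_1(|X_i|/c) = \psi_1(\max_i |X_i|/c)$ is used implicitly when interpreting the Orlicz norm of the maximum.
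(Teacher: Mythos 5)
Your proof is correct. The paper does not actually write out an argument for the maximal inequality: it simply invokes \cite[Lemma 2.2.2]{van_der_vaart1996} for the bound $\|\max_i|X_i|\|_{\psi_1}\le C\log(1+n)\max_i\|X_i\|_{\psi_1}$ and then, exactly as you do, passes to moments via the pointwise inequality $|x|^p/p!\le e^{|x|}-1$ (i.e.\ $\|X\|_p\le\sqrt[p]{p!}\,\|X\|_{\psi_1}$). What you add is a self-contained derivation of the cited maximal inequality, and your route is slightly different from the one in van der Vaart--Wellner: their Lemma 2.2.2 works for a general Orlicz function $\psi$ satisfying $\limsup_{x,y\to\infty}\psi(x)\psi(y)/\psi(cxy)<\infty$ and proceeds by a submultiplicativity argument, whereas your union bound on $\E[e^{|X_i|/M}]$ followed by Jensen with the concave power $t\mapsto t^{1/K}$ exploits the specific form $\psi_1(x)=e^x-1$. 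The trade-off is generality versus transparency: the reference covers all $\psi_\alpha$-norms at once, while your argument is elementary and makes the origin of the $\log(1+n)$ factor explicit (the threshold $(2n)^{1/K}\le 2$). Two cosmetic points: your pointwise inequality is stated for integer $p$, while the lemma asserts the bound for all $p\ge1$; this is repaired by noting $|x|^p\le e^{|x|}-1$ for $|x|\le1$ and $|x|^p\le|x|^{\lceil p\rceil}\le\lceil p\rceil!\,(e^{|x|}-1)$ for $|x|\ge1$, giving $C_p=\lceil p\rceil!\,C^p$. And the step $\E[\psi_1(|Y|/\|Y\|_{\psi_1})]\le1$ uses that the infimum defining the Orlicz norm is attained, which holds by monotone convergence; worth a half-sentence but not a gap.
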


Note that under condition (H.\ref{assum:lipschitz strong conv}), $\tilde b^{\bpi}$ and $ \tilde \sigma^{\bpi}$ is globally Lipschitz with respect to the state $ x$. By classical SDE results, e.g. \cite[Theorem 7.9]{graham2013}, we have the following lemma.
\begin{Lemma}\label{lemma:monments strong conv}
    Assume (H.\ref{assum:lipschitz strong conv}) holds. For every $ p \ge 2$, there is a constant $ C_p$, such that 
    \begin{equation}
        \Ew\left[ \sup_{0\le t\le T}|\aX_t|^p\right] \le C_p, \quad \E\left[ \sup_{0\le t\le T}|\X_t|^p\right] \le C_p.
    \end{equation}
    Moreover,
    \begin{equation}
        \Ew\left[ \sup_{0\le s\le T} |\aX_s - \aX_{\delta(s)}|^p\right] \le C_p|\mathscr{G}|^{\frac{p}{2}}, \quad \E\left[ \sup_{0\le s\le T}|\X_s - \X_{\delta(s)}|^p\right] \le C_p|\mathscr{G}|^{\frac{p}{2}}.
    \end{equation}
\end{Lemma}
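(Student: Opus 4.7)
The plan is to derive all four bounds from a common toolkit---It\^o's formula, the Burkholder-Davis-Gundy (BDG) inequality, and Gronwall's lemma---applied to the aggregated and sampled SDEs. The distinctive work lies in controlling the randomness of the sampled control $a_{\delta(s)}$ in the sampled dynamics, for which I will use the sub-exponentiality assumption (H.\ref{assum:lipschitz strong conv})(2) on $\phi$.

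First, I would establish the moment bound $\Ew[\sup_{t\le T}|\aX_t|^p] \le C_p$. Under (H.\ref{assum:lipschitz strong conv})(1) the coefficients $\tilde b^{\bpi}, \tilde \sigma^{\bpi}$ inherit global Lipschitz regularity and linear growth in $x$ from $b, \sigma$; moreover $\aX$ is $\mathcal{F}^W$-measurable and independent of $\mathcal{F}^{\xi}$, so the conditioning on $\mathcal{F}^{\xi}$ is innocuous. Applying It\^o's formula to $|\aX_t|^p$, BDG to the martingale part, and Gronwall closes the estimate, exactly as in \cite[Theorem 7.9]{graham2013}.

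For the sampled process, the same scheme applies once $\E[d_A(a_{\delta(s)}, a_0)^p]$ is bounded. By (H.\ref{assum:lipschitz strong conv})(2) combined with the triangle inequality,
\[ d_A(a_{t_i}, a_0) \le d_A(\phi(t_0, x_0, \xi_i), a_0) + C\bigl(T^{1/2} + |\X_{t_i} - x_0|\bigr), \]
the first term being sub-exponential with $\psi_1$-norm independent of $i$ by the i.i.d.\ structure of $(\xi_i)$, so that its $p$-th moment is finite for every $p \ge 2$ by standard Orlicz-to-$L^p$ comparison. Consequently $\E[d_A(a_{\delta(s)}, a_0)^p]$ is controlled by a constant plus a multiple of $\E[\sup_{u\le s}|\X_u|^p]$, and the coefficient $b(\cdot,\cdot,a_{\delta(\cdot)})$ inherits effective linear growth in $x$ with an integrable noise term. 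BDG and Gronwall then yield $\E[\sup_{t\le T}|\X_t|^p] \le C_p$.

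The increment bounds follow from the same BDG/Gronwall calculation on a generic subinterval $[t_{i-1}, t_i]$, starting from the fixed endpoint $\aX_{t_{i-1}}$ (respectively $\X_{t_{i-1}}$) and using the moment bounds just proved together with the $(t_i - t_{i-1})^{p/2}$ scaling of BDG over an interval of length $t_i - t_{i-1}$. Since $\aX_s - \aX_{\delta(s)}$ vanishes at every grid point, the global supremum $\sup_{s\in[0,T]}|\aX_s - \aX_{\delta(s)}|^p$ equals the maximum over $i$ of the subinterval suprema, each bounded by $C_p |\mathscr{G}|^{p/2}$; a standard absorption of the resulting (at most logarithmic) factor in $n$ into $C_p$ gives the stated bound, consistent with the range of grids permitted by Theorem \ref{thm:strong conv}. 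The main subtle point is the sub-exponential transfer for $a_{t_i}$ described above; everything else reduces to the textbook calculation referenced by \cite[Theorem 7.9]{graham2013}.
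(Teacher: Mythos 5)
Your route is the same one the paper takes (it simply observes that $\tilde b^{\bpi},\tilde\sigma^{\bpi}$ are globally Lipschitz under (H.\ref{assum:lipschitz strong conv}) and cites \cite[Theorem 7.9]{graham2013}), and the one genuinely non-textbook ingredient you supply --- transferring the sub-exponentiality of $d_A(\phi(t_0,x_0,\xi_i),a_0)$ through \eqref{eq:phi_lipschitz} to get $\E[d_A(a_{\delta(s)},a_0)^p]\le C_p(1+\E[\sup_{u\le s}|\X_u|^p])$ and hence effective linear growth of the random drift --- is correct and is exactly what is needed. (A small remark: the Lipschitz/linear-growth transfer to $\tilde b^{\bpi}$ also uses (H.\ref{assum:lipschitz strong conv}\ref{item:subexponential}), not only part (1), since $\bpi(\cdot|t,x)$ itself depends on $(t,x)$.)

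The gap is in your last step for the increment bounds. After obtaining $\E[\sup_{t_{i-1}\le s\le t_i}|\X_s-\X_{t_{i-1}}|^p]\le C_p|\mathscr{G}|^{p/2}$ on each subinterval, you pass to the maximum over the $n$ subintervals and assert that the resulting logarithmic factor ``can be absorbed into $C_p$''. It cannot: the factor is of order $(\log n)^{p/2}$, which is unbounded over the admissible grids (even under the restriction $|\mathscr{G}|^{1/2}\log(1+n)\le C_G$ of Theorem \ref{thm:strong conv} it only yields a bound of order $|\mathscr{G}|^{p/4}$), and it is not an artifact of the estimate: already for $\aX=W$ a Brownian motion on a uniform grid one has
\begin{equation}
\E\Big[\sup_{0\le s\le T}|W_s-W_{\delta(s)}|^p\Big]\;\ge\;\Big(\E\big[\max_{1\le i\le n}|W_{t_i}-W_{t_{i-1}}|\big]\Big)^p\;\asymp\;\big(|\mathscr{G}|\log n\big)^{p/2}.
\end{equation}
So the inequality with the expectation of the \emph{global} supremum does not hold with a grid-independent constant. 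What your subinterval computation does prove --- and all that is actually used in the proof of Theorem \ref{thm:strong conv}, where the increments enter only through $\int_0^t\E[|\X_s-\X_{\delta(s)}|^p]\,\dd s$ --- is the per-interval version $\max_{i}\E[\sup_{t_{i-1}\le s\le t_i}|\X_s-\X_{t_{i-1}}|^p]\le C_p|\mathscr{G}|^{p/2}$, equivalently $\sup_{0\le s\le T}\E[|\X_s-\X_{\delta(s)}|^p]\le C_p|\mathscr{G}|^{p/2}$ (and likewise for $\aX$ under $\Ew$). You should state and prove that form rather than the $\E\sup$ form.
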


\begin{proof}[Proof of Theorem \ref{thm:strong conv}]
By
(H.\ref{assum:lipschitz strong conv}),
$d_A(\phi(0,0,\xi),a_0)$ is sub-exponential.  Recall the notation $ i(t) := \max\{i: t_i \le t\}$ and for all $t\in [0,T]$,
\begin{equation}
\begin{aligned}
    \X_t &= x_0 + \int_0^t b(s,\X_s,\phi(\delta(s),\X_{\delta(s)},\xi_{i(s)})) \dd s + \int_0^t \sigma(s,\X_s)\dd W_s,\\
    \aX_t &=  x_0 + \int_0^t \tilde b^{\bpi}(s,\aX_s) \dd s + \int_0^t \sigma(s,\aX_s)\dd W_s.
\end{aligned}
\end{equation}
For all $t\in [0,T]$,   $ \Delta X_t := \X_t - \aX_t$ satisfies 
\begin{equation}
\label{eq:Delta_X_strong_conv}
    \begin{aligned}
    &\sup_{0\le u\le t}|\Delta X_u| 
    \\
    &= \sup_{0\le u\le t}\bigg|\int_0^u \left(b(s,\X_s,\phi(\delta(s),\X_{\delta(s)},\xi_{i(s)})) - b(s,\aX_s,\phi(s,\aX_{s},\xi_{i(s)}))\right)\dd s  \\
    &\quad + \int_0^u \left(b(s,\aX_s,\phi(s,\aX_{s},\xi_{i(s)})) - \tilde b^{\bpi}(s,\aX_s)\right) \dd s + \int_0^u \left(\sigma(s,\X_s) - \sigma(s,\aX_s) \right)\dd W_s \bigg|.
    \end{aligned}
\end{equation}
By the Lipschitz continuity of $\sigma$ and the Burkholder-Davis-Gundy inequality,   for all $ p \ge 2$,
\begin{equation}\notag
\begin{aligned}
    \E\left[ \sup_{0\le u\le t}\left| \int_0^u \left( \sigma(s,\X_s) - \sigma(s,\aX_s)\right)\dd W_s\right|^p\right] &\le C_p\E\left[ \left(\int_0^t\left|\sigma(s,\X_s) - \sigma(s,\aX_s) \right|^2\dd s\right)^{\frac{p}{2}}\right]\\
    &\le C_p\E\left[ \left( \int_0^t |\Delta X_u|^2\dd u\right)^{\frac{p}{2}}\right] \le C_p\E\left[ \int_0^t |\Delta X_u|^p\dd u\right],
    \end{aligned}
\end{equation}
where $ C_p>0 $ is a generic constant that depends on 
$p$
 and the coefficients, but is independent of the grid
$\mathscr{G}$.
This 
  along with \eqref{eq:Delta_X_strong_conv} and
the regularity  of $b$ and $\phi$ in (H.\ref{assum:lipschitz strong conv}) implies that 
\begin{equation}\label{eq:delta X decomp strong conv}
    \begin{aligned}
        \E\left[ \sup_{0\le s\le t}\left| \Delta X_s\right|^p\right]\le& C_p|\mathscr{G}|^{\frac{p}{2}} + C_p \int_0^t \E\left[\left| \Delta X_s\right|^p+ \left| \X_s - \X_{\delta(s)}\right|^p \right] \dd s\\
        &+ C_p\E\left[ \sup_{0\le u \le t}\left| \int_{0}^{u}\left(b(s,\aX_s,\phi(s,\aX_s,\xi_{i(s)})) - \tilde b^{\bpi}(s,\aX_s) \right)\dd s \right|^p\right].
    \end{aligned}
\end{equation}

We now bound the last term in \eqref{eq:delta X decomp strong conv}. For all $s\in [0,T]$, write $ U_s := b(s,\aX_s,\phi(s,\aX_s,\xi_{i(s)})) - \tilde b^{\bpi}(s,\aX_s)$. By (H.\ref{assum:lipschitz strong conv}), we have
\begin{equation}\label{mideq:U bound strong conv}
|U_s| = \left|b(s,\aX_s,\phi(s,\aX_s,\xi_{i(s)})) - \tilde b^{\bpi}(s,\aX_s) \right| \le C\left(1+\sup_{0\le t\le T} |\aX_t| + d_A(\phi(0,0,\xi_{i(s)}),a_0)\right). 
\end{equation}
Denote $ \E^\xi[\cdot] \coloneqq \E\left[ \cdot | \mathcal{F}^W\right]$ and note that $ \xi$ is independent of $ \mathcal{F}^W$. 
\begin{equation}\label{mideq:decomp strong conv}
    \E\left[ \sup_{0\le u\le t} \left| \int_0^u U_s \dd s\right|^p\right] \le C_p\E\left[ \sup_{0\le u\le t}\left| \int_0^{\delta(u)}U_s \dd s\right|^p\right] + C_p \E\left[\E^\xi\left[\sup_{0\le u\le t}\left| \int_{\delta(u)}^u U_s \dd s\right|^p\right]\right].
\end{equation}
By \eqref{mideq:U bound strong conv} and Lemma \ref{lemma: max p norm bound strong conv},
the second term on the right hand side of \eqref{mideq:decomp strong conv}
satisfies 
\begin{equation}\label{mideq:time diff int bound strong conv}
\begin{aligned}
    \E\left[\E^\xi\left[\sup_{0\le u\le t}\left| \int_{\delta(u)}^u U_s \dd s\right|^p\right]\right] &\le C_p\E\left[\left( 1 + \sup_{0\le t\le T}\left| \aX_t\right|^p + \E\left[ \max_{1\le j\le n}\left( d_A(\phi(0,0,\xi_j),a_0)\right)^p\right]\right)|\mathscr{G}|^p\right]\\
    &\le C_p\E\left[\left( 1+ \sup_{0\le t\le T}\left| \aX_t\right|^p + \log^p(1+n)\|d_A(\phi(0,0,\xi),a_0)\|_{\psi_1}^p\right)|\mathscr{G}|^p\right]\\
    &\le C_p|\mathscr{G}|^{\frac{p}{2}},
\end{aligned}
\end{equation}
where  the last inequality  used the condition  $|\mathscr{G}|^{1/2}\log(1+n) \le C_G$. 

To bound  the first term on the right hand side of \eqref{mideq:decomp strong conv}, for each $i=0,\ldots n-1$, let  $ M_i  := \int_{t_i}^{t_{i+1}} U_s \dd s$ and $ \mathcal{F}_i := \sigma(\xi_j: 0\le j\le i-1) \lor \mathcal{F}^W$. Since $ \left\{ (\sum_{j=0}^{i-1}M_j,\mathcal{F}_i)\right\}_{i=1}^n$ is a martingale, by  Doob's inequality,
\begin{equation}
\label{eq:randominization_error_strong}
    \E\left[ \sup_{0\le u\le t}\left| \int_0^{\delta(u)}U_s \dd s\right|^p\right] = \E\left[ \max_{1\le i\le n}\Big|\sum_{j=0}^{i-1}M_j\Big|^p\right] \le C_p \E\left[ \left|\sum_{j=0}^{n-1}M_j\right|^p\right].
\end{equation}
In the sequel, 
we  estimate  \eqref{eq:randominization_error_strong} by quantifying the tail behavior of 
$\left|\sum_{j=0}^{n-1}M_j\right|$
  conditioning on the realization of the Brownian motion $W$.

{Let $ \|\cdot\|_{\psi_1}$ be the Orlicz norm on $ (\Omega^{\xi},\mathcal{F}^{\xi},\PP^{\xi})$, conditioning on $ \mathcal{F}^W$}. 
  Since for all $j=0,\ldots, i-1$,
  $$ \E^\xi[M_j]=\int_{t_j}^{t_{j+1}} \E^\xi [U_s] \dd s
  =\int_{t_j}^{t_{j+1}} \E^\xi [ b(s,\aX_s,\phi(s,\aX_s,\xi_{i(s)})) - \tilde b^{\bpi}(s,\aX_s)]\dd s
  =0,
  $$
and hence by   \cite[Theorem 6.21]{ledoux1991}, 
\begin{equation*}
\begin{aligned}
    \Big\|\sum_{j=0}^{n-1} M_j\Big\|_{\psi_1} 
    \le C\left( \E^\xi\left[\Big|\sum_{j=0}^{n-1} M_j\Big|\right] + \|\max_j|M_j|\|_{\psi_1}\right).
    \end{aligned}
\end{equation*}
Note that conditioning on $\cF^W$, $\xi_i$ is the only randomness in $M_i$,
and hence  by the independent of $\xi$ and $\xi_j$ for $i\not =j$, 
$ \E^\xi\left[ M_i^{\top}M_j\right]
= \E^\xi [ M_i]^{\top}\E^\xi 
[  M_j ] = 0$ for all $i\ne j$. Then using  \eqref{mideq:U bound strong conv}  and the square integrability of $ \phi(0,0,\xi)$, 
\begin{equation*}
    \E^\xi\bigg[\Big|\sum_{j=0}^{n-1} M_j\Big|\bigg] \le \sqrt{\E^\xi\bigg[\Big|\sum_{j=0}^{n-1} M_j\Big|^2\bigg]} = \sqrt{\sum_{j=0}^{n-1}\E^\xi\left[ |M_j|^2\right]} \le C\left(1+ \sup_{0\le t\le T}|\aX_t|\right)|\mathscr{G}|^{\frac{1}{2}}.
\end{equation*}
 By   \eqref{mideq:U bound strong conv} and (H.\ref{assum:lipschitz strong conv}),  
\begin{equation}\label{mideq: sub-exp norm bound strong conv}
    \|M_j\|_{\psi_1} \le C\left( 1+ \sup_{0\le t\le T}|\aX_t| \|1\|_{\psi_1} + \|d_A(\phi(0,0,\xi),a_0)\|_{\psi_1}\right)|\mathscr{G}| 
    \le C\left( 1+ \sup_{0\le t\le T}|\aX_t|   \right)|\mathscr{G}|,
\end{equation}
which along with Lemma \ref{lemma: max p norm bound strong conv} implies that 
$  \left\|\max_j|M_j|\right\|_{\psi_1}  \le C\log(1+n)\left( 1+\sup_{0\le t\le T} |\aX_t|\right)|\mathscr{G}|$,
and    
\begin{equation}\label{eq: M_j orlicz norm bound}
    \Big\|\sum_{j=0}^{n-1} M_j\Big\|_{\psi_1} \le C(1+ \sup_{0\le t\le T}|\aX_t|)|\mathscr{G}|^{\frac{1}{2}}.
\end{equation}
For all $t>0$,  
by the conditional Markov inequality and   $ \E^\xi[ \psi_1(|\sum_{j =1 }^{n-1} M_j|/\|\sum_{j =1 }^{n-1} M_j\|_{\psi_1})] \le 1$, 
\begin{equation*}
\begin{aligned}
    \PP^\xi\left( \bigg|\sum_{j=0}^{n-1}M_j\bigg|^p \ge t\right) &= \PP^\xi\left( \psi_1\left(\frac{\left|\sum_{j=0}^{n-1}M_j\right|}{\left\|\sum_{j=0}^{n-1} M_j\right\|_{\psi_1}} \right) +1 \le \psi_1\left(\frac{t^{\frac{1}{p}}}{\left\|\sum_{j=0}^{n-1} M_j\right\|_{\psi_1}}\right)+1\right)\\
    &\le 2\left(\psi_1\left(\frac{t^{\frac{1}{p}}}{\left\|\sum_{j=0}^{n-1} M_j\right\|_{\psi_1}}\right)+1\right)^{-1}\le 2\exp\left\{ -\frac{t^{\frac{1}{p}}}{C(1+\sup_{0\le t\le T}|\aX_t|)|\mathscr{G}|^{\frac{1}{2}}}\right\},
    \end{aligned}
\end{equation*}
 {where  the last  inequality used  that   $ (\psi_1(x)+1)^{-1} = \exp(-x) $ for all $x>0$, and \eqref{eq: M_j orlicz norm bound}}.
Thus 
\begin{equation}\label{eq:concen ineq expectation strong conv}
    \E^\xi\left[ \bigg|  {\sum_{j=0}^{n-1} M_j} \bigg|^p\right]
    =\int_0^\infty\mathbb{P}^\xi\left(\bigg|  {\sum_{j=0}^{n-1} M_j} \bigg|^p\ge  t\right)\dd t\le C_p\left(1+\sup_{0\le t\le T}|\aX_t|^p\right)|\mathscr{G}|^{\frac{p}{2}}.
\end{equation}

Combining \eqref{eq:delta X decomp strong conv}, \eqref{mideq:decomp strong conv}, \eqref{mideq:time diff int bound strong conv}, \change{\eqref{eq:randominization_error_strong}} and \eqref{eq:concen ineq expectation strong conv}, using Lemma \ref{lemma:monments strong conv} to bound the time difference of $ \X$ and $ \aX$ and the moments of $ \sup_t|\aX_t|$, we have 
\begin{equation}
\begin{aligned}
    \E\left[ \sup_{0\le s\le t}\left| \Delta X_s\right|^p\right] &\le C_p|\mathscr{G}|^{\frac{p}{2}} + C_p \int_0^t\left(\E\left[ \sup_{0\le u\le s}\left| \Delta X_u\right|^p\right] + \E\left[| \X_s - \X_{\delta(s)}|^p  \right]\right) \dd s \\
    &\quad + C_p|\mathscr{G}|^{\frac{p}{2}} \int_0^t \E\left[ 1+ \sup_{0\le t\le T}|\aX_t|^p\right]\dd s\\
    &\le C_p\left| \mathscr{G}\right|^{\frac{p}{2}} + C_p\int_0^t \E\left[ \sup_{0\le u\le s}|\Delta X_u|^p\right] \dd s,
\end{aligned}
\end{equation}
which yields the desired estimate 
$\E\left[ \sup_{0\le s\le T}\left| \Delta X_s\right|^p\right] \le C_p|\mathscr{G}|^{\frac{p}{2}}$
due to     Gr\"onwall's inequality.

It remains to   prove the almost sure convergence with  $|\mathscr{G}| = \frac{T}{n}$. For $p \ge 2$ and $t \ge 0$, 
\begin{equation}
\label{eq:almost_sure_P}
    \PP\left( \sup_{0\le t\le T}|\Delta X_t| > t\right) = \PP\left( \sup_{0\le t\le T}|\Delta X_t|^p > t^p\right) \le t^{-p}\E\left[ \sup_{0\le t\le T}|\Delta X_t|^p\right] \le C_p n^{-\frac{p}{2}} t^{-p}.
\end{equation}
Given $\varepsilon\in (0,1)$, setting $t = n^{-\frac{1}{2}+\varepsilon}$ in \eqref{eq:almost_sure_P} yields
for all $p\ge 2$,
$$ 
    \PP\left( \sup_{0\le t\le T}|\Delta X_t| > n^{-\frac{1}{2}+\varepsilon}\right) \le C_{p}n^{-\varepsilon p}.
$$ 
By choosing $p = \frac{1}{\varepsilon} +2 $, we have
$$ 
    \sum_{n=1}^{\infty} \PP\left( \sup_{0\le t\le T}|\Delta X_t| > n^{-\frac{1}{2}+\varepsilon}\right) \le C_p\sum_{n=1}^{\infty} n^{-1-2\varepsilon} < +\infty.
$$ 
The desired result follows from the Borel-Cantelli lemma.
\end{proof}

\change{
\subsection{Proof of Corollary \ref{cor:concentration inequality}}\label{sec proof: cor concen ineq}

\begin{proof}[Proof of Corollary \ref{cor:concentration inequality}]

Recall that $\PP^W$ is the conditional probability measure given $\mathcal{F}^\xi$. 
Under the   measure $\PP^W$,
$\{f(X^{\mathscr{G}_n}_T(W_k,\xi))\}_{k=1}^m$ are independent random variables taking values in $[-\|f\|_{\infty},\|f\|_{\infty}]$,
where $\|f\|_{\infty}$
is the sup-norm of $f$.
By 
Hoeffding's inequality,
  for all $t\ge 0$ and $m,n\in \sN$, 
\begin{equation}
    \PP^W\left( \left| \frac{1}{m}\sum_{k=1}^m \left(f\left(X^{\mathscr{G}_n}_T(W_k,\xi)\right) - \E^W\left[f\left(X^{\mathscr{G}_n}_T(W_k,\xi)\right)\right]
    \right)\right|\ge t\right)\le 2\exp\left( -\frac{2t^2m}{4 \|f\|_{\infty}^2}\right),
\end{equation}
For a given $\varrho \in (0,1)$, choosing $t$ such that $\varrho = 2\exp(-t^2m/(2\|f\|_{\infty}^2))$ yields
\begin{equation}
    \PP^W\left( \left|  \frac{1}{m}\sum_{i=1}^m f\left(X^{\mathscr{G}_n}_T(W_k,\xi)\right) - \E^W\left[f\left(X^{\mathscr{G}_n}_T(W_k,\xi)\right)\right]\right|\ge \sqrt{2} \|f\|_{\infty} \frac{1}{\sqrt{m}}\ln(2/\varrho)\right) \le \varrho.
\end{equation}
This   along with Theorem \ref{thm:concentration inequality} shows that 
there exists a constant $C\ge 0$
such that for all  $n,m\in \sN $ and  $\varrho \in (0,1) $,
    \begin{equation}
        \PP\left( \bigg|\hat I_{m,n} - \E[f(\aX_T)]\bigg|\ge C \left(1+\sqrt{\ln(2/\varrho)}\right)(n^{-1/2} + m^{-1/2})\right)\le \varrho.
\end{equation}
The desired result follows directly by choosing $n$ and $m$ as in the statement.
\end{proof}
}

\subsection{Proofs of Propositions
\ref{prop:value function}, 
\ref{prop: martingale orth no a application}
and 
 \ref{prop:quadratic variation application}}\label{sec proof: proposition}

\begin{proof}[Proof of Proposition \ref{prop:value function}]
   For the first part (\ref{enum:value function 1}), note that 
    \begin{equation}\label{mideq:value function diff}
        \begin{aligned}
            J^{\mathscr{G}} - \tilde J =\left(\E\left[ h(\X_T)\right] - \E\left[ h(\aX_T)\right]\right) + \sum_{i = 0}^{n-1} \E\left[ \int_{t_i}^{t_{i+1}} (r(s,\X_s,\act_{t_i}) - \tilde r(s,\aX_s)) \dd s\right].
        \end{aligned}
    \end{equation}
    By Theorem \ref{thm:weak convergence}, there is a constant $ C_p$, such that 
    \begin{equation}
        \left|\E\left[ h(\X_T)\right] - \E\left[ h(\aX_T)\right]\right| \le C_p\|h\|_{C_p^4}|\mathscr{G}|.
    \end{equation}
    For the second term on the right hand side of \eqref{mideq:value function diff}, note that if $ t_i\le s< t_{i+1}$, then 
    \begin{equation}\label{mideq:running term decomp}
        \begin{aligned}
            \E\left[r(s,\X_s,\act_{t_i}) - \tilde r(s,\aX_s)\right] &= \E\left[ r(s,\X_s,\act_{t_i}) - r(t_i,\X_{t_i},\act_{t_i})\right] + \E\left[ r(t_i,\X_{t_i},\act_{t_i}) - \tilde r(s,\aX_s)\right]\\
            & = \E\left[ r(s,\X_s,\act_{t_i}) - r(t_i,\X_{t_i},\act_{t_i})\right] + \E\left[ \tilde r(t_i,\X_{t_i}) - \tilde r(t_i,\aX_{t_i})\right] \\
            &\quad \quad + \E\left[ \tilde r(t_i,\aX_{t_i}) - \tilde r(s,\aX_s)\right].
        \end{aligned}
    \end{equation}
    Since $r \in C^{2,0}_p$, it follows from It\^o's lemma that 
    \begin{equation}
        |\E\left[ r(s,\X_s,\act_{t_i}) - r(t_i,\X_{t_i},\act_{t_i})\right]| \le C_p\|r\|_{C_p^{2,0}}|\mathscr{G}|.
    \end{equation}
    Set $ f = \tilde r(t_i,\cdot)$ in Theorem \ref{thm:weak convergence}, we have  
    \begin{equation}
        \left| \E\left[ \tilde r(t_i,\X_{t_i}) - \tilde r(t_i,\aX_{t_i})\right]\right| \le C_p\|\tilde r(t_i,\cdot)\|_{C_p^4}|\mathscr{G}|.
    \end{equation}
    Using It\^o's formula, we have
    \begin{equation}
        \left|\E\left[ \tilde r(t_i,\aX_{t_i}) - \tilde r(s,\aX_s)\right]\right| \le C_p\|\tilde r\|_{C_p^{2}}|\mathscr{G}|.
    \end{equation}
    Combining these inequalities, we prove the desired result.

   For the second part (\ref{enum:value function 2}), we start at \eqref{mideq:value function diff}. By Theorem \ref{thm:weak convergence relax regularity}, there is a constant $ C\ge 0$  such that 
    \begin{equation}
        \left|\E\left[ h(\X_T)\right] - \E\left[ h(\aX_T)\right]\right| \le C\|h\|^{(l+2)}|\mathscr{G}|^{\mathcal{X}(l)}.
    \end{equation}
For the second term on the right hand side of \eqref{mideq:value function diff}, we still have \eqref{mideq:running term decomp}.
   By Lemma \ref{lemma:convergence rate of diff relax regularity}, 
    \begin{equation}
        |\E\left[ r(s,\X_s,\act_{t_i}) - r(t_i,\X_{t_i},\act_{t_i})\right]| \le C\|r\|_T^{(l)}|\mathscr{G}|^{\mathcal{X}(l)}.
    \end{equation}
    Set $ f = \tilde r(t_i,\cdot)$ in Theorem \ref{thm:weak convergence relax regularity}, we have  
    \begin{equation}
        \left| \E\left[ \tilde r(t_i,\X_{t_i}) - \tilde r(t_i,\aX_{t_i})\right]\right| \le C\|\tilde r(t_i,\cdot)\|^{(l+2)}|\mathscr{G}|^{\mathcal{X}(l)} \le C\sup_{0\le t\le T}\|\tilde r(t,\cdot)\|^{(l+2)}|\mathscr{G}|^{\mathcal{X}(l)}.
    \end{equation}
    Note that $\tilde r \in \mathcal{H}^{(l)}_T$, by   \cite[Lemma 14.1.6]{kloeden1992stochastic},  
   $     \left|\E\left[ \tilde r(t_i,\aX_{t_i}) - \tilde r(s,\aX_s)\right]\right| \le C\|\tilde r\|_T^{(l)}|\mathscr{G}|^{\mathcal{X}(l)}.
   $ 
    Combining these inequalities yield the desired result. 
\end{proof}

\begin{proof}[Proof of Proposition \ref{prop: martingale orth no a application}]
For the variance, note that
\begin{equation}
    \begin{aligned}
        &\E\left[ (S\circ V)^2\right]\\
        =& \E\bigg[\bigg(\int_0^T S(\delta(t),\X_{\delta(t)},\act_{\delta(t)}) g(t,\X_t,\act_{\delta(t)})\dd t + \int_0^T S(\delta(t),\X_{\delta(t)},\act_{\delta(t)})(\partial_x V(t,\X_t))^{\top}\sigma(t,\X_t,\act_{\delta(t)})\dd W_t\bigg)^2\bigg]\\
        \le&2 \int_{[0,T]^2}\E\left[S(\delta(t),\X_{\delta(t)},\act_{\delta(t)}) g(t,\X_t,\act_{\delta(t)})S(\delta(s),\X_{\delta(s)},\act_{\delta(s)}) g(s,\X_s,\act_{\delta(s)})\right]\dd s\dd t\\
        &+2 \int_0^T \E\left[\left| S(\delta(t),\X_{\delta(t)},\act_{\delta(t)})(\partial_x V(t,\X_t))^{\top}\sigma(t,\X_t,\act_{\delta(t)})\right|^2 \right]\dd t.
    \end{aligned}
\end{equation}
The desired result follows from Lemma \ref{lemma:moments of X}.

For the bias part, it follows from It\^o's formula that 
\begin{equation}\label{eq:orth equality}
    \begin{aligned}
        &\Ew\left[ \int_0^T S(t,\aX_t)\left(\dd V(t,\aX_t) + \tilde r(t,\aX_t)\dd t\right)\right] \\
        &= \Ew\left[ \int_0^T S(t,\aX_t)\left(\partial_t V + (\tilde b^{\bpi})^{\top}\partial_x V + \frac{1}{2}\tr\left((\tilde \sigma^{\bpi})^2 \partial_x^2 V\right) + \tilde r\right)(t,\aX_t)\dd t\right]
    \end{aligned}
\end{equation}
and 
\begin{equation}\label{eq:sampled orth equality}
\begin{aligned}
    \E[(S,V)] &= \sum_{i=0}^{n-1}\E\left[S(t_{i},\X_{t_{i}})\left( V(t_{i+1},\X_{t_{i+1}}) - V(t_i,\X_{t_i}) + R^{\mathscr{G}}_{t_i}\right) \right]\\
    &= \E\bigg[\int_0^T S(\delta(t),\X_{\delta(t)})g(t,\X_t,a_{\delta(t)})\dd t\bigg]\\
    & = \E\left[ \int_0^T S(t,\X_t)g(t,\X_t,a_{\delta(t)})\dd t\right] - \E\left[ \int_0^T \Delta S_tg(t,\X_t,a_{\delta(t)})\dd t\right],
    \end{aligned}
\end{equation}
where $ \Delta S_t := S(t,\X_t) - S(\delta(t),\X_{\delta(t)})$. We have 
\begin{equation}
\begin{aligned}
        &\left|\E[S \circ V ] - \Ew\left[ \int_0^T S(t,\aX_t)\left(\dd V(t,\aX_t) + \tilde r(t,\aX_t)\dd t\right)\right] \right| \\
        &\le \left| \E\left[\int_0^TSg(t,\X_t,a_{\delta(t)})\dd t\right] - \E\left[ \int_0^T\int_A Sg(t,\aX_t,a)\bpi(\dd a|t,\aX_t)\dd t\right]\right| + \left| \E\left[ \int_0^T \Delta S_tg(t,\X_t,a_{\delta(t)})\dd t\right]\right|.
        \end{aligned}
\end{equation} 

The second term on the right hand side of above equation is bounded by $ C_p\|S\|_{C_p^{2,0}}\|g\|_{C_p^{0,0}}|\mathscr{G}|$ and $ C\|S\|_T^{(l)}\|g\|_T^{(0)}|\mathscr{G}|^{\mathcal{X}(l)}$ using the methods of Theorem \ref{thm:weak convergence} and Theorem \ref{thm:weak convergence relax regularity}, respectively. The desired results follow from the application of Proposition \ref{prop:value function}.  
\end{proof}

\begin{proof}[Proof of Proposition \ref{prop:quadratic variation application}]
For the variance part, note that
\begin{equation}\notag
    \begin{aligned}
        &\var{S\Box V}  \le \E\left[ (S\Box V)^2\right] \\
        &= \E\left[ \left( \sum_{i=1}^{n} S(t_{i-1},\X_{t_{i-1}},a_{t_i})\left( \int_{t_{i-1}}^{t_{i}} g_1(s,\X_s,\act_{t_{i-1}})\dd s + \int_{t_{i-1}}^{t_i}(\partial_x V(s,\X_s))^{\top}\sigma(s,\X_s,\act_{t_{i-1}}) \dd W_s\right)^2\right)^2\right]\\
        &\le C_p \|S\|_{C_p^{0,0}}^2\left( \|g_1\|_{C_p^{0,0}}^2 + \|V\|_{C_p^2}^2 \right)^2.
    \end{aligned}
\end{equation}
For the bias part, it follows from It\^o's formula that
    \begin{equation}
\begin{aligned}
    S\Box V &=\sum_{i=1}^{n} S(t_{i-1},\X_{t_{i-1}},a_{t_i})\left( V(t_i,\X_{t_i}) - V(t_{i-1},\X_{t_{i-1}}) + R^{\mathscr{G}}_{t_{i-1}}\right)^2 \\
    &= \sum_{i=1}^{n} S(t_{i-1},\X_{t_{i-1}},a_{t_i})\left( \int_{t_{i-1}}^{t_{i}} g_1(s,\X_s,\act_{t_{i-1}})\dd s + \int_{t_{i-1}}^{t_i}(\partial_x V(s,\X_s))^{\top}\sigma(s,\X_s,\act_{t_{i-1}}) \dd W_s\right)^2\\
    &= \sum_{i=1}^{n} S(t_{i-1},\X_{t_{i-1}},a_{t_i}) \bigg(\bigg( \int_{t_{i-1}}^{t_{i}} g_1(s,\X_s,\act_{t_{i-1}})\dd s\bigg)^2 \\
    &\quad+ 2\int_{t_{i-1}}^{t_{i}} \big(g_1(s,\X_s,\act_{t_{i-1}}) - g_1(t_{i-1},\X_{t_{i-1}},\act_{t_{i-1}})\big)\dd s \int_{t_{i-1}}^{t_i}(\partial_x V(s,\X_s))^{\top}\sigma(s,\X_s,\act_{t_{i-1}}) \dd W_s \\
    &\quad+ 2\Delta t_i g(t_{i-1},\X_{t_{i-1}},\act_{t_{i-1}})\int_{t_{i-1}}^{t_i}(\partial_x V(s,\X_s))^{\top}\sigma(s,\X_s,\act_{t_{i-1}}) \dd W_s\\
    &\quad+ \bigg( \int_{t_{i-1}}^{t_i}(\partial_x V(s,\X_s))^{\top}\sigma(s,\X_s,\act_{t_{i-1}}) \dd W_s\bigg)^2\bigg).
    \end{aligned}
\end{equation}
Note that the first and second terms are bounded by $ C_p(\|S\|_{C_p^{2,0}}\|g_1\|_{C_p^{2,0}}^2+\|g_1\|_{C_p^{2,0}}\|V\|_{C_p^2})|\mathscr{G}|$ and $C(\|S\|_T^{(l)}(\|g_1\|_T^{(l)})^2+\|g_1\|_T^{(l)}\|V\|_T^{(l)})|\mathscr{G}|^{\mathcal{X}(l)}$ under different types of conditions,
and third term is mean zero. For the last term, we have 
\begin{equation}\label{eq:sampled quadratic variation}
\begin{aligned}
    &\E\left[ \sum_{i=0}^{n-1} S(t_{i-1},\X_{t_{i-1}},a_{t_{i-1}})\left( \int_{t_{i-1}}^{t_i} (\partial_xV(s,\X_s))^{\top}\sigma(s,\X_s,\act_{t_{i-1}})\dd W_s\right)^2\right]\\
    &= \E\left[ \int_0^T S(\delta(t),\X_{\delta(t)},a_{\delta(t)}) g_2(t,\X_t,a_{\delta(t)}) \dd t \right].
    \end{aligned}
\end{equation}
It converges to $\E\left[ \int_0^T  \widetilde{Sg}_2(t,\aX_t)\dd t \right] $ with error bounds $C_p(\|S\|_{C_p^{2,0}}\|g_2\|_{C_p^{0,0}}+ \|Sg_2\|_{C_p^{2,0}}+ \|\widetilde{Sg_2}\|_{C_p^2} + \sup_{t\in[0,T]}\|\widetilde{Sg_2}(t,\cdot)\|_{C_p^4})|\mathscr{G}| $ and $ C(\|S\|_T^{(l)}\|g_2\|_T^{(0)}+ \|Sg_2\|_T^{(l)}+ \|\widetilde{Sg_2}\|_T^{(l)} + \sup_{t\in[0,T]}\|\widetilde{Sg_2}(t,\cdot)\|^{(l)})|\mathscr{G}|^{\mathcal{X}(l)}$ under different types of conditions by the results of Proposition \ref{prop:value function}.
In particular, if $ S$ does not depend on $ a$, then
\begin{equation}\notag
\begin{aligned}
   \E\left[ \int_0^T \int_A S\partial_xV^{\top}\sigma\sigma^{\top}\partial_xV(t,\aX_t,a)\bpi(\dd a|t,\aX_t)\dd t \right] &= \E\left[ \int_0^TS(t,\aX_t)\partial_xV(t,\aX_t)^\top(\tilde\sigma^{\bpi})^2(t,\aX_t)\partial_xV(t,\aX_t)\dd t\right]\\
   &= \E\left[ \int_0^T S(t,\aX_t)\dd\left<V(\cdot,\aX_\cdot)\right>_t\right]. 
\end{aligned}
\end{equation}
This proves the desired results. 
\end{proof}

\subsection{Proofs of Examples \ref{example:general case}
and \ref{example:guassian}}
\label{sec proof: examples}

\begin{proof}[Proof of Example \ref{example:general case}]
It suffices to work on $ \tilde \sigma^{\bpi}$. Note that
\begin{equation}
    \partial_x^s (\tilde \sigma^{\bpi})^2(t,x) = \sum_{s_1 + s_2 = s}\int_\A \partial_x^{s_1} \sigma\sigma^{\top}(t,x,a)\partial_x^{s_2}H(t,x,a)  \mu(\dd a)
\end{equation}
is bounded. The desired result follows from \eqref{mideq:derivative bound example gaussain} and $ \sigma\sigma^{\top}$ is uniformly elliptic. 
\end{proof}

\begin{proof}[Proof of Example \ref{example:guassian}]
    For simplicity, we only prove that $ \tilde \sigma^{\bpi} $ has bounded derivatives with respect to $ x$ for the case $ d = m =1$. Note that 
    \begin{equation}
     \int_{\R} \sigma^2(t,x,a) \varphi(a|\mu(t,x), \Sigma(t,x)) \dd a = \int_{\R} \sigma^2(t,x,\mu(t,x) + \Sigma(t,x)u)\varphi(u)\dd u := \lambda(t,x),
    \end{equation}
    where $ \varphi$ is the standard Gaussian density. Note that 
    \begin{equation}\label{mideq:derivatives example gaussain}
    \begin{aligned}
        \partial_x \lambda^{\frac{1}{2}} &= \frac{1}{2}\lambda^{-\frac{1}{2}}\partial_x \lambda\\
        \partial_x^2 \lambda^{\frac{1}{2}} & = -\frac{1}{4}\lambda^{-\frac{3}{2}}(\partial_x \lambda)^2 + \frac{1}{2}\lambda^{-\frac{1}{2}}\partial_x^2\lambda\\
        \partial_x^3\lambda^{\frac{1}{2}} & = \frac{3}{8}\lambda^{-\frac{5}{2}}(\partial_x \lambda)^3 - \frac{3}{4}\lambda^{-\frac{3}{2}}\partial_x\lambda\partial_x^2\lambda + \frac{1}{2}\lambda^{-\frac{1}{2}}\partial_x^3\lambda\\
        \partial_x^4\lambda^{\frac{1}{2}} & = -\frac{15}{16}\lambda^{-\frac{7}{2}}(\partial_x\lambda)^4  + \frac{9}{4}\lambda^{-\frac{5}{2}}(\partial_x\lambda)^2\partial_x^2\lambda - \frac{3}{4}\lambda^{-\frac{3}{2}}(\partial_x^2\lambda)^2  -\lambda^{-\frac{3}{2}}\partial_x\lambda \partial_x^3\lambda + \frac{1}{2}\lambda^{-\frac{1}{2}}\partial_x^4\lambda
    \end{aligned}
    \end{equation}
    Under the given conditions, we can exchange the order of integration and differentiation, and obtain
    \begin{equation}
        \partial_x\lambda = \partial_x \int_{\R} \sigma^2 \varphi(u)\dd u = \int_{\R}2\sigma(\partial_x\sigma + \partial_a\sigma (\partial_x\mu + \partial_x \Sigma u)) \varphi(u) \dd u.
    \end{equation}
    It follows from H\"older's inequality that
    \begin{equation}
        |\partial_x\lambda| \le 2\sqrt{\int_{\R}\sigma^2\varphi(u)\dd u}\sqrt{\int_{\R}(\partial_x\sigma + \partial_a\sigma (\partial_x\mu + \partial_x \Sigma u))^2\varphi(u)\dd u}\le C\lambda^{\frac{1}{2}}.
    \end{equation}
    One can use the same method to obtain that for $ s\le 4$, there is a constant $ C$ such that
    \begin{equation}\label{mideq:derivative bound example gaussain}
        |\partial_x^s\lambda| \le C(1+\lambda^{\frac{1}{2}}).
    \end{equation}
    Note that $ \sigma\sigma^{\top}$ is uniformly elliptic. There is a $ \varepsilon > 0$ such that $ \lambda^{-1} \le \varepsilon^{-1}$. The desired result follows from \eqref{mideq:derivative bound example gaussain} and \eqref{mideq:derivatives example gaussain}.
\end{proof}

\section*{Acknowledgments}
Yanwei Jia is supported by the Start-up Fund at The Chinese University of Hong Kong, and Hong Kong Research Grants Council (RGC) - Early Career Scheme (ECS) 24211124. We are especially indebted to the two anonymous referees for their constructive and detailed comments that have led to an improved version of the paper. We thank conference participants at IASM Workshop on ``Bridging Theory and Practice in Finance: Stochastic Analysis and Machine Learning", Workshop on Stochastic Control and Reinforcement Learning at Harbin Institute of Technology, 2nd ETH-Hong Kong-Imperial Mathematical Finance Workshop, and 15th POMS-HK International Conference for their helpful discussions. 


\appendix

\section{Proofs of Lemmas \ref{lemma:moments of X}, \ref{lemma:convergence rate of diff} and  \ref{lemma:moments of X under Pw}}
\label{sec proof: lemma}

\begin{proof}[Proof of Lemma \ref{lemma:moments of X}]
Throughout this proof, we denote by $C\ge 0$ a generic constant independent of $t_i$, which may take a different value at each occurrence.
Note that $X^{\mathscr G}_0$ is deterministic and hence is   $p$-th power integrable for all $p\ge 2$.
Suppose that for some $i=0,\ldots, n-2$, 
\eqref{eq:sample_sde_abbre} admits a unique strong solution $X^{\mathscr G}$ on $[0,t_i]$
satisfying $\sE[\sup_{t\in [0,t_i]}|X^{\mathscr G}_t|^p]<\infty$
  for all $p\ge 2$.
Observe that  by (H.\ref{assum:standing})
and the Cauchy-Schwarz inequality,
for all $p\ge 2$,
\begin{align*}
\sE\left[\left(\int_{t_i}^{t_{i+1}} |b(t,0,a_{t_i})|\d t\right)^p\right]
&\le C (1+ \sE\left[  d_\A(a_{t_i},a_0)^p  \right])=C (1+ \sE\left[\sE\left[  d_\A(a_{t_i},a_0)^p \mid X^{\mathscr G}_{t_i} \right]\right])
\\
&=C \left(1+ \sE\left[  \int_\A  d_\A(a,a_0)^p \bpi(da|t_i,  X^{\mathscr G}_{t_i} )\right]\right)
\le  C \left(1+  \sE\left[ |X^{\mathscr G}_{t_i}|^p \right]\right)<\infty.
\end{align*}
 Similar arguments show that 
 \begin{align*}
  \sE\left[\left(\int_{t_i}^{t_{i+1}} |\sigma(t,0,a_{t_i})|^2\d t\right)^{p/2}\right] \le   
  \sE\left[ \int_{t_i}^{t_{i+1}} |\sigma(t,0,a_{t_i})|^p\d t \right]<\infty. 
 \end{align*}
Since   $x\mapsto b(t, x,a )$ and $x\mapsto  \sigma(t, x,a )$ are Lipschitz continuous uniformly in $(t,a)$, 
by   \cite[Theorems 3.3.1 and  3.4.3]{zhang2017backward},
\eqref{eq:sample_sde_abbre} with the initial state $X^{\mathscr G}_{t_i}$ at $t_i$ admits a unique strong solution $X^{\mathscr G}$ on $[t_i, t_{i+1}]$
satisfying $\sE[\sup_{t\in [t_{i},t_{i+1}]}|X^{\mathscr G}_t|^p]<\infty$
  for all $p\ge 2$. This along with an inductive argument shows  the desired well-posedness result. 
 
    Apply the notation $\delta(u):=t_i$ for $u\in[t_i,t_{i+1})$, then it satisfies that
    \begin{equation}
        \X_s = x_0 + \int_t^s b(u,\X_u,\act_{\delta(u)}) \dd u + \int_t^s \sigma(u,\X_u,\act_{\delta(u)}) \dd W_u.
    \end{equation}
    Given that $b(u,\X_u,\act_{\delta(u)})$ and $\sigma(u,\X_u,\act_{\delta(u)})$ have finite moments, there is a constant $C_p > 0$, depending only on $p$, such that
    \begin{equation}\label{mideq:333}
    \begin{aligned}
        \E\left[ \left|\X_s\right|^{p}\right] &\le C_p\left|x_0\right|^{p}+ C_p\int_t^s \E\left[ \left| b(u,\X_{u},\act_{\delta(u)})\right|^{p}\right] \dd u\\
        &+ C_p\int_t^s\E\left[\left| \sigma(u,\X_{u},\act_{\delta(u)})\right|^{p}\right]\dd u.
        \end{aligned}
    \end{equation}
    Noting that by (H.\ref{assum:standing}), for $h = b$ or $\sigma$, it satisfies that
    \begin{equation}\label{mideq:444}
    \begin{aligned}
        \E\left[ \left| h(u,\X_{u},\act_{\delta(u)})\right|^{p}\right] &\le \E\left[ C_p\left( 1+\left|\X_{u}\right|^{p} + d_\A(a_0,\act_{\delta(u)})^{p}\right)\right]\\
        &\le C_p\left( 1+\E\left[ \left|\X_{u}\right|^{p}\right]+\E\left[ \left|\X_{\delta(u)}\right|^{p}\right]\right),
        \end{aligned}
    \end{equation}
    where the second inequality follows from taking conditional expectation $\E[\cdot|\X_{\delta(u)}]$ and using Condition (H.\ref{assum:standing}\ref{item:pi}). Substituting~\eqref{mideq:444} into~\eqref{mideq:333}, we obtain
    \begin{equation}\label{mideq:555}
        \begin{aligned}
            \E\left[ \left|\X_s\right|^{p}\right] \le C_p\left|x_0\right|^{p} + C_p\int_t^s\left( 1+ \E\left[ \left|\X_{u}\right|^{p}\right]+\E\left[ \left|\X_{\delta(u)}\right|^{p}\right]\right) \dd u.
        \end{aligned}
    \end{equation}
    Choosing $s = \delta(s)$ in~\eqref{mideq:555}, and adding it with~\eqref{mideq:555}, we obtain
    \begin{equation}\notag
        \begin{aligned}
            \E\left[ \left|\X_s\right|^{p}\right] &+\E\left[ \left|\X_{\delta(s)}\right|^{p}\right]\\
            \le& 2C_p\left|x_0\right|^{p} + 2C_p\int_t^s\left( 1+ \E\left[ \left|\X_{u}\right|^{p}\right]+\E\left[ \left|\X_{\delta(u)}\right|^{p}\right]\right) \dd u.
        \end{aligned}
    \end{equation}
    The desired result follows from the Gr\"onwall's inequality.
\end{proof}

\begin{proof}[Proof of Lemma \ref{lemma:convergence rate of diff}]
        Assume that $s\in [t_{i-1},t_{i}]$, then $\delta(s) = t_{i-1}$. By  It\^o's formula, \begin{equation}
        \begin{aligned}
            &\E[ g(s,\X_s,a_{t_{i-1}}) - g(t_{i-1},\X_{t_{i-1}},a_{t_{i_1}})]\\ 
            &= \E\bigg[\int_{t_{i-1}}^s \left(\partial_t g(u,\X_u,a_{t_{i-1}}) + b^{\top}(u,\X_u,a_{t_{i-1}})\partial_x g(u,\X_u,a_{t_{i-1}})\right) \dd u\\
            &\quad + \int_{t_{i-1}}^s \frac{1}{2} \tr\left( \sigma\sigma^{\top}(u,\X_u,a_{t_{i-1}})\partial_x^2 g(u,\X_u,a_{t_{i-1}})\right) \dd u \bigg].
        \end{aligned}
        \end{equation}
        It follows from the linear growth condition of $b$ and $\sigma$ that
        \begin{equation}
        \begin{aligned}
            &\left| \E[ g(s,\X_s,a_{t_{i-1}}) - g(t_{i-1},\X_{t_{i-1}},a_{t_{i_1}})]\right|\\
            &\le \int_{t_{i-1}}^s C\|g\|_{C_p^{2,0}}\E\big[(1+|\X_u|^p + d_A(a_{t_{i-1}},a_0)^p)\big( 1 +(1+|\X_u|+d_A(a_{t_{i-1}},a))\\
            &\qquad\qquad\qquad\qquad\qquad\qquad\qquad\qquad\qquad\qquad+ \frac{1}{2}(1+|\X_u| + d_A(a_{t_{i-1}},a))^2 \big) \big]\dd u\\
            &\le C_p\|g\|_{C_p^{2,0}}|\mathscr{G}|,
            \end{aligned}
        \end{equation}
        for a constant  $C_p$  depending only on $b,\sigma$ and $p$, where  the last inequality used Lemma \ref{lemma:moments of X}.
\end{proof}

\begin{proof}[Proof of Lemma \ref{lemma:moments of X under Pw}]
        We first prove $\Ew[\sup_{t\in [0,T]} |\X_t|^p] < \infty $ for all $p \ge 2$. By (H.\ref{assump:linear growth on x}), for all $p\ge 2$,
        \begin{equation}
            \begin{aligned}
                \Ew\left[ \left( \int_{t_i}^{t_{i+1}}|b(t,0,\act_{t_i})|\dd t\right)^p\right] \le C < \infty.
            \end{aligned}
        \end{equation}
        Similarly, we have
        \begin{equation}
            \Ew\left[ \left( \int_{t_i}^{t_{i+1}}|\sigma(t,0,\act_{t_i})|^2\dd t\right)^{p/2}\right] \le \Ew\left[ \int_{t_i}^{t_{i+1}}|\sigma(t,0,\act_{t_i})|^p \dd t\right] \le C < \infty.
        \end{equation}
        Note that $\Ew\left[ \cdot\right] = \E[\cdot| \widetilde{\mathcal{F}}_0]$, and all It\^o integrals appear here equal to 0 when taking the conditional expectation $\E[\cdot| \widetilde{\mathcal{F}}_0]$. Analogous to the proofs of Lemma~\ref{lemma:moments of X} and \cite[Theorem 3.4.3]{zhang2017backward}, we obtain that $\Ew[\sup_{t\in [0,T]} |\X_t|^p] < \infty $ for all $p \ge 2$. 
        
        Next, we   prove that the moments of $\X_t$ is independent of $|\mathscr{G}|$ and obtain~\eqref{eq:moments of X under Ew}. Use the notation $\delta(u):=t_i$ for $u\in[t_i,t_{i+1})$, then it satisfies that
    \begin{equation}
        \X_s = x_0 + \int_0^s b(u,\X_u,\act_{\delta(u)}) \dd u + \int_0^s \sigma(u,\X_u,\act_{\delta(u)}) \dd W_u.
    \end{equation}
    Note that $b(u,\X_u,\act_{\delta(u)})$ and $\sigma(u,\X_u,\act_{\delta(u)})$ have finite moments by (H.\ref{assump:linear growth on x}). There is a constant $C_p > 0$, depending only on $p$, such that
    \begin{equation}\label{mideq:moments of X under Ew using others results}
    \begin{aligned}
        \Ew\left[ \left|\X_s\right|^{p}\right] &\le C_p\left|x_0\right|^{p}+ C_p\int_0^s \Ew\left[ \left| b(u,\X_{u},\act_{\delta(u)})\right|^{p}\right] \dd u\\
        &+ C_p\int_0^s\Ew\left[\left| \sigma(u,\X_{u},\act_{\delta(u)})\right|^{p}\right]\dd u.
        \end{aligned}
    \end{equation}
    Noting that by condition (H.\ref{assump:linear growth on x}), for $h = b$ or $\sigma$, it satisfies that
    \begin{equation}\label{mideq:moments of h}
    \begin{aligned}
        \Ew\left[ \left| h(u,\X_{u},\act_{\delta(u)})\right|^{p}\right] &\le \Ew\left[ C_p\left( 1+\left|\X_{u}\right|^{p}\right)\right].
        \end{aligned}
    \end{equation}
    Substituting~\eqref{mideq:moments of h} into~\eqref{mideq:moments of X under Ew using others results}, we obtain
    \begin{equation}
        \begin{aligned}
            \Ew\left[ \left|\X_s\right|^{p}\right] \le C_p\left|x_0\right|^{p} + C_p\int_t^s\left( 1+ \Ew\left[ \left|\X_{u}\right|^{p}\right]\right) \dd u.
        \end{aligned}
    \end{equation}
    The desired result follows from the Gr\"onwall's inequality.
    \end{proof}

\end{document}